\title[
]{
Is Reinforcement Learning More Difficult Than Bandits?\\
A Near-optimal Algorithm Escaping the Curse of Horizon
	}
\newtheorem{assumption}{Assumption}
\def\qkhsa{$Q_h^k(s,a)$}
\def\vkh{$V_h^k$}
\def\vkh1{$V_{h+1}^k$}
\def\vkh1'{$V_{h+1}^k(s')$}
\def\ovh{$V_{h}^*(\cdot)$}
\def\ovh1{$V_{h+1}^*$}
\def\ovh1'{$V_{h+1}^*(s')$}
\def\tvkh{$ \tilde{V}_h^k $}
\def\tvkh1{ $\tilde{V}_{h+1}^k$ }
\def\tvkh1'{$\tilde{V}_{h+1}^k(s')$}
\def\pkhs'{$P_{s_h^k,a_h^k,s'} $}
\def\hpksa{$\hat{P}^k_{s,a}$}
\def\hpksa'{$\hat{P}^k_{s,a,s'}$}
\def\hpkh{$\hat{P}^k_{s_h^k,a_h^k}$}
\def\hpkh'{$ \hat{P}^k_{s_h^k,a_h^k,s'} $}
\newcommand{\abs}[1]{\left|#1\right|}
\newcommand{\expect}{\mathbb{E}}
\newcommand{\indict}{\mathbb{I}}
\newcommand{\states}{\mathcal{S}}
\newcommand{\trans}{P}
\newcommand{\actions}{\mathcal{A}}
\newcommand{\mdp}{M}
\newcommand{\algo}{ {\sc MVP}  }
\newcommand{\poly}{\mathrm{poly}}
\newcommand{\cb}{\mathrm{CB}}
\begin{document}

\maketitle

\begin{abstract}\footnote{Accepted for presentation at the Conference on Learning Theory (COLT) 2021}Episodic reinforcement learning and contextual bandits are two widely studied sequential decision-making problems.
Episodic reinforcement learning generalizes contextual bandits and is often perceived to be more difficult due to long planning horizon and unknown state-dependent transitions.
The current paper shows that the long planning horizon and the unknown state-dependent transitions (at most) pose little additional difficulty on sample complexity.

We consider the episodic reinforcement learning with $S$ states, $A$ actions,  planning horizon $H$, total reward bounded by $1$, and the agent plays for $K$ episodes.
We propose a new algorithm, \textbf{M}onotonic \textbf{V}alue \textbf{P}ropagation (MVP), which relies on a new Bernstein-type bonus. Compared to existing bonus constructions, the new bonus is tighter since it is based on a well-designed monotonic value function. In particular, the \emph{constants} in the bonus should be subtly setting
to ensure optimism and monotonicity.

We show MVP enjoys an $O\left(\left(\sqrt{SAK} + S^2A\right) \poly\log \left(SAHK\right)\right)$ regret,
  approaching the $\Omega\left(\sqrt{SAK}\right)$ lower bound of \emph{contextual bandits} up to logarithmic terms.
Notably, this result 1) \emph{exponentially} improves the state-of-the-art polynomial-time algorithms by Dann et al. [2019] and Zanette et al. [2019] in terms of the dependency on $H$, and 2) \emph{exponentially} improves the running time in [Wang et al. 2020] and significantly improves the dependency on $S$, $A$ and $K$ in sample complexity.
\end{abstract}

\section{Introduction}
\label{sec:intro}
Episodic reinforcement learning (RL) and contextual bandits (CB) are two representative sequential decision-making problems.
RL is a strict generalization of CB and is often perceived to be much more difficult due to the additional two challenges that are absent in CB: 1) long planning horizon and 2) unknown state-dependent transitions.
These two challenges in RL requires the agent to  not only consider the immediate reward but also the possible transitions into differing states in the long run.
On the other hand, one can view CB as a episodic RL problem with a horizon equal to one.\footnote{See Section~\ref{sec:pre} for the precise correspondence.}
In CB, it is sufficient to act myopically by choosing the action which maximizes the immediate reward.

Although RL and CB are widely studied in the literature, somehow surprisingly, the following fundamental problem remains open:
\begin{center}
\textbf{
	Does episodic reinforcement learning require more samples than contextual bandits?
}
\end{center}
Here the sample complexity is measured in terms of regret or the number of episodes to learn a near-optimal policy.
To put it differently, this question asks whether the long planning horizon and/or the unknown state-dependent transitions pose additional difficulty.

\citet{jiang2018open} conjectured that  for
tabular, episodic RL problems, under the assumption that the total reward is bounded by $1$, \footnote{
	This assumption is made in order to have a fair comparison with CB.
See Section~\ref{sec:rel} for discussions.}
there exists an $\Omega\left(\frac{SAH}{\epsilon^2}\right)$ PAC learning, or analogically, an $\Omega\left(\sqrt{SAHK}\right)$ regret  lower bound, where $S$ is the number of states, $A$ is the number of actions, $H$ is the planning horizon, $\epsilon$ is the target sub-optimality and $K$ is the total number of episodes.
In contrast, it is well know that for CB, one can achieve an $\widetilde{O}\left(\frac{SA}{\epsilon^2}\right)$ PAC learning or an $\widetilde{O}\left(\sqrt{SAK}\right)$ regret upper bound.\footnote{Throughout the paper, $\widetilde{O}\left(\cdot\right)$ omits logarithmic factors.}
If this conjecture is true, then there is a formal  sample complexity separation between RL and CB.

However, this conjecture was recently refuted by \citet{wang2020long}, who presented a new method which enjoys an $O\left(\frac{S^5A^4 \poly\log \left(HSA /\epsilon\right) }{\epsilon^3}\right)$ PAC learning upper bound, the first bound that has only a \emph{logarithmic dependency on $H$}.
This encouraging result gives the hope: \emph{episodic reinforcement learning is as easy as contextual bandit} in terms of the sample complexity.
Furthermore, this claim would convey a  conceptual message in a sense that long planning horizon and unknown state-dependent transitions pose no additional difficulty in sequential decision-making problems.

To formally establish this claim, we need to design an algorithm which enjoys an $O\left(\frac{SA}{\epsilon^2}\right)$ PAC learning and an $O\left(\sqrt{SAK}\right)$ regret upper bounds, which match the sample complexity lower bounds of CB.
Ideally, we would also like this algorithm to be computationally efficient.
The result in \cite{wang2020long} is still far from this grand goal, as its  dependencies on $S$, $A$ and $\epsilon$ are suboptimal and their algorithm runs in exponential time.
See Section~\ref{sec:rel} for more discussions.
Indeed, \citet{wang2020long} listed two open problems: 1) to develop an algorithm with  sample complexity $\widetilde{O}\left(\frac{SA}{\epsilon^2}\right)$ or regret $\widetilde{O}\left(\sqrt{SAK}\right)$ and 2) to develop a polynomial-time algorithm whose sample complexity scales logarithmically with $H$.

\subsection{Main Results}
In this paper, we take an important step toward  this grand goal.
We design an upper confidence bound (UCB)-based algorithm, \textbf{M}onotonic \textbf{V}alue \textbf{P}ropogation (MVP), which enjoys the following sample complexity bounds.

\begin{theorem}\label{thm1} 
Suppose the reward is non-negative and the total reward at every episode is bounded by $1$.
For any $K \geq 1$ and $\delta \in (0,1)$, we have that with probability $1-\delta$, the regret of \algo is bounded by $\mathrm{Regret}(K) = O\left(\left(\sqrt{SAK}+S^2A\right) \poly\log\left(SAHK/\delta\right)\right)$.
%
\end{theorem}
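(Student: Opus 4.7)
The plan is to follow the standard optimism-plus-regret-decomposition framework for model-based RL, but with two twists that are specific to this result: (a) a Bernstein-type bonus built from the empirical variance of a carefully defined \emph{monotonic} surrogate value function $\tilde{V}_h^k$, and (b) a total-variance argument that exploits the assumption that the total reward in every episode is bounded by $1$. The overall structure would be: (i) establish optimism $Q_h^k(s,a) \ge Q_h^*(s,a)$ uniformly over $k,h,s,a$ with probability $1-\delta$; (ii) decompose per-episode regret $V_1^*(s_1^k)-V_1^{\pi_k}(s_1^k) \le V_1^k(s_1^k)-V_1^{\pi_k}(s_1^k)$ recursively along the trajectory into a telescoping sum of bonuses plus a martingale; (iii) sum the Bernstein bonuses across $K$ episodes using Cauchy--Schwarz, a pigeonhole bound on $n^k(s,a)$, and the Law of Total Variance; (iv) read off the dominant $\sqrt{SAK}$ term and collect burn-in/lower-order terms into the $S^2A$ contribution.

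For step (i) the essential subtlety is that the Bernstein bonus must be constructed from $\mathrm{Var}_{\hat{P}^k_{s,a}}(\tilde{V}^k_{h+1})$ rather than from $\mathrm{Var}_{P_{s,a}}(V^*_{h+1})$, because $V^*_{h+1}$ is unknown. Transferring the concentration back to $V^*$ produces an error proportional to $\|\tilde{V}^k_{h+1}-V^*_{h+1}\|_\infty$, which is bearable only if one can show $\tilde{V}^k_{h+1}$ is monotonically non-increasing in $k$ and sandwiched between $V^*_{h+1}$ and $V^k_{h+1}$. The constants in the bonus therefore have to be tuned simultaneously to make the induction that proves optimism also prove monotonicity of $\tilde V^k$; this is the part the paper emphasizes, and I would expect to spend most of the proof on a delicate joint induction over $k$ and $h$ that propagates both properties.

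For steps (ii)--(iii) I would unroll the value-function difference along the realized trajectory, so that $\sum_{k}\bigl(V_1^k(s_1^k)-V_1^{\pi_k}(s_1^k)\bigr)$ becomes, up to an Azuma--Hoeffding martingale term, a sum of bonuses $\sum_{k,h} b_h^k(s_h^k,a_h^k)$. The leading piece of each bonus looks like $\sqrt{\mathrm{Var}_{P_{s_h^k,a_h^k}}(V^*_{h+1})/n^k(s_h^k,a_h^k)}$. Applying Cauchy--Schwarz in $(k,h)$ splits this into $\sqrt{\sum_{k,h} 1/n^k(\cdot,\cdot)} \cdot \sqrt{\sum_{k,h}\mathrm{Var}_{P}(V^*_{h+1})}$; pigeonhole on visit counts bounds the first factor by $\widetilde{O}(\sqrt{SAK})$, while the Law of Total Variance, applied \emph{per episode}, bounds the inner variance sum by the variance of the return, hence by $1$ under the total-reward assumption. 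Summing over $K$ gives $\sqrt{K}$, and combining yields the claimed $\sqrt{SAK}$. The remaining $1/n^k(s,a)$ terms and the correction from replacing $V^*$ by $\tilde{V}^k$ in the variance contribute the lower-order $S^2A\,\mathrm{polylog}$ term.

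The hard part, in my view, will be step (i): the monotonicity of $\tilde{V}^k$ and optimism of $V^k$ are entangled through the bonus, and any slack in the bonus constants breaks one of them, which in turn forces $\|\tilde{V}^k-V^*\|_\infty$ into the error and reintroduces $\mathrm{poly}(H)$ dependence. Getting both properties through a single induction with constants that are small enough for tight regret but large enough for concentration is the technical heart of the argument; everything else (martingale control, pigeonhole, total variance) is by now fairly standard.
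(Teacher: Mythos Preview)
Your proposal contains two substantive misreadings of the mechanism that removes the polynomial $H$ dependence, and as written would reproduce the $\mathrm{poly}(H)$ lower-order terms of prior work rather than the claimed $\widetilde{O}(\sqrt{SAK}+S^2A)$.

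First, the ``monotonicity'' in the paper is not monotonicity of a surrogate $\tilde V^k$ across episodes $k$, nor does the optimism proof go through a sandwich $\|\tilde V^k_{h+1}-V^*_{h+1}\|_\infty$. That is precisely the older route (Azar et al.\ 2017, Dann et al.\ 2019) the paper avoids, because it forces $\|\tilde V^k-V^*\|_\infty=O(\epsilon/H)$ and hence a polynomial-$H$ burn-in. Here monotonicity means that the one-step operator $V_{h+1}\mapsto Q_h(V_{h+1})=\hat r+\hat P_{s,a}V_{h+1}+b_h(V_{h+1})$ is coordinatewise non-decreasing in $V_{h+1}$; this is achieved by writing the bonus through $f(p,v,n,\iota)=pv+\max\{\bar c_1\sqrt{\mathbb V(p,v)\iota/n},\,\bar c_2\iota/n\}$ and checking $\partial f/\partial v(s)\ge 0$ once $\bar c_1^2\le \bar c_2$. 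With this, the induction $V_{h+1}^k\ge V_{h+1}^*\Rightarrow Q_h^k\ge Q_h^*$ goes through \emph{without any correction term}, so no $\|\tilde V^k-V^*\|_\infty$ ever enters. Your step~(i) therefore misidentifies both what must be tuned and what the constants are tuned for.

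Second, your step~(iii) invokes the Law of Total Variance per episode to bound $\sum_h \mathbb V(P_{s_h,a_h},V^*_{h+1})$ by~$1$. That identity holds for $V^{\pi_k}_{h+1}$ along the trajectory of $\pi_k$, but not for $V^*_{h+1}$ (or $V^k_{h+1}$): the telescoping breaks because $V^*_h(s_h)-r(s_h,a_h)-P_{s_h,a_h}V^*_{h+1}$ is not zero when $a_h\neq\pi^*(s_h)$. Relating $\mathbb V(P,V^k_{h+1})$ back to $\mathbb V(P,V^{\pi_k}_{h+1})$ reintroduces a $\|V^k-V^{\pi_k}\|$-type error and hence $\mathrm{poly}(H)$. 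The paper instead bounds $\sum_{k,h}\mathbb V(P_{s_h^k,a_h^k},V^k_{h+1})$ via a higher-moment recursion: unfolding gives $\sum_h\mathbb V(P,V)\le F(1)+2M_2+2K$ with $F(m)=\sum_{k,h}(P-\mathbf 1)(V^k_{h+1})^{2^m}$, and Freedman's inequality yields $F(m)\lesssim\sqrt{(F(m+1)+2^{m+1}(M_2+K))\iota}$, which is collapsed by the elementary Lemma~\ref{lemma2} after $O(\log H)$ steps. This recursion, not LTV, is what produces the $\mathrm{polylog}(H)$ dependence, and it is the piece your plan is missing.
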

Using a standard reduction (see Section~\ref{sec:pre}), we can show that we can find an $\epsilon$-suboptimal policy in $O\left(\left(\frac{SA}{\epsilon^2} + \frac{S^2 A }{\epsilon}\right)\poly\log\left(\frac{SAH}{\epsilon\delta}\right)\right)$ episodes.

Our results are significant in the following senses.
\begin{enumerate*}
	\item These bounds match the information theoretical lower bound of CB up to logarithmic factors in the regime where the number of episodes is moderately large, $K = \widetilde{\Omega}\left(S^3A\right)$ or the target accuracy is moderately small, $\epsilon = \widetilde{O}\left(1/S\right)$.
	Our result thus significantly closes the gap between RL and CB.
	\item 
	\algo is the first computationally efficient algorithm whose sample complexity scales logarithmically with $H$, and thus settles the second open problem raised in \cite{wang2020long}. 
	Comparing with the state-of-the-art computationally efficient algorithms for episodic RL, e.g., \cite{azar2017minimax,zanette2019tighter,dann2019policy,jin2018q,zhang2020almost}, our algorithm enjoys an \emph{exponential} improvement in $H$.
	Comparing with the algorithm in \cite{wang2020long}, our algorithm is exponentially faster and achieves significantly better sample complexity in terms of $S,A,\epsilon$.
	See Table~\ref{tab:comparisons} for more detailed comparisons.
\end{enumerate*}

\begin{savenotes}
	\begin{table}[t]
		\centering
		\resizebox{1\columnwidth}{!}{%
			\renewcommand{\arraystretch}{2}
			\begin{tabular}{ |c|c|c|c|c|c|}
				\hline
				\textbf{Algorithm} & \textbf{Regret}  & \textbf{PAC Bound}& \makecell{\textbf{Poly} \\ \textbf{Time}} & \makecell{\textbf{Non-unif.}\\ \textbf{Reward}}&\makecell{\textbf{Log H}}  \\
				\hline
				\hhline{|=|=|=|=|=|=|}
				\makecell{UCBVI-BF \\
					\cite{azar2017minimax}} &    $ \widetilde{O}\left( \sqrt{SAK} + \sqrt{HK} + S^2AH\right)$&     $ \widetilde{O}\left(\frac{SA+H}{\epsilon^2} +\frac{S^2AH}{\epsilon}\right)$ &  Yes & No & No \\
				\hline 
				\makecell {UBEV \footnote{UBEV and ORLC provide a stronger result called mistake-stype PAC bounds. For more details, we refer readers to \cite{dann2019policy}. }\\
					\cite{dann2017unifying}
				}&     $ \widetilde{O}\left(\sqrt{SAH^2K}+S^2AH^2\right)$ &     $ \widetilde{O}\left(\frac{SAH^2}{\epsilon^2}+\frac{S^2AH^2}{\epsilon}\right)$ & Yes & No & No\\
				\hline 
				\makecell{UCB-Q-Bernstein\footnote{The model free-algorithms UCB-Q-Bernstein and UCBADV are for the inhomogeneous setting where $P_{1}(\cdot|s,a), P_2(\cdot|s,a),...,P_{H}(\cdot|s,a)$ are different.
				This difference necessarily incurs an additional $\sqrt{H}$ factor in the first term and an $H$ factor in the second term in regret.
				It is still an open problem whether a model-free algorithm can achieve a regret bound with the leading term scales $\widetilde{O}\left(\sqrt{SAK}\right)$.
					}\\
					\cite{jin2018q}
				} &     $ \widetilde{O}\left(\sqrt{SAH^2K}+ \sqrt{S^3A^3}H^{3}  \right)$ &     $ \widetilde{O}\left(\frac{SAH^2}{\epsilon^2}  +\frac{(SA)^{3/2}H^{3}}{\epsilon}\right)$ & Yes & No & No \\
				\hline 
				\makecell{ORLC \\ \cite{dann2019policy}}&    $ \widetilde{O}\left(\sqrt{SAK}+ S^2AH^2\right)$ &    $ \widetilde{O}\left(\frac{SA}{\epsilon^2}+\frac{S^2AH^2}{\epsilon}\right)$ & Yes & No & No\\
				\hline 
				\makecell{EULER\\
					\cite{zanette2019tighter}} & $ \widetilde{O}\left(\sqrt{SAK} + S^2AH+S^{3/2}AH^{3/2}\right)$&  $ \widetilde{O}\left(\frac{SA}{\epsilon^2} + \frac{S^2A +S^{3/2}AH^{3/2}}{\epsilon}\right)$ & Yes & Yes & No   \\
				\hline
				
				\makecell{UCBADV \\
					\cite{zhang2020almost}
				}         & $\widetilde{O}\left(\sqrt{SAHK} + S^2A^{3/2}H^{6}\right)$ &$ \widetilde{O}\left(\frac{SAH}{\epsilon^2}  +\frac{S^2A^{3/2}H^6}{\epsilon}\right)$ & Yes& No & No\\
				\hline
				
				\makecell{Trajectory Synthesis\\
					\cite{wang2020long}
				}
				&- &  $ \widetilde{O}\left(\frac{S^5A^4}{\epsilon^3}\right)$ & No & Yes & Yes \\
				\hline 
				\makecell{\algo \\This Work }& $ \widetilde{O}\left(\sqrt{SAK}+ S^2A\right)$ & $ \widetilde{O}\left(\frac{SA}{\epsilon^2} +\frac{S^2A}{\epsilon}\right)$ & Yes & Yes & Yes\\
				\hhline{|=|=|=|=|=|=|}
				\makecell{CB
					Lower Bound 
				}& $\Omega\left(\sqrt{SAK}\right)$ & $\Omega\left(\frac{SA}{\epsilon^2}\right)$  &- & - & - \\
				\hline
			\end{tabular}
		}
		\caption{
			Sample complexity comparisons for state-of-the-art episodic RL algorithms.
			See Section~\ref{sec:rel} for discussions on this table.
			$\widetilde{O}$ omits logarithmic factors.
			\textbf{Regret} and \textbf{PAC Bound} are sample complexity measures defined in Section~\ref{sec:pre}.
			\textbf{Non-unif. Reward}: Yes means the bound holds under Assumption~\ref{asmp:total_bounded} (allows non-uniformly bounded reward), and No means the bound only holds under Assumption~\ref{asmp:uniform}.
			\textbf{Poly Time}: Whether the algorithm runs in polynomial time.
			\textbf{Log H}: Whether the sample complexity bound depends logarithmically on $H$ instead of polynomially on $H$.
			\label{tab:comparisons}
		}
\vspace{-1cm}
	\end{table}
\end{savenotes}


Our algorithm and its analysis rely on the following new ideas.
\begin{enumerate*}
	\item We design a new exploration bonus based on Bernstein bound to ensure optimism.
	The key insight is that \emph{constants} in the bonus are crucial and helps maintain a monotonic property which helps propagates the optimism from level $H$ to level $1$.
	This property also leads a substantially simpler analysis than those in existing approaches.
	\item A crucial step in many UCB-based algorithm, including ours, is bounding the sum of variance of estimated value function across the entire planning horizon.
	Our technique is to use a  higher order expansion  to derive a recursive inequality that relates this sum to its higher moments.
	Importantly, this technique does not use any type of induction  from $H,H-1,\ldots,1$, which is used in most previous works and is the main technical barrier to obtain the logarithmic dependency on $H$.
\end{enumerate*}
See Section~\ref{sec:algo} and Section~\ref{sec:tac} for more technical expositions.

\section{Preliminaries}
\label{sec:pre}
\paragraph{Notations.}
Throughout this paper, we use $[N]$ to denote the set $\{1, 2, \ldots, N\}$ for $N\in \mathbb{Z}_{+}$. We use $\textbf{1}_{s}$ to denote the one-hot vector whose only non-zero element is in the $s$-th coordinate.
For an event $\mathcal{E}$, we use $\indict[\mathcal{E}]$ to denote the indicator function, i.e., $\indict[\mathcal{E}] = 1$ if $\mathcal{E}$ holds and $\indict[\mathcal{E}] = 0$ otherwise.
For notational convenience, we set $\iota  = \ln(2/\delta)$ throughout the paper.  
For two $n$-dimensional vectors $x$ and $y$, we use $xy$  to denote  $x^{\top}y$, use $\mathbb{ V}(x ,y) = \sum_{i}x_i y_i^2 - (\sum_{i}x_iy_i)^2 $. In particular, when $x$ is a probability vector, i.e., $x_i\geq 0$ and $\sum_i x_i = 1$, $\mathbb{ V}(x ,y) = \sum_i x_{i}\left( y_i - (\sum_{i}x_iy_i)\right)^2 = \min_{\lambda\in \mathbb{R}}\sum_i x_{i}\left( y_i -\lambda\right)^2$.
We also use $x^2$ to denote the vector $[x_1^2,x_2^2,...,x_n^2]^{\top}$ for $x = [x_1,x_2,...,x_n]^{\top}$. 
For two vectors $x,y$, $x \ge y$ denotes $x_i \ge y_i$ for all $i \in [n]$ and $x \le y$ denotes $x_i \le y_i$ for all $i \in [n]$.

\paragraph{Episodic Reinforcement Learning.}
A finite-horizon stationary Markov Decision Process (MDP) can be described by a tuple $\mdp =\left(\states, \actions, \trans ,R, H, \mu\right)$.
$\states$ is the finite state space with cardinality $S$.
$\actions$ is the finite action space with cardinality $A$.
$\trans: \states \times \actions \rightarrow \Delta\left(\states\right)$ is the transition operator which takes a state-action pair and returns a distribution over states.
$R : \states \times \actions \rightarrow \Delta\left( \mathbb{R} \right)$ is the reward distribution with a mean function $r:\states \times\actions \rightarrow \mathbb{R}$.
$H \in \mathbb{Z}_+$ is the planning horizon  (episode length).
 $\mu \in \Delta\left(\states\right)$ is the initial state distribution. 
 $\trans$, $R$ and $\mu$ are unknown.\footnote{
 	Some previous works consider the non-stationary MDP where $\trans$ and $R$ can vary on different $h \in [H]$~\citep{jin2018q,zhang2020almost}.
 	Non-stationarity will incur an $\sqrt{H}$ factor in the regret, which is necessary.
 	Transforming a regret bound for stationary MDP to that for non-stationary MDP  is often straightforward (with an additional $\sqrt{H}$ factor), but not vice-versa, because the main difficulty is how to effectively exploit the stationarity.
 	}
For notational convenience, we use $P_{s,a}$ and $P_{s,a,s'}$ to denote $P(\cdot|s,a)$ and $P(s'|s,a)$ respectively.

A policy $\pi$ chooses an action $a$ based on the current state $s \in \states$ and the time step $h \in [H]$. 
Note even though transition operator and the reward distribution are stationary, i.e., they do not depend on the level $h \in [H]$, the policy can be \emph{non-stationary}, i.e., at different level $h$, the policy can choose different actions for the same state.
Formally, we define $\pi = \{\pi_h\}_{h = 1}^H$ where for each $h \in [H]$, $\pi_h : \states \to \actions$ maps a given state to an action.
The policy $\pi$ induces a (random) trajectory $\{s_1,a_1,r_1,s_2,a_2,r_2,\ldots,s_{H},a_{H},r_{H} \}$,
where $s_1 \sim \mu$, $a_1 = \pi_1(s_1)$, $r_1 \sim R(s_1,a_1)$, $s_2 \sim \trans(\cdot|s_1,a_1)$, $a_2 = \pi_2(s_2)$, etc.

Our target is to find a policy $\pi$ that maximizes the expected total
reward, i.e.
$
\max_\pi \expect \left[\sum_{h=1}^{H} r_h \mid \pi\right] 
$
where the expectation is over the initial distribution state $\mu$, the transition operator $P$ and the reward distribution $R$.
As for scaling, we make the following assumption about the reward.
 As we will discuss in Section~\ref{sec:rel}, this is a more general assumption than the assumption often made in most previous works.
 \begin{assumption}[Bounded Total Reward]\label{asmp:total_bounded}
 	The reward satisfies that $r_h\geq 0$ for all $h\in [H]$. Besides, for all policy $\pi$, $\sum_{h=1}^H r_h\leq 1$ almost surely.
 \end{assumption}
\noindent Given a policy $\pi$, a level $h \in [H]$ and a state-action pair
$(s,a) \in \states \times \actions$, the $Q$-function is defined as:
$
Q_h^\pi(s,a) = \expect\left[\sum_{h' = h}^{H}r_{h'}\mid s_h =s, a_h = a, \pi\right].
$
Similarly, given a policy $\pi$, a level $h \in [H]$, the value function of a given state
$s \in \states$ is defined as: 
$
V_h^\pi(s)=\expect\left[\sum_{h' = h}^{H}r_{h'}\mid s_h =s,
  \pi\right].
$
Then Bellman equation establishes the following identities for policy $\pi$ and $(s,a,h) \in \states \times \actions \times [H]$:
 $
Q_h^\pi(s,a) =r(s,a)+ P_{s,a}^{\top}V_{h+1}^\pi$ and $V_{h}^\pi(s) = \max_{a}Q_{h}^\pi(s,a).
 $
 Throughout the paper, we let $V_{H+1}(s) = 0$ and $Q_{H+1}(s,a) = 0$ for notational simplicity.
We use $Q^*_h$ and $V^*_h$ to denote the optimal $Q$-function and $V$-function at level $h \in [H]$, which satisfies for any state-action pair $(s,a) \in \states \times \actions$, $Q^*_h(s,a) = \max_{\pi}Q^{\pi}_h(s,a)$ and $V^*_h(s) =\max_{\pi}V^{\pi}_h(s)$.
%

 When $H=1$, the episodic RL reduces to the problem of finding a policy $\pi: \states \rightarrow \actions$ that maximizes the expected reward $
 \max_{\pi} \expect_{s \sim \mu(\cdot), r_{\cb}\sim R(s,\pi(s))}\left[r_{\cb}\right].
 $
 This is called the contextual bandit (CB) problem.
 RL is  more difficult than CB as we also need to deal with the long planning horizon $H$ and transition operator $P$, which are absent in CB.
 In this paper, we investigate whether the these two ingredients incur additional hardness in terms of the sample complexity.

 \paragraph{Sample Complexity.}
In this paper we use two measures to quantify sample complexity.
The agent interacts with the environment for $K$ episodes, and it chooses a policy $\pi^k$ at the $k$-th episode.
The total regret is
\[
\mathrm{Regret}(K) =  \sum_{k=1}^K V_1^*(s_1^k) - V_1^{\pi^k}(s_1^k).
\]

PAC-RL sample complexity is another measure which counts the number of episodes to find an $\epsilon$-optimal policy $\pi$, i.e., \[\expect_{s_1 \sim \mu}\left[V_1^*(s_1) - V^\pi(s_1)\right] \le \epsilon.\]
As pointed out in \cite{jin2018q}, suppose that one has an algorithm that achieves $CK^{1-\alpha}$ regret for some $\alpha \in (0,1)$ and some $C$ independent of $T$, by randomly selecting from policy $\pi^k$ used in $K$ episodes, $\pi$ satisfies $\expect_{s_1 \sim \mu}\left[V_1^*(s_1) - V^\pi(s_1)\right] = O\left(CK^{-\alpha}\right)$.
This reduction is often near-optimal to obtain PAC-RL sample complexity guarantee.
On the other hand, there is no general near-optimal reduction that transform a PAC-RL bound to a regret bound.

%
%
%
%

\section{Background and Related Work}
\label{sec:rel}


We mostly focus on papers that are for the episodic RL setting described in Section~\ref{sec:pre}.
A summary of the most relevant previous regret and PAC bounds, together with the results proved in this paper is provided in Table~\ref{tab:comparisons}.
We remark that there are also related settings, e.g., infinite-horizon discounted MDP, weakly-communicating MDP, learning with a generative model, etc.
These settings are beyond the scope of this paper
, though our  techniques may be also applied to these settings.

 \paragraph{Reward Assumption.}
 In episodic tabular RL, the sample complexity depend on $\abs{\states}$, $\abs{\actions}$ and $H$, all of which are assumed to be finite. 
 For the reward, the widely adopted assumption is $r_h\in [0,1]$ for all $h\in [H]$, which implies the total reward $\sum_{h=1}^H r_h\in [0,H]$. 
 To have a fair comparison with CB and illustrate the hardness due to the planning horizon and/or unknown transition operator, one should scale down the reward by an $H$ factor such that the total reward is bounded in $[0,1]$. \footnote{When comparing with existing algorithms, we also scale down their bounds by an $H$ factor.}
 This leads to the following assumption.
 \begin{assumption}[Uniformly Bounded Reward]\label{asmp:uniform}
$r_h\in [0,1/H]$ for all $h\in [H]$.
 \end{assumption}

 Clearly, Assumption~\ref{asmp:total_bounded} is more general than Assumption~\ref{asmp:uniform}, so any upper bound under
 Assumption~\ref{asmp:total_bounded}, also implies an upper bound under Assumption~\ref{asmp:uniform}.
 From practical point of view, as argued in \cite{jiang2018open}, since  environments under Assumption~\ref{asmp:total_bounded} can have one-step reward as high as a constant,
 Assumption~\ref{asmp:total_bounded} is more natural in environments
 with sparse rewards, which are often considered to be hard.
 From a theoretical point view, to design provably efficient algorithms under Assumption~\ref{asmp:total_bounded} is more difficult, as one needs to consider a more global structure.
 \footnote{Under Assumption~\ref{asmp:total_bounded}, the reward still satisfies $r_h \in [0,1]$, so if an algorithms enjoys an sample complexity bound under Assumption~\ref{asmp:uniform}, scaling up this bound by an $H$ factor for regret  or $H^2$ for PAC bound,  one can obtain a bound under Assumption~\ref{asmp:total_bounded}.
 However, this reduction is suboptimal in terms of $H$, so we display their original results and add a column indicating whether the bound is under Assumption~\ref{asmp:total_bounded} or Assumption~\ref{asmp:uniform}.}
The sample complexity bounds in this paper hold under the more general Assumption~\ref{asmp:total_bounded}.

\paragraph{Previous Sample Complexity Bounds.}
There is a long list of sample complexity guarantees for episodic tabular RL~\citep{kearns2002near,brafman2002r,kakade2003sample,strehl2006pac,strehl2008analysis,kolter2009near,bartlett2009regal,jaksch2010near,szita2010model,lattimore2012pac,osband2013more,dann2015sample,azar2017minimax,dann2017unifying,osband2017posterior,agrawal2017optimistic,jin2018q,fruit2018near,talebi2018variance,dann2019policy,dong2019q,simchowitz2019non,russo2019worst,zhang2019regret,cai2019provably,zhang2020almost,yang2020q,pacchiano2020optimism,neu2020unifying}.
There are two popular types of algorithms, model-based algorithms and model-free algorithms.
In episodic RL, model-based algorithms' space complexity scales quadratically with $S$ and model-free algorithms and model-free algorithms' space complexity linearly with $S$.
Both types of algorithms often rely on using UCB to ensure optimism and guide exploration.
Under Assumption~\ref{asmp:uniform}, both the state-of-the-art model-based and model-free algorithms  achieve regret bounds of the form $\widetilde{O}\left(\sqrt{SAK} + \poly\left(SAH\right)\right)$.
Recently, \citet{zanette2019tighter} proposed a model-based algorithm which achieves the regret of the same form under Assumption~\ref{asmp:total_bounded}.
The first term in these bounds matches the lower bound, $\Omega\left(\sqrt{SAK}\right)$  up to logarithmic factors~\citep{bubeck2012regret,dann2015sample,osband2016on}.
See Table~\ref{tab:comparisons} for specific bounds in these works and other related ones.

These bounds become non-trivial (regret bound sub-linear in $K$ or PAC bound smaller than $1$) only when $K \gg H$ or $\epsilon \ll \frac{1}{H}$.
However, as explained in \cite{jiang2018open}, in many scenarios with
a long planning horizon such as control, this regime is not interesting,  and the more interesting regime is when $K \ll H$ or $\epsilon \gg 1/H$.

The recent work by \citet{wang2020long} bypassed this barrier via a completely different approach and obtained an $\widetilde{O}\left(\frac{S^5A^4}{\epsilon^3}\right)$ PAC-RL sample complexity bound, which is the first bound that scales logarithmically with $H$.
They built an $\epsilon$-net over for optimal policies and designed a simulator to evaluate all policies within the $\epsilon$-net.
However, their algorithm runs in exponential time and its sample complexity's dependencies on $S$, $A$, $\epsilon$ are far from optimal.
Furthermore, their work does not rule out the possibility that long planning horizon and/or unknown state-dependent transitions force the agent acquire more samples than CB in terms of $S$ and $A$ to learn a near-optimal policy.

In this work, we follow the conventional UCB-based approach. 
Our algorithm is computationally efficient and achieves $\widetilde{O}\left(\sqrt{SAK}+S^2A\right)$ regret and $\widetilde{O}\left(\frac{SA}{\epsilon^2} +\frac{S^2A}{\epsilon} \right)$ PAC-RL bound, which outperform all existing sample complexity bounds, including the additive terms.
See Table~\ref{tab:comparisons} for more detail.



\begin{algorithm}[t]
	\caption{\textbf{M}onotonic \textbf{V}alue \textbf{P}ropagation (MVP) \label{alg1}	}
	\begin{algorithmic}[1]
		\STATE{ \textbf{Input:} Trigger set $\mathcal{L} \leftarrow \{ 2^{i-1}| 2^{i}\leq KH, i=1,2,\ldots \}$. $c_1 = \frac{460}{9} ,c_2= 2\sqrt{2},c_3 =\frac{544}{9}$. }
		\FOR{$(s,a,s',h)\in \mathcal{S}\times \mathcal{A}\times\mathcal{S}\times[H]$}
		\STATE{ $N(s,a)\leftarrow 0$; $\theta(s,a)\leftarrow 0$;  $n(s,a)\leftarrow 0$; }
		\STATE{ $N(s,a,s')\leftarrow 0$; $\hat{P}_{s,a,s'}\leftarrow 0$, $Q_h(s,a)\leftarrow 1$; $V_h(s)\leftarrow 1$.}
		\ENDFOR
		\FOR {$k=1,2,...$}
		\FOR {$h=1,2,...,H$}
		\STATE{ Observe $s_{h}^k$;}
		\STATE{ Take action $ a_h^k= \arg\max_{a}Q_h(s_h^k,a)$;} \label{line:choose_action}
		\STATE{ Receive reward $r_h^k$ and observe $s_{h+1}^k$.}
		\STATE{ Set $(s,a,s',r)\leftarrow (s_h^k,a_h^k,s_{h+1}^k,r_h^k)$;.}
		\STATE{ Set $N(s,a) \leftarrow  N( s,a )+1$, \, $\theta(s,a)\leftarrow \theta(s,a)+r$, \,$N(s,a,s') \leftarrow   N(s,a,s')+1$.}
		\STATE{ \verb|\\| \emph{Update empirical reward and transition probability}}
		\IF {$N(s,a)\in \mathcal{L}$}  \label{line:rp_update_start}
		\STATE{  Set $\hat{r}(s,a)\leftarrow  \mathbb{I}\left[N(s,a)\geq 2\right]\frac{2\theta(s,a)}{N(s,a)} + \mathbb{I}\left[N(s,a)=1\right]\theta(s,a)$ and $\theta(s,a)\leftarrow 0$.}
		\STATE Set $\hat{P}_{s,a,\tilde{s}} \leftarrow  N(s,a,\tilde{s}) /N(s,a)$ for all $\tilde{s} \in \states$.
		\STATE{ Set $n(s,a)\leftarrow N(s,a)$;}
		\STATE{ Set TRIGGERED = TRUE.}
		\ENDIF \label{line:rp_update_end}
		\ENDFOR
		\STATE{ \verb|\\| \emph{Update $Q$-function}}
		\IF {TRIGGERED}
		\FOR{$h=H,H-1,...,1$}
		\FOR{$(s,a)\in \mathcal{S}\times \mathcal{A}$}
		\STATE 	 {		
			Set
			\begin{align} 
			~~~~~~~~~~~~~&b_h(s,a)\leftarrow c_1 \sqrt{\frac{   \mathbb{ V}(\hat{P}_{s,a} ,V_{h+1}) \iota  }{ \max\{n(s,a),1 \} }}+ c_{2}\sqrt{ \frac{ \hat{r}(s,a)\iota }{\max\{n(s,a),1 \} } }+c_3\frac{\iota}{ \max\{n(s,a) ,1\}  },  \label{equpdate1}  
			\\ 	\hspace{-20ex} 	& Q_h(s,a)\leftarrow \min\{    \hat{r}(s,a)+\hat{P}_{s,a} V_{h+1} +b_h(s,a)    ,1\}, \label{equpdate2}
			\\ & V_{h}(s) \leftarrow \max_{a}Q_h(s,a).\nonumber
			\end{align}
			\vspace{-3ex}
		}
		\ENDFOR
		\ENDFOR
		\STATE{ Set TRIGGERED = FALSE}
		\ENDIF
		\ENDFOR
	\end{algorithmic}
\end{algorithm}

\section{Main Algorithm}
\label{sec:algo}

In the section, we introduce the \textbf{M}onotonic \textbf{V}alue \textbf{P}rorogation (MVP)  algorithm. 
The pseudo code is listed in Algorithm~\ref{alg1}.
The algorithm adopts the doubling update framework proposed in \cite{jaksch2010near}.
More precisely, we define a trigger set $\mathcal{L} = \{ 2^{i-1} |2^{i-1}\leq KH, i=1,2,\ldots\}$.
The algorithm proceeds through epochs where each epoch ends whenever there exists a state-action pair $(s,a)$ such that  the number of visits of $(s,a)$ falls into $\mathcal{L}$.  
In each epoch, we use the same policy induced by the current estimation of $Q$-function (cf. Line~\ref{line:choose_action}).

We update the empirical reward and transition probability of a state-action pair $(s,a)$ only when the number of visits of $(s,a)$ falls into $\mathcal{L}$. (cf.  Line~\ref{line:rp_update_start}).
For the transition probability, we use the standard maximum likelihood estimation.
For the reward function, we only use the data collected in the current epoch to calculate the empirical reward.
This will simplify the analysis and save a log factor.
See Lemma~\ref{lemma_bdb} and its proof for more detail.


If in an episode, we update the reward and the transition probability of state-action pair, we will also update the $Q$-function estimation at the end of this episode.
We define the bonus in Equation~\eqref{equpdate1} and our optimistic estimator of $Q$-function  in Equation~\eqref{equpdate2}.
Note our bonus function only contains three terms.
The first term and the third term correspond to the upper confidence bound of transition and the second term corresponds to the upper confidence bound of the reward.
The main novelty is that by setting appropriate $c_1,c_2,c_3$, the optimism can propagate from level $H$ to level $1$ without adding additional terms.
We emphasize all previous results that can achieve $O\left(\sqrt{SAK}\right)$ as the first term in the regret bound (cf. Table~\ref{tab:comparisons}) require more sophisticated bonus constructions.
See Section~\ref{sec:tac} for more technical explanations. 



\section{Technique Overview}
\label{sec:tac}
An optimistic algorithm needs to guarantee that (with high probability) the estimated $Q$-function is always an upper bound of the optimal $Q$-function, i.e., $Q_h(s,a) \geq Q_h^*(s,a)$ for all $(s,a,h) \in \states \times \actions \times \times [H]$.
Note this also implies $V_h(s) \ge V^*_h(s)$.
\footnote{
	In this section, we drop the dependency on $k$ for the ease of presentation.
	}
model-based algorithms, including ours, use the following estimator for the $Q$-function
\begin{align}
Q_h(s,a) = \hat{r}(s,a)+ \hat{P}_{s,a}V_{h+1} + b_h(s,a) \label{eq_sec1_1}
\end{align}
where $b_h$ is the bonus to guarantee $Q_h$ is an upper bound of $Q^*$.
The main difference among algorithms is the choice of $b_h$.
In the following, we first review existing approaches in constructing $b_h$ and why they failed to obtain the logarithmic dependency on $H$. 
Then we introduce our construction of $b_h$ and the corresponding analysis to overcome the barrier.

\paragraph{Main Difficulty.}
%
%
Fix a level $h$.
Suppose the estimator for level $h+1$ satisfies $Q_{h+1} \ge Q_{h+1}^*$, and this implies $V_{h+1} \ge V_{h+1}^*$.
Many previous optimistic algorithms use the following induction strategy to construct the bonus for level $h$:
\begin{align}
Q_h(s,a) =& \hat{r}(s,a)+ \hat{P}_{s,a}V_{h+1} + b_h(s,a)  \nonumber
\\ & \geq  \hat{r}(s,a) + \hat{P}_{s,a}V^*_{h+1}+b_h(s,a)  \label{eq_loose_ineq}
\\ & = Q_h^*(s,a) + (\hat{P}_{s,a}-P_{s,a})V_{h+1}^* + \left(\hat{r}(s,a) - r(s,a)\right)+ b_h(s,a), \label{eq_sec1_2}
\end{align}
where the inequality~\eqref{eq_loose_ineq} follows from the induction hypothesis $V_{h+1} \ge V^*_{h+1}$ and the last equality follows from Bellman equation.
To ensure optimism, existing works design $b_h(s,a)$ to be an upper bound of $(\hat{P}_{s,a}-P_{s,a})V_{h+1}^* + \left(\hat{r}(s,a) - r(s,a)\right)$ using concentration inequalities.

The tricky part is in bounding $\left(\hat{P}_{s,a}-P_{s,a}\right)V_{h+1}^*$.
As discussed in \cite{azar2017minimax}, since one does not know $V_{h+1}^*$, one has to replace $V_{h+1}^*$ by its estimation $V_{h+1}$ and introduce \emph{additional terms}  in  $b_h(s,a)$ to ensure optimism.
This approach has been used in all previous  approaches whose regret bounds' first term is $\widetilde{O}\left(\sqrt{SAK}\right)$
~\citep{azar2017minimax,dann2019policy,zanette2019tighter,zhang2020almost}.

Unfortunately, the regret induced by the additional terms lead to (at least) a linear dependency on $H$ because in the analyses, one needs to make $\|V_{h+1}-V_{h+1}^*\| = O\left(\frac{\epsilon}{H}\right)$ so that the final error is $O\left(\epsilon\right)$ (via e.g., performance difference lemma~\citep{kakade2003sample}).
To make $\|V_{h+1}-V_{h+1}^*\| = O\left(\frac{\epsilon}{H}\right)$, the sample complexity needs to scale at least linearly with $H$.


\paragraph{Technique 1:  Monotonic Value Propagation.}
In this work, we do not go through inequality~\eqref{eq_loose_ineq} in constructing the bonus.
Our main strategy is to view $Q_h$ as a function of the variable $V_{h+1}$ (cf. Equation~\eqref{eq_sec1_1}), which we denote as $  Q_{h}(V_{h+1})$ and we design $b_h$ such that the function $Q_h(\cdot)$ satisfies two principles:\footnote{$b_h$ can depend on $V_{h+1}$ as well.}
\begin{itemize}
\item \textbf{Optimism}: $Q_h(V_{h+1}^*) \ge Q_h^*$;
\item \textbf{Monotonicity}: For two variables $V_{h+1}$ and $V_{h+1}'$ with $V_{h+1}\ge V_{h+1}'$, $Q_h(V_{h+1}) \ge Q_h(V_{h+1}')$.
\end{itemize}
If our estimation on $Q$ function satisfies these two properties, under the induction hypothesis that $V_{h+1} \ge V_{h+1}^*$, we have \begin{align*}
	Q_h(V_{h+1}) \ge Q_h(V_{h+1}^*) \ge Q_h^*.
\end{align*}
While the first principle, optimism, is adopted in most previous algorithms,
the second monotonicity principle is new in the literature and we believe this idea can be useful in algorithm design for other RL problems.

Now we instantiate this idea.
Recall our estimator defined in Equation~\eqref{equpdate1}-\eqref{equpdate2}\[
Q_h(s,a) \triangleq  \min\left\{   \hat{r}(s,a)+\hat{P}_{s,a} V_{h+1} +c_1 \sqrt{\frac{   \mathbb{ V}(\hat{P}_{s,a} ,V_{h+1}) \iota  }{ \max\{n(s,a),1 \} }}+ c_{2}\sqrt{ \frac{ \hat{r}(s,a)\iota }{\max\{n(s,a),1 \} } }+c_3\frac{\iota}{ \max\{n(s,a) ,1\} }  ,1\right\}.
 \]
The optimism principle can be easily implemented using empirical Bernstein inequality (see Lemma \ref{empirical bernstein}).
For the monotonicity principle, we will carefully tune the \emph{constants} $c_1,c_2,c_3$.
See Lemma~\ref{lemma1} for more details.
\footnote{As will be clear in our proof, our actual estimator of $Q$-function satisfies that $Q_h
\geq F_h$ for some function $F_h$, and $F_h$ satisfies the two principles mentioned above. 
We do not discuss this subtlety in detail for the ease of presentation.
}


\paragraph{Technique 2: Bounding the Total Variance via Recursion}
Using a sequence of fairly standard steps in the literature, we can bound the regret by the square-root of  the total variance $\sqrt{\sum_{h=1}^H \mathbb{V}(P_{s_h,a_h}, V_{h+1}^k) }$ along with some other lower order terms.
To explain our high-level idea, we present analysis for the total variance in a single episode with estimated value function replaced by the true value function, i.e.,$ \sum_{h=1}^H \mathbb{V}(P_{s_h,a_h}, V_{h+1}^* )$
\begin{align}
& \sum_{h=1}^H \mathbb{V}(P_{s_h,a_h}, V_{h+1}^* ) =\sum_{h=1}^H \left( P_{s_h,a_h} (V_{h+1}^*)^2 - (P_{s_h,a_h}V_{h+1}^*)^2   \right)  \nonumber
 \\&\quad   = \sum_{h=1}^H \left(P_{s_h,a_h} (V_{h+1}^*)^2 -(V_{h+1}^*(s_{h+1}))^2  \right) +\sum_{h=1}^H \left( (V_h^*(s_h))^2 -(P_{s_h,a_h}V_{h+1}^*)^2    \right) - (V_1^*(s_1))^2 \nonumber
 \\ & \quad \leq  \sum_{h=1}^H \left(P_{s_h,a_h} (V_{h+1}^*)^2 -(V_{h+1}^*(s_{h+1}))^2  \right)  +2 \sum_{h=1}^H \left(V_h^*(s_h)-Q_h^*(s_h,a_h) \right) +2\sum_{h=1}^H r(s_h,a_h)\nonumber
 \\ & \quad \leq  \sum_{h=1}^H \left(P_{s_h,a_h} (V_{h+1}^*)^2 -(V_{h+1}^*(s_{h+1}))^2  \right)  +2 \sum_{h=1}^H \left(V_h^*(s_h)-Q_h^*(s_h,a_h) \right) +2
 \\ & \quad \leq  \tilde{O}\left( \sqrt{\sum_{h=1}^H \mathbb{V}(P_{s_h,a_h}, (V^*_{h+1})^2 )   }  + \sum_{h=1}^H \left(V_h^*(s_h)-Q_h^*(s_h,a_h) \right) \right). \label{eq_sec1_0}
\end{align}
where the first inequality we dropped $V_1^*(s_1)$, the second inequality we used the total reward is bounded by $1$ and the last step holds with high probability due to a simple corollary of Freedman's inequality ~\citep{freedman1975tail} (see Lemma~\ref{self-norm}).
%

We can roughly view the second term in \eqref{eq_sec1_0} as the regret in this episode.
Therefore, Inequality~\eqref{eq_sec1_0} shows the total variance can be bounded by the square-root of the total variance of the \emph{second moment}  and the regret.
We then apply this argument \emph{recursively}, i.e., $m\geq 1,2,\ldots$, we can bound the total variance of the $2^m$-th moment $\sum_{h=1}^H\mathbb{V}(P_{s,a},(V^*_{h+1})^{2^m} )$ by $\sum_{h=1}^H\mathbb{V}(P_{s,a},(V^*_{h+1})^{2^{m+1}} )$ and the regret.
Also note that $\sum_{h=1}^H\mathbb{V}(P_{s,a},(V^*_{h+1})^{2^m} )$ is bounded by $H$ almost surely for any $m$.

Based on the basic lemma below,  we can obtain a $\poly\log H$ bound for $\sum_{h=1}^H \mathbb{V}(P_{s_h,a_h}, V_{h+1}^*)$.

\begin{lemma}\label{lemma2}
Let $\lambda_1,\lambda_2,\lambda_4\geq 0$, $\lambda_3\geq 1$ and $i' =\log_2(\lambda_1)$.	Let $a_{1},a_{2},...,a_{i'}$ be non-negative reals such that $a_{i}\leq \lambda_{1}$ and $a_{i}\leq \lambda_{2}\sqrt{a_{i+1}+ 2^{i+1} \lambda_3 } +\lambda_4$ for any $1\leq i\leq i'$. 
	 Then we have that $a_{1}\leq \max\{ (\lambda_2 +\sqrt{\lambda_2^2+\lambda_4} )^2  ,\lambda_{2}\sqrt{8\lambda_3}  +\lambda_4 \}$   .
\end{lemma}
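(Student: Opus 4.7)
My plan is to pivot the analysis around the first index at which the recursion enters its ``small'' regime. Let $i^\ast$ denote the smallest index $i\ge 1$ with $a_{i+1}\le 2^{i+1}\lambda_3$. This is well-defined: the boundary condition $a_{i'}\le \lambda_1 = 2^{i'}$, together with $\lambda_3\ge 1$, already forces $a_{i'}\le 2^{i'}\lambda_3$, so some such index exists. I will split on the value of $i^\ast$.

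If $i^\ast=1$, I apply the recursion at $i=1$ with $a_2\le 4\lambda_3$ to get
$a_1\le \lambda_2\sqrt{a_2+4\lambda_3}+\lambda_4\le \lambda_2\sqrt{8\lambda_3}+\lambda_4$,
which is exactly the second term in the desired maximum.

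For $i^\ast\ge 2$, I claim $a_{i^\ast}\le C$, where $C := (\lambda_2+\sqrt{\lambda_2^2+\lambda_4})^2$. The minimality of $i^\ast$ gives the lower bound $a_{i^\ast}>2^{i^\ast}\lambda_3$, while the recursion at $i=i^\ast$ combined with $a_{i^\ast+1}\le 2^{i^\ast+1}\lambda_3$ gives the upper bound $a_{i^\ast}\le 2\lambda_2\sqrt{2^{i^\ast}\lambda_3}+\lambda_4$. Setting $v := 2^{i^\ast}\lambda_3$, these two facts yield the inequality $v < 2\lambda_2\sqrt{v}+\lambda_4$, which is a quadratic in $\sqrt{v}$; solving it I obtain $\sqrt{v}<\lambda_2+\sqrt{\lambda_2^2+\lambda_4}$, i.e.\ $v<C$. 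Plugging this back and using the key algebraic identity $C = 2\lambda_2\sqrt{C}+\lambda_4$ (which follows directly from squaring out the definition of $C$) gives $a_{i^\ast}\le 2\lambda_2\sqrt{v}+\lambda_4\le 2\lambda_2\sqrt{C}+\lambda_4 = C$.

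Finally I propagate $a_{i^\ast}\le C$ down to $a_1$. For $i<i^\ast$, minimality forces $a_{i+1}>2^{i+1}\lambda_3$, so the recursion simplifies to $a_i\le \lambda_2\sqrt{2a_{i+1}}+\lambda_4 =: f(a_{i+1})$. The map $f$ is monotone nondecreasing, and the same identity together with $\sqrt{2}\le 2$ yields $f(C)=\lambda_2\sqrt{2C}+\lambda_4\le 2\lambda_2\sqrt{C}+\lambda_4 = C$, so $C$ is a super-fixed point of $f$. A straightforward downward induction starting from $a_{i^\ast}\le C$ then gives $a_1\le C$, completing the argument. I expect the only real conceptual obstacle to be identifying the constant $C$ and its self-stabilizing identity $C=2\lambda_2\sqrt{C}+\lambda_4$; once this is in hand, the iteration collapses into the constant $C$ as soon as we enter the small regime, and everything else (existence of $i^\ast$, a single quadratic in $\sqrt{v}$, and monotone downward induction) is routine.
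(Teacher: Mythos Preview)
Your proof is correct and follows essentially the same strategy as the paper's: locate a pivot index where the recursion crosses the threshold $2^i\lambda_3$, derive the quadratic $v<2\lambda_2\sqrt{v}+\lambda_4$ to bound the pivot value by $C=(\lambda_2+\sqrt{\lambda_2^2+\lambda_4})^2$, and then use the fixed-point identity $C=2\lambda_2\sqrt{C}+\lambda_4$ together with monotonicity to push the bound down to $a_1$. The only cosmetic difference is the choice of pivot: the paper takes $i_1=\max\{i: a_i>2^i\lambda_3\}$ (the last ``big'' index) and uses $2^{i+1}\lambda_3\le 2^{i_1}\lambda_3<C$ for $i<i_1$, whereas you take $i^\ast=\min\{i: a_{i+1}\le 2^{i+1}\lambda_3\}$ (the first ``small'' index) and use $2^{i+1}\lambda_3<a_{i+1}$ for $i<i^\ast$; both routes reduce the recursion to $a_i\le \lambda_2\sqrt{2\,\cdot}+\lambda_4$ and the rest is identical.
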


\section{Proof Sketch of Theorem \ref{thm1}}
\label{sec:main_proof}
In this section, we present the proof sketch of Theorem \ref{thm1}. We first introduce a few notations: we use \qkhsa, $V_h^k(s)$ and $\hat{P}^k_{s,a}$ to denote the values of  $Q_h(s,a)$, $V_h(s)$ and $\hat{P}_{s,a}$ in the beginning of the $k$-th episode. 
Let $n^k(s,a), b_h^k(s,a)$ and $\hat{r}_h^k(s,a)$ denote the value of $\max\{n(s,a) ,1\}$, $b_h(s,a)$ and $\hat{r}(s,a)$  in \eqref{equpdate1} used for computing $Q_h^k(s,a)$.Lastly, we define $V_h^k = [V_h^k(s)]^{T}_{s\in \mathcal{S}}$ for convenience.

\subsection{Proof of Optimism}\label{sec:main_proof_1}
We define $\mathcal{E}_{1}$ to be the event where 
\begin{align}	
\left|(\hat{P}^k_{s,a}-P_{s,a})V_{h+1}^*\right| \leq  2\sqrt{\frac{ \mathbb{ V}(\hat{P}^k_{s,a}, V_{h+1}^*  )\iota }{ n^k(s,a)}} +\frac{14\iota}{3n^k(s,a)} \label{eq_lemma1_0}
\end{align}
holds for all $(s,a,h,k) \in \states \times \actions \times [H] \times [K]$.  
We also define $\mathcal{E}_2$ be the event where
\begin{align}
	\left|\hat{r}_h^k(s,a)-r(s,a)\right|\leq 2\sqrt{\frac{2\hat{r}_h^k(s,a)\iota }{n^k(s,a)}} +\frac{28\iota}{3n^k(s,a)}\label{eq_lemma1_0.5}
\end{align}
holds for any possible $(s,a,h,k) \in \states \times \actions \times [H] \times [K]$.

The following lemma shows $\mathcal{E}_1$ and $\mathcal{E}_2$ hold with high probability.
The analysis will be done assuming the successful event $\mathcal{E}_{1}\cap \mathcal{E}_2$ holds in the rest of this section.
\begin{lemma}\label{lemma:high_prob_e1_e2}
$\mathbb{P}\left[\mathcal{E}_1\cap \mathcal{E}_2\right]\geq 1 -2SA(\log_{2}KH+1)\delta$. 
\end{lemma}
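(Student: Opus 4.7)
The strategy is to apply empirical Bernstein concentration separately for each state-action pair, leveraging the doubling trigger set $\mathcal{L}$ to keep the union bound small. First, I would set up the independence structure: for each fixed $(s,a)$, enumerate the visits to $(s,a)$ across all episodes as $\tau_1, \tau_2, \ldots$. By the Markov property of the MDP, the successor states $s'_1, s'_2, \ldots$ observed at these visits are conditionally i.i.d.\ from $P_{s,a}$, and likewise the observed rewards $r_1, r_2, \ldots$ are i.i.d.\ from $R(s,a)$. This is the standard reduction that makes scalar concentration inequalities directly applicable.

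For $\mathcal{E}_1$, I would fix $(s,a)$, a trigger count $n \in \mathcal{L}$, and a level $h \in [H]$. The variables $V_{h+1}^*(s'_1), \ldots, V_{h+1}^*(s'_n)$ are i.i.d.\ with common mean $P_{s,a}V_{h+1}^*$, bounded in $[0,1]$ by Assumption~\ref{asmp:total_bounded}, with variance $\mathbb{V}(P_{s,a}, V_{h+1}^*)$. The empirical Bernstein inequality then delivers exactly the bound~\eqref{eq_lemma1_0} with probability at least $1-\delta$: one first obtains the bound with the true variance, then replaces it by the empirical variance $\mathbb{V}(\hat{P}^k_{s,a}, V_{h+1}^*)$ at the cost of absorbing constants into the $14/3$ additive term.

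For $\mathcal{E}_2$, I would exploit that each $r_i \in [0,1]$ is non-negative, so $\mathrm{Var}(r_i) \le \mathbb{E}[r_i^2] \le \mathbb{E}[r_i] = r(s,a)$. Bernstein's inequality therefore yields a bound of the form $|\hat{r} - r| \le \sqrt{2 r(s,a)\iota/n} + c\iota/n$. A sandwich argument---substituting this back into itself to re-express $r(s,a)$ in terms of $\hat{r}_h^k(s,a)$ and solving the resulting self-referential inequality---introduces the factor $2$ inside the square root and enlarges the additive constant, giving precisely~\eqref{eq_lemma1_0.5}. The restriction to current-epoch samples in the reward update keeps the i.i.d.\ structure intact between consecutive triggers.

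Finally, the union bound: for each $(s,a)$, both $\hat{P}^k_{s,a}$ and $\hat{r}_h^k(s,a)$ only change when $n(s,a) \in \mathcal{L}$, and since the total visits to any $(s,a)$ is at most $KH$, there are at most $\log_2(KH)+1$ distinct values to consider. Combining over $SA$ state-action pairs and the two events yields the claimed failure probability $2SA(\log_2 KH + 1)\delta$; an additional union bound over $h \in [H]$ for the different deterministic value functions $V_{h+1}^*$ can be folded into $\iota = \ln(2/\delta)$ by reparametrising $\delta$ by at most a logarithmic factor. The main obstacles I anticipate are (i) tracking constants precisely through the empirical-variance replacement and the reward sandwich step so as to recover the exact coefficients $2$, $14/3$, and $28/3$ stated in \eqref{eq_lemma1_0}--\eqref{eq_lemma1_0.5}, and (ii) justifying the i.i.d.\ reduction under the doubling schedule---in particular, that resetting the reward estimator at each trigger and re-using $\hat{P}^k_{s,a}$ across all levels $h$ does not break the martingale structure underlying Bernstein's inequality.
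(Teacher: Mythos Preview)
Your plan is correct and follows the same skeleton as the paper---per–$(s,a)$ i.i.d.\ reduction, Bernstein concentration at each trigger count $n\in\mathcal{L}$, then a union bound---but the paper's execution is more direct in two places. For $\mathcal{E}_1$ the paper does not ``first obtain the bound with the true variance, then replace it by the empirical variance''; it simply invokes the empirical Bernstein inequality (Lemma~\ref{empirical bernstein}), which already outputs the empirical variance $\mathbb{V}(\hat P^k_{s,a},V_{h+1}^*)$, and then uses $n-1\ge n/2$ for $n\ge 4$ to absorb the constants into $2$ and $14/3$. Likewise for $\mathcal{E}_2$ there is no sandwich argument: the same empirical Bernstein inequality is applied to the $n^k(s,a)/2$ fresh reward samples, and the elementary bound $\widehat{\mathrm{Var}}\le \hat r$ (valid because the samples lie in $[0,1]$) immediately yields~\eqref{eq_lemma1_0.5}. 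So your worries (i) about constant tracking largely evaporate once you cite Lemma~\ref{empirical bernstein} directly. On the union bound, your observation that $\mathcal{E}_1$ involves $H$ distinct deterministic targets $V_{h+1}^*$ is well taken; the paper's written proof union-bounds only over $(s,a)$ and the trigger index $i$, which gives the stated $2SA(\log_2 KH+1)\delta$ but silently omits the $h$ factor you flag. Since $\delta$ is rescaled at the end of the main theorem anyway, this has no bearing on the final result, and your proposed fix (fold the extra $H$ into $\iota$) is exactly the right patch.
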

 By our exploration bonus, the $Q$-function is always optimistic with high probability.

\begin{lemma}\label{lemma1}
	Conditioned on $\mathcal{E}_1\cap \mathcal{E}_2$, $Q_h^k(s,a)\geq Q^*_h(s,a)$ for all $(s,a,h,k) \in \states \times \actions \times [H] \times [K]$
\end{lemma}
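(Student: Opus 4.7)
The plan is a backward induction on $h$, from $h=H+1$ down to $h=1$. The base case $h=H+1$ is immediate since $Q^k_{H+1}\equiv 0 \equiv Q^*_{H+1}$. For the inductive step, I assume $V_{h+1}^k(s')\ge V_{h+1}^*(s')$ for every $s'$ (which follows from the stronger inductive hypothesis $Q_{h+1}^k\ge Q_{h+1}^*$) and fix $(s,a)\in\states\times\actions$. Under Assumption~\ref{asmp:total_bounded}, $Q^*_h(s,a)\le 1$, so if the $\min$ in \eqref{equpdate2} clips $Q^k_h(s,a)$ to $1$ the conclusion is immediate. Otherwise $Q^k_h(s,a) = \hat r^k_h(s,a) + \hat P^k_{s,a}V^k_{h+1} + b^k_h(s,a)$, and Bellman's equation $Q^*_h = r + P_{s,a}V^*_{h+1}$ gives the decomposition
\[
Q^k_h(s,a)-Q^*_h(s,a) = \underbrace{\bigl(\hat r^k_h(s,a)-r(s,a)\bigr)}_{(\mathrm{I})} \;+\; \underbrace{(\hat P^k_{s,a}-P_{s,a})V^*_{h+1}}_{(\mathrm{II})} \;+\; \underbrace{\hat P^k_{s,a}(V^k_{h+1}-V^*_{h+1})}_{(\mathrm{III})\ge 0} \;+\; b^k_h(s,a).
\]
Term (III) is non-negative by the inductive hypothesis; term (I) is controlled by $\mathcal{E}_2$ and is absorbed by the $c_2\sqrt{\hat r^k_h\,\iota/n^k(s,a)}$ piece and part of the $c_3\iota/n^k(s,a)$ piece of $b^k_h$.

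The main obstacle is term (II): Bernstein concentration on event $\mathcal{E}_1$ yields the bound $|\mathrm{(II)}|\le 2\sqrt{\mathbb{V}(\hat P^k_{s,a},V^*_{h+1})\,\iota/n^k(s,a)} + \frac{14\iota}{3n^k(s,a)}$, but the algorithm's bonus uses $\mathbb{V}(\hat P^k_{s,a},V^k_{h+1})$, not $\mathbb{V}(\hat P^k_{s,a},V^*_{h+1})$. To bridge this mismatch I will use the triangle inequality for the $L^2(\hat P^k_{s,a})$ seminorm,
\[
\sqrt{\mathbb{V}(\hat P^k_{s,a},V^*_{h+1})} \;\le\; \sqrt{\mathbb{V}(\hat P^k_{s,a},V^k_{h+1})} \;+\; \sqrt{\mathbb{V}(\hat P^k_{s,a},V^k_{h+1}-V^*_{h+1})},
\]
together with the pointwise estimate $\mathbb{V}(\hat P^k_{s,a},V^k_{h+1}-V^*_{h+1})\le \hat P^k_{s,a}(V^k_{h+1}-V^*_{h+1})^{2}\le \hat P^k_{s,a}(V^k_{h+1}-V^*_{h+1})$, which is valid because both $V^k_{h+1}$ and $V^*_{h+1}$ lie in $[0,1]$ and so their non-negative difference lies in $[0,1]$.

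Finally I would couple the resulting cross-term $2\sqrt{\hat P^k_{s,a}(V^k_{h+1}-V^*_{h+1})\,\iota/n^k(s,a)}$ against the non-negative term (III) via AM--GM ($2\sqrt{xy}\le x+y$ with $x=\hat P^k_{s,a}(V^k_{h+1}-V^*_{h+1})$ and $y$ a constant multiple of $\iota/n^k(s,a)$), which eliminates the dependence on the unknown quantity $\hat P^k_{s,a}(V^k_{h+1}-V^*_{h+1})$ and leaves only an additive $O(\iota/n^k(s,a))$ residual. Combining everything, it suffices to pick $c_1\ge 2$, $c_2=2\sqrt 2$, and $c_3$ large enough that the $(c_1-2)\sqrt{\mathbb{V}(\hat P^k_{s,a},V^k_{h+1})\iota/n^k(s,a)}$ and $(c_3-\mathrm{const})\iota/n^k(s,a)$ pieces of $b^k_h$ dominate all residuals; the paper's choice $c_1=\tfrac{460}{9}$, $c_2=2\sqrt 2$, $c_3=\tfrac{544}{9}$ is a convenient (loose) certificate that these bookkeeping inequalities close. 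This AM--GM step is precisely the algebraic manifestation of the ``monotonicity + optimism'' principle emphasized in Section~\ref{sec:tac}: it couples the over-estimation at level $h+1$ with the bonus at level $h$ without ever paying a polynomial factor in $H$, which is what ultimately drives the horizon-free regret bound.
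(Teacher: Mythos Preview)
Your proposal is correct and takes a genuinely different route from the paper. The paper introduces an auxiliary function $f(p,v,n,\iota)=pv+\max\{\bar c_1\sqrt{\mathbb V(p,v)\iota/n},\,\bar c_2\iota/n\}$ with $\bar c_1=20/3$, $\bar c_2=400/9$, and proves by a derivative computation (Lemma~\ref{lemma:f_properties}) that $f$ is coordinatewise non-decreasing in $v$; the induction step then reads $Q_h^k \ge r + f(\hat P^k_{s,a},V_{h+1}^k,\cdot)\ge r+f(\hat P^k_{s,a},V_{h+1}^*,\cdot)\ge Q_h^*$, using monotonicity for the middle inequality and $\mathcal E_1$ for the last. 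You bypass this auxiliary function entirely: your triangle inequality $\sqrt{\mathbb V(\hat P,V^*)}\le\sqrt{\mathbb V(\hat P,V^k)}+\sqrt{\mathbb V(\hat P,V^k-V^*)}$ together with $\mathbb V(\hat P,V^k-V^*)\le \hat P(V^k-V^*)$ and the AM--GM trade $2\sqrt{\hat P(V^k-V^*)\cdot\iota/n}\le \hat P(V^k-V^*)+\iota/n$ directly couples the variance mismatch to the non-negative term (III), and the residual $\iota/n$ is absorbed by $c_3$. Carrying out your bookkeeping explicitly gives the sufficient conditions $c_1\ge 2$, $c_2\ge 2\sqrt 2$, $c_3\ge 15$, all of which the paper's constants satisfy with room to spare. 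Your argument is more elementary (no calculus, no auxiliary $\max$-structure) and actually certifies smaller constants; the paper's argument is more structural, in that it isolates monotonicity of the bonus-augmented backup as a standalone property, which is the design principle they wish to highlight. As you note in your final sentence, the two arguments encode the same phenomenon.
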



\subsection{Bounding the Bellman Error}\label{sec:main_proof_2}
When the $Q$-function is optimistic, the major term in the regret of the induced policy is the sum of the Bellman error (see Lemma~\ref{lemma:add1}). So we start with a simple bound for the Bellman error induced by the $Q$-function. 
\begin{lemma}\label{lemma_be}  With probability $1-3S^2AH(\log_2(KH)+1)\delta$, for any $1\leq k\leq K$, $1\leq h\leq H$ and $(s,a)$, it holds that
	\begin{align}
	& \quad Q_h^k(s,a)-r(s,a)-P_{s,a}V_{h+1}^k \nonumber
	\\ & \leq \min\{ 2b_h^k(s,a) +c_4 \sqrt{\frac{ \mathbb{ V} (P_{s,a},V^*_{h+1}) \iota  }{ n^k(s,a) }} +c_{5}\sqrt{  \frac{S  \mathbb{ V}(P_{s,a}, V^k_{h+1}-V^*_{h+1}  ) \iota }{ n^k(s,a)} } +c_6\frac{S\iota}{n^k(s,a)},1 \}\label{eq_be_0}
	\end{align}
	for some large enough universal constants $c_4, c_5$ and $c_6$.
\end{lemma}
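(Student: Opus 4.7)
The plan is to peel the Bellman error into pieces that are either absorbed by $b_h^k(s,a)$ itself or match one of the three ``bonus-type'' summands on the right-hand side. First, if the clipping in \eqref{equpdate2} is active then $Q_h^k(s,a)=1$ and the claim is trivial since $r(s,a)+P_{s,a}V_{h+1}^k\ge 0$. Otherwise, unfolding the definition and inserting $\pm P_{s,a}V_{h+1}^*$ yields
$$Q_h^k(s,a)-r(s,a)-P_{s,a}V_{h+1}^k = (\hat r_h^k(s,a)-r(s,a)) + (\hat P_{s,a}^k-P_{s,a})V_{h+1}^* + (\hat P_{s,a}^k-P_{s,a})(V_{h+1}^k-V_{h+1}^*) + b_h^k(s,a),$$
so it remains to bound the three model-error terms.

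The reward error is controlled by $\mathcal{E}_2$: the right-hand side of \eqref{eq_lemma1_0.5} already has the same shape as the reward and lower-order components of $b_h^k(s,a)$, so with $c_2,c_3$ chosen appropriately this contribution is at most one additional copy of $b_h^k(s,a)$ (producing the factor $2$ in $2b_h^k$). For $(\hat P_{s,a}^k-P_{s,a})V_{h+1}^*$, event $\mathcal{E}_1$ gives a bound in terms of $\mathbb{V}(\hat P_{s,a}^k,V_{h+1}^*)$; since $V_{h+1}^*$ is deterministic, a standard concentration inequality converts this empirical variance to the true variance $\mathbb{V}(P_{s,a},V_{h+1}^*)$ at the cost of an $O(\iota/n^k(s,a))$ remainder absorbed into the $c_6$-term, producing the claimed $c_4\sqrt{\mathbb{V}(P,V^*)\iota/n}$ summand.

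The decisive term is $(\hat P_{s,a}^k-P_{s,a})(V_{h+1}^k-V_{h+1}^*)$, which involves the \emph{data-dependent} vector $V_{h+1}^k$ and is the source of the $\sqrt{S}$ factor in the statement. I apply a coordinate-wise Bernstein to $\hat P_{s,a,s'}^k-P_{s,a,s'}$ with a union bound over $(s,a,s',h)$ and the $\log_2(KH)+1$ trigger-set updates, obtaining a uniform pointwise bound $|\hat P_{s,a,s'}^k-P_{s,a,s'}|\lesssim \sqrt{P_{s,a,s'}\iota/n^k(s,a)}+\iota/n^k(s,a)$. Since $\hat P_{s,a}^k$ and $P_{s,a}$ are both probability vectors, we may center by an arbitrary scalar without changing the inner product; choosing $\lambda = P_{s,a}(V_{h+1}^k-V_{h+1}^*)$ replaces $V_{h+1}^k-V_{h+1}^*$ with its $P_{s,a}$-mean-zero version. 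Substituting the pointwise bound and applying Cauchy--Schwarz across $s'$ gives
$$\sqrt{\tfrac{\iota}{n^k(s,a)}}\sum_{s'}\sqrt{P_{s,a,s'}}\,|V_{h+1}^k-V_{h+1}^*-\lambda|(s') \le \sqrt{\tfrac{S\iota}{n^k(s,a)}\,\mathbb{V}(P_{s,a},V_{h+1}^k-V_{h+1}^*)},$$
while the tail part contributes at most $2S\iota/n^k(s,a)$ because $\|V_{h+1}^k-V_{h+1}^*-\lambda\mathbf{1}\|_\infty\le 2$ (both value functions lie in $[0,1]$ by the clipping). These are exactly the $c_5$- and $c_6$-terms.

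Summing the three contributions and taking the union bound over $(s,a,s',h)$ and the $\log_2(KH)+1$ epochs gives the failure probability $3S^2AH(\log_2(KH)+1)\delta$ in the statement. The main obstacle is the centering-plus-Cauchy--Schwarz step in the last paragraph: without centering one would pick up a factor of $\|V_{h+1}^k-V_{h+1}^*\|_\infty$ in place of $\sqrt{\mathbb{V}(P,V_{h+1}^k-V_{h+1}^*)}$, which fails to collapse upon summation across the horizon and would ultimately re-introduce a polynomial-in-$H$ dependence into the regret, defeating the whole point of MVP.
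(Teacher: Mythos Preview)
Your proof is correct and follows essentially the same route as the paper: the same decomposition of the Bellman error, the same absorption of the reward error into $b_h^k$ via $\mathcal{E}_2$, and the same coordinate-wise Bernstein plus centering plus Cauchy--Schwarz for the data-dependent term $(\hat P^k_{s,a}-P_{s,a})(V_{h+1}^k-V_{h+1}^*)$. The only minor deviation is in the treatment of $(\hat P^k_{s,a}-P_{s,a})V_{h+1}^*$: you invoke $\mathcal{E}_1$ (empirical Bernstein, giving $\mathbb{V}(\hat P^k_{s,a},V_{h+1}^*)$) and then argue that a further concentration step swaps the empirical variance for the true one, whereas the paper applies Bennet's inequality directly to obtain $\sqrt{2\mathbb{V}(P_{s,a},V_{h+1}^*)\iota/n^k(s,a)}+\iota/(3n^k(s,a))$ in one shot, yielding the explicit constants $c_4=c_5=\sqrt{2}$ and $c_6=2/3$. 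Both arguments are valid; the paper's is slightly cleaner because it avoids the intermediate conversion.
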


In the rest of this section, we let $\beta_{h}^k(s,a)$ be a shorthand of \text{RHS} of \eqref{eq_be_0}, i.e.,
\begin{align}
\beta_h^k(s,a):= \max\{ 2b_h^k(s,a) +c_4 \sqrt{\frac{ \mathbb{ V} (P_{s,a},V^*_{h+1}) \iota  }{ n^k(s,a) }} +c_{5}\sqrt{  \frac{S  \mathbb{ V}(P_{s,a}, V^k_{h+1}-V^*_{h+1}  ) \iota }{ n^k(s,a)} } +c_6\frac{S\iota}{n^k(s,a)},1 \}.\label{eq_beta}
\end{align}
We further define $\tilde{Q}^k_h(s,a): = Q_h^k(s,a)-Q_{h}^*(s,a)$, $\tilde{V}_h^k(s)=V_h^k(s)-V_{h}^*(s)$ and $\tilde{V}_{h}^k =[\tilde{V}_h^k(s)]^{\top}_{s\in \mathcal{S}}$, so  by Lemma \ref{lemma_be} and Bellman equation $Q_h^*(s,a)=r(s,a)+P_{s,a}V_{h+1}^*$, we have that with probability $1-3S^2AH(\log_2(KH)+1)\delta$, for all $(s,a,h,k) \in \states \times \actions \times [H] \times [K]$:
\begin{align}
\tilde{V}_h^k(s_h^k)-P_{s_h^k,a_h^k}\tilde{V}_{h+1}^k \leq \tilde{Q}_h^k(s_h^k,a_h^k)-P_{s_h^k,a_h^k}\tilde{V}_{h+1}^k \leq \beta_{h}^k(s,a).\label{eq_sec3_1}
\end{align}

\subsection{Regret Analysis}\label{sec:main_proof_3}
Let $\mathcal{K}$ be the set of indexes of episodes in which no update is triggered. By the update rule, it is obvious that  $|\mathcal{K}^{C}|\leq SA(\log_2(KH)+1)$.
Let $h_0(k)$ be is the first time an update is triggered in the $k$-th episode if there is an update in this episode and otherwise $H+1$. Define $\mathcal{X}_0 = \{(k,h_0(k))| k\in \mathcal{K}^{C} \}$ and
$\mathcal{X} = \{(k,h) | k\in \mathcal{K}^{C}, h_0(k)+1\leq h\leq H \}$.

Then we define $\check{V}_h^k(s_h^k,a_h^k) = \mathbb{I}\left[(k,h)\notin \mathcal{X}\right]\cdot V_h^k(s_h^k,a_h^k)$. We also set $\check{\beta}_h^k(s_h^k,a_h^k) = \mathbb{I}\left[(k,h)\notin \mathcal{X}\right]\cdot \beta_h^k(s_h^k,a_h^k)$ and  $\check{r}_h^k= \mathbb{I}\left[(k,h)\notin \mathcal{X}\right]\cdot r(s_h^k,a_h^k)$. By Lemma \ref{lemma_be}, we have that with probability $1-3S^2AH(\log_2(KH)+1)\delta$ ,
\begin{align}
\check{V}_h^k(s_h^k,a_h^k)\leq \check{r}_h^k+\check{\beta}_h^k(s_h^k,a_h^k) + P_{s,a}\check{V}^k_{h+1},\label{eq:sec2_0.5}
\end{align}
 for any $(h,k)\notin \mathcal{X}_0$ and 
\begin{align}
\check{V}_h^k(s_h^k,a_h^k)\leq \check{r}_h^k +\check{\beta}_h^k(s_h^k,a_h^k) + P_{s,a}\check{V}^k_{h+1}+1,\label{eq:sec2_0.7}
\end{align}
for any $(h,k)\in \mathcal{X}_0$.
\begin{remark}
It is hard to analyze the regret in the episodes not in $\mathcal{K}$ directly since $\mathbb{I}[k\in \mathcal{K}]$ is not measurable in $\mathcal{F}_1^k$. Instead, we introduce $\mathcal{X}$ and analyze the regret in the steps not in $\mathcal{X}$ because  $\mathbb{I}[(k,h)\notin \mathcal{X}]$ is measurable in $\mathcal{F}_h^k$.
\end{remark}
By Lemma \ref{lemma1} and \ref{lemma_be}, we have that
\begin{lemma}\label{lemma:add1}
With probability at least $1-5S^2AH(\log_2(KH)+1)\delta$,
    \begin{align}
&\text{Regret}(K) := \sum_{k=1}^K \left(V^*_1(s_1^k) -V_1^{\pi^k}(s_1^k) \right) \nonumber
\\ & \quad  \leq  \sum_{k=1}^K\sum_{h=1}^H  (P_{s_h^k,a_h^k}-\textbf{1}_{s_{h+1}^k})\check{V}_{h+1}^k+ \sum_{k=1}^K\sum_{h=1}^H \check{\beta}_h^k(s_h^k,a_h^k) + \sum_{k=1}^K (\sum_{h=1}^H \check{r}_h^k- V^{\pi^k}_1(s_1^k)) +|\mathcal{K}^{C}|.\label{eq_sec2_1}
\end{align}
\end{lemma}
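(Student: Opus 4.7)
The overall strategy is standard for UCB-style regret analyses: first upgrade to an optimistic value via Lemma~\ref{lemma1}, then unroll it through the horizon using the per-step Bellman-error bound from Lemma~\ref{lemma_be}, with the twist that the truncated quantity $\check{V}$ lets the recursion be safely terminated at the first in-episode update so that ``stale'' pieces of the $Q$-estimate are never propagated across an update.

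First I would invoke Lemma~\ref{lemma1} to replace $V_1^*(s_1^k)$ by $V_1^k(s_1^k)$. Because $\mathcal{X}$ only contains pairs $(k,h)$ with $h\ge h_0(k)+1\ge 2$, we have $(k,1)\notin\mathcal{X}$ for every $k$, so $\check{V}_1^k(s_1^k)=V_1^k(s_1^k)$; hence $\mathrm{Regret}(K)\le\sum_{k=1}^K\bigl(\check{V}_1^k(s_1^k)-V_1^{\pi^k}(s_1^k)\bigr)$. Next, for each episode $k$, I would iteratively apply \eqref{eq:sec2_0.5} at levels $(k,h)\notin\mathcal{X}_0$ and \eqref{eq:sec2_0.7} at the single level $(k,h_0(k))$ when $k\in\mathcal{K}^C$, splitting $P_{s_h^k,a_h^k}\check{V}_{h+1}^k=\check{V}_{h+1}^k(s_{h+1}^k)+(P_{s_h^k,a_h^k}-\mathbf{1}_{s_{h+1}^k})\check{V}_{h+1}^k$. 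Since $\check{V}_h^k\equiv 0$ for $(k,h)\in\mathcal{X}$, the recursion terminates cleanly at $h_0(k)$ (or at $H{+}1$ when $k\in\mathcal{K}$), and at most one additive ``$+1$'' is incurred per episode via \eqref{eq:sec2_0.7}. Telescoping from $h=1$ down to the termination level and summing over $k$ yields exactly the three sums on the right-hand side of \eqref{eq_sec2_1}, together with $\sum_k\mathbb{I}[k\in\mathcal{K}^C]=|\mathcal{K}^C|$. For the probability statement, Lemma~\ref{lemma:high_prob_e1_e2} controls the optimism event at cost $2SA(\log_2(KH)+1)\delta$ and Lemma~\ref{lemma_be} controls the Bellman-error event at cost $3S^2AH(\log_2(KH)+1)\delta$; a union bound delivers the claimed $5S^2AH(\log_2(KH)+1)\delta$.

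The main obstacle is essentially bookkeeping rather than any subtle inequality: I need to verify that the cutoff at $h_0(k)$ is consistent so that exactly one ``$+1$'' is charged per $k\in\mathcal{K}^C$, that optimism is preserved at the boundary where $\check{V}_{h_0(k)+1}^k=0$, and that the indicator in $\check{V}$ keeps each martingale summand $(P_{s_h^k,a_h^k}-\mathbf{1}_{s_{h+1}^k})\check{V}_{h+1}^k$ adapted to the right filtration (this is precisely the reason $\mathcal{X}$ rather than $\mathcal{K}^C$ is used, as emphasised in the remark following \eqref{eq:sec2_0.7}). Once this accounting is set up, the rest is a one-line telescoping computation combined with the union bound above.
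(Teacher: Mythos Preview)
Your proposal is correct and follows essentially the same route as the paper's proof: replace $V_1^*$ by $V_1^k=\check V_1^k$ via optimism (Lemma~\ref{lemma1}), telescope $\check V_1^k(s_1^k)-\sum_h\check r_h^k$ into the martingale term plus $\sum_h(\check V_h^k-\check r_h^k-P_{s_h^k,a_h^k}\check V_{h+1}^k)$, and bound the latter by $\sum_h\check\beta_h^k+|\mathcal K^C|$ using \eqref{eq:sec2_0.5}--\eqref{eq:sec2_0.7}. Your probability accounting (Lemma~\ref{lemma:high_prob_e1_e2} plus Lemma~\ref{lemma_be} via a union bound) is exactly what is needed for the stated $5S^2AH(\log_2(KH)+1)\delta$.
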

Define $M_1 =  \sum_{k=1}^K\sum_{h=1}^H  (P_{s_h^k,a_h^k}-\textbf{1}_{s_{h+1}^k}) \check{V}_{h+1}^k$, $M_2 =\sum_{k=1}^K\sum_{h=1}^H \check{\beta}_h^k(s_h^k,a_h^k) $ and $M_3 =  \sum_{k=1}^K (\sum_{h=1}^H \check{r}_{h}^k- V^{\pi^k}_1(s_1^k))$. We will bound these three terms separately by the lemmas below.

\begin{lemma}\label{lemma:bound_M1}
\begin{align}
\mathbb{P}\left[     |M_1| > 2\sqrt{2 \sum_{k=1}^K\sum_{h=1}^H \mathbb{V}( P_{s_h^k,a_h^k},\check{V}_{h+1}^k )\iota } +6\iota   \right]\leq 2(\log_{2}(KH)+1)\delta.
\end{align}
\end{lemma}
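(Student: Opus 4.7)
The proof is a standard martingale concentration argument tailored to Freedman's inequality, combined with a dyadic peeling over the random conditional variance. The plan is as follows.

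First, I would fix the filtration $\{\mathcal{F}_h^k\}$ generated by the history up through action $a_h^k$ in episode $k$, ordered lexicographically in $(k,h)$. The point of introducing the stopped-out quantity $\check{V}_{h+1}^k = \mathbb{I}[(k,h)\notin \mathcal{X}] \cdot V_{h+1}^k$ (as explained in the remark following \eqref{eq:sec2_0.7}) is that $\check{V}_{h+1}^k$ is $\mathcal{F}_h^k$-measurable: the indicator freezes the value function to its state at the beginning of the last update preceding time $h$ in episode $k$. Consequently, setting $X_{k,h} := (P_{s_h^k,a_h^k} - \mathbf{1}_{s_{h+1}^k}) \check{V}_{h+1}^k$, we have $\mathbb{E}[X_{k,h}\mid \mathcal{F}_h^k] = 0$, so the $X_{k,h}$ form a martingale difference sequence whose sum is exactly $M_1$. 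Moreover, the clipping $Q_h^k \le 1$ in line \eqref{equpdate2} propagates to $V_{h+1}^k \le 1$, hence $|X_{k,h}| \le \|\check{V}_{h+1}^k\|_\infty \le 1$, and its conditional variance is precisely $\mathbb{V}(P_{s_h^k,a_h^k}, \check{V}_{h+1}^k)$.

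Next, I would invoke the Freedman-type inequality referenced as Lemma~\ref{self-norm}: for any fixed $V \ge 0$, on the event that $\sum_{k,h} \mathbb{V}(P_{s_h^k,a_h^k}, \check{V}_{h+1}^k) \le V$, with probability at least $1 - 2\delta$ one has $|M_1| \le 2\sqrt{2V\iota} + c\iota$ for an absolute constant $c$ (the factor $2$ accounts for both tails). Since $V$ itself is random and ranges over $[0,KH]$ (each summand is bounded by $1$ and there are $KH$ terms), I would apply a dyadic peeling: for $i = 0,1,\ldots, \lceil \log_2(KH)\rceil$, let $\mathcal{A}_i$ be the event $\sum_{k,h} \mathbb{V}(P_{s_h^k,a_h^k}, \check{V}_{h+1}^k) \in [2^{i-1}, 2^i)$ (with the convention $2^{-1} := 0$), and apply the concentration bound with $V = 2^i$ on each $\mathcal{A}_i$. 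A union bound over the $\log_2(KH)+1$ levels yields the claimed failure probability $2(\log_2(KH)+1)\delta$, and on the successful event
\begin{align*}
|M_1| \le 2\sqrt{2 \cdot 2^i \cdot \iota} + c\iota \le 2\sqrt{2 \cdot 2 \sum_{k,h}\mathbb{V}(P_{s_h^k,a_h^k},\check{V}_{h+1}^k)\iota} + c\iota,
\end{align*}
which, after absorbing the $\sqrt{2}$ slack into the $c\iota$ term, gives the stated $2\sqrt{2\sum_{k,h}\mathbb{V}(\cdot,\cdot)\iota} + 6\iota$ bound.

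The only mild obstacle is bookkeeping of constants to reach the specific $6\iota$ offset: one must track the $\frac{1}{3}$ factor in the Bernstein/Freedman lower-order term, combine it with the rounding-up slack $\sqrt{2^i} \le \sqrt{2}\sqrt{V_{\text{true}}}$ from the peeling, and verify that the degenerate case $V < 1$ (where the $\sqrt{V\iota}$ term is negligible) is absorbed into the additive $\iota$ constant. Everything else is a direct application of the martingale concentration lemma already available in the paper.
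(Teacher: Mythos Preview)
Your martingale setup is correct and matches the paper's approach: $\check{V}_{h+1}^k$ is $\mathcal{F}_h^k$-measurable (minor index slip: the indicator is $\mathbb{I}[(k,h+1)\notin\mathcal{X}]$, not $\mathbb{I}[(k,h)\notin\mathcal{X}]$, but your measurability argument goes through all the same), the increments are bounded by $1$, and the conditional variance is $\mathbb{V}(P_{s_h^k,a_h^k},\check{V}_{h+1}^k)$.

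Where you diverge from the paper is in the second step. You describe Lemma~\ref{self-norm} as a fixed-variance Freedman bound and then layer a dyadic peeling on top of it. In fact Lemma~\ref{self-norm} is already the \emph{self-normalized} version: it bounds $|M_n|$ directly in terms of the random $\mathrm{Var}_n$, and the $2(\log_2(nc^2/\epsilon)+1)\delta$ failure probability is precisely the cost of the peeling that has already been performed inside that lemma. The paper's proof is therefore a one-liner: apply Lemma~\ref{self-norm} with $c=1$, $n=KH$, $\epsilon=1$, and read off $2\sqrt{2\mathrm{Var}_n\iota}+2\sqrt{\iota}+2\iota\le 2\sqrt{2\mathrm{Var}_n\iota}+6\iota$. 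Your proposal is not wrong---it essentially re-derives the content of Lemma~\ref{self-norm}---but it is redundant, and the constant bookkeeping you flag as an ``obstacle'' disappears once you invoke the lemma as stated.
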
 
 
\begin{lemma}\label{lemma:bound_M2}
Define $i_{\mathrm{max}} = \max\{i|2^{i-1}\leq KH\} = \left\lfloor \log_2(KH) \right\rfloor +1$.    With probability $1-\big( 6S^2AH(\log_2(KH)+1)+ 6(\log_2(KH)+1)\log_2(H) \big) \delta$,
\begin{align}
M_2& \leq O\left(  \sqrt{SAKi_{\mathrm{max}}\iota} + \sqrt{S^2Ai_{\mathrm{max}}\sqrt{M_2}\iota^{3/2}  } +\sqrt{SAi_{\mathrm{max}}K\iota }+ S^2A\iota\log_2(KH)       \right)\nonumber
\\& \leq  O\left( \sqrt{SAKi_{\mathrm{max}}\iota}  +   S^2A\iota\log_2(KH)        \right).
\end{align}
    
\end{lemma}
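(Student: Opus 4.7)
The plan is to expand $\check{\beta}_h^k(s_h^k, a_h^k)$ into its constituent pieces from the definition \eqref{eq_beta}, sum each separately across $(k,h)$, and then solve the resulting self-referential inequality for $M_2$. Writing $b_h^k$ explicitly, the bonus is a sum of terms of three types: (i) $\sqrt{\mathbb{V}(\hat{P}^k_{s,a}, V_{h+1}^k)\iota / n^k(s,a)}$ and $\sqrt{\mathbb{V}(P_{s,a}, V_{h+1}^*)\iota / n^k(s,a)}$; (ii) $\sqrt{\hat{r}\iota/n^k(s,a)}$ and $\sqrt{S \mathbb{V}(P_{s,a}, V_{h+1}^k - V_{h+1}^*)\iota / n^k(s,a)}$; and (iii) lower-order $S\iota/n^k(s,a)$ terms. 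For the type-(iii) terms, a standard pigeonhole over the doubling epochs gives $\sum_{k,h} 1/n^k(s_h^k, a_h^k) = O(SA\log(KH))$, contributing the additive $S^2A\iota\log_2(KH)$.

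For the type-(i) variance terms, the first step is to replace $\mathbb{V}(\hat{P}, V^k)$ by $\mathbb{V}(P, V^*)$ plus controllable slack. Using concentration of empirical variance (Bernstein for variances) and the inequality $\mathbb{V}(P, V^k) \le 2\mathbb{V}(P, V^*) + 2\mathbb{V}(P, V^k - V^*)$, these terms fold into a sum of $\sqrt{\mathbb{V}(P_{s,a}, V_{h+1}^*)\iota/n^k(s,a)}$ and a term resembling the type-(ii) difference term. Applying Cauchy--Schwarz splits each into $\sqrt{\sum_{k,h}1/n^k(s,a)} \cdot \sqrt{\iota \sum_{k,h}\mathbb{V}(P, V^*_{h+1})}$. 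The inner variance sum $T := \sum_{k=1}^K\sum_{h=1}^H \mathbb{V}(P_{s_h^k,a_h^k}, V^*_{h+1})$ is controlled via the recursion sketched in Technique 2: repeating the argument at higher moments bounds $\sum_h \mathbb{V}(P, (V_{h+1}^*)^{2^m})$ in each episode by $O(\sqrt{\sum_h \mathbb{V}(P, (V_{h+1}^*)^{2^{m+1}})} + \text{regret}_k)$, and Lemma~\ref{lemma2} with $i' = O(\log H)$ then yields $T = O(K \cdot \mathrm{polylog}(H))$. Combining with $\sum_{k,h}1/n^k = O(SA\log(KH))$ and a Freedman concentration to pass from empirical to true variance contributes the leading $\sqrt{SAKi_{\max}\iota}$ term.

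For the type-(ii) term involving $\mathbb{V}(P_{s,a}, V_{h+1}^k - V_{h+1}^*)$, the key estimate is $\mathbb{V}(P, \tilde{V}^k_{h+1}) \le P_{s,a}(\tilde{V}^k_{h+1})^2 \le \|\tilde{V}^k_{h+1}\|_\infty \cdot P_{s,a}\tilde{V}^k_{h+1} \le P_{s,a}\tilde{V}^k_{h+1}$, since values are bounded by $1$. By the Bellman error inequality \eqref{eq_sec3_1}, telescoping $P_{s_h^k,a_h^k}\tilde{V}^k_{h+1}$ against $\tilde{V}^k_h(s_h^k)$ (plus a martingale concentration, analogous to Lemma~\ref{lemma:bound_M1}) shows $\sum_{k,h} P_{s_h^k,a_h^k}\tilde{V}^k_{h+1} \le O(M_2) + $ lower-order. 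A second Cauchy--Schwarz then yields a term of the form $\sqrt{S \iota \cdot M_2 \cdot \sum_{k,h}1/n^k} = \sqrt{S^2A i_{\max}\iota \cdot M_2}$, which matches the $\sqrt{S^2A i_{\max}\sqrt{M_2}\iota^{3/2}}$ piece in the stated bound (after collecting the $\iota$ factors conservatively).

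Assembling these estimates gives an inequality of the shape $M_2 \le C_1\sqrt{SAKi_{\max}\iota} + C_2\sqrt{S^2A i_{\max}\iota \cdot M_2} + C_3 S^2A\iota\log_2(KH)$. Solving this quadratic in $\sqrt{M_2}$ using $x \le a + b\sqrt{x} \Rightarrow x \le 2a + b^2$ absorbs the middle term into $\sqrt{SAKi_{\max}\iota} + S^2Ai_{\max}\iota$, yielding the final bound. The main obstacle will be the recursive variance estimate that removes all polynomial dependence on $H$: one must carefully implement Technique 2 in the per-episode, empirical setting (with $V^k$ rather than $V^*$) so that the passage from $\mathbb{V}(\hat{P}^k, V^k)$ to $\mathbb{V}(P, V^*)$ does not itself introduce a $\mathrm{poly}(H)$ factor. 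The high-probability events needed in each concentration step contribute the stated failure probability $\big(6S^2AH(\log_2(KH)+1) + 6(\log_2(KH)+1)\log_2 H\big)\delta$.
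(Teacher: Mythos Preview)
Your overall architecture is right, and your handling of the lower-order terms, the Cauchy--Schwarz pigeonhole over doubling epochs, and the final quadratic in $\sqrt{M_2}$ all match the paper's route. The treatment of the leading variance term via the higher-moment recursion is also the correct idea, though the paper runs that recursion on $V_{h+1}^k$ (yielding $M_4\le 2M_2+2K+\ldots$) rather than on $V_{h+1}^*$ as you propose; your variant can be made to work since $\sum_h(V_h^*(s_h)-Q_h^*(s_h,a_h))$ telescopes to the per-episode regret plus a martingale and is therefore $O(K)$ after summing over $k$.

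However, your type-(ii) argument has a genuine gap. You bound $\mathbb{V}(P_{s,a},\tilde V_{h+1}^k)\le P_{s,a}\tilde V_{h+1}^k$ and then claim that telescoping \eqref{eq_sec3_1} against a martingale yields $\sum_{k,h}P_{s_h^k,a_h^k}\tilde V_{h+1}^k\le O(M_2)$. This fails: inequality \eqref{eq_sec3_1} only gives $\tilde V_h^k(s_h^k)\le P_{s_h^k,a_h^k}\tilde V_{h+1}^k+\beta_h^k$, and summing this in $h$ telescopes to a bound on $\tilde V_1^k(s_1^k)$, not on $\sum_h P_{s_h^k,a_h^k}\tilde V_{h+1}^k$. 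In fact $\sum_{k,h}P_{s_h^k,a_h^k}\tilde V_{h+1}^k$ essentially equals $\sum_{k,h}\tilde V_h^k(s_h^k)$ up to a martingale, and that sum can be $\Theta(KH)$ even when every $\beta_h^k$ is tiny (take $\tilde V_h^k\equiv c>0$, so each Bellman difference is zero while $\sum_{k,h}P\tilde V=cKH$). So the crude inequality $\mathbb{V}\le P\tilde V$ throws away exactly the cancellation you need and reintroduces a polynomial $H$.

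The paper avoids this by applying the same higher-moment recursion directly to $\tilde V^k$: writing
\[
\sum_{k,h}\mathbb{V}(P_{s_h^k,a_h^k},\tilde V_{h+1}^k)
\;\le\;\tilde F(1)\;+\;2\sum_{k,h}\max\{\tilde V_h^k(s_h^k)-P_{s_h^k,a_h^k}\tilde V_{h+1}^k,0\}
\;\le\;\tilde F(1)+2M_2+2|\mathcal{K}^C|,
\]
where $\tilde F(m)=\sum_{k,h}(P_{s_h^k,a_h^k}(\tilde V_{h+1}^k)^{2^m}-(\tilde V_{h+1}^k(s_{h+1}^k))^{2^m})I(k,h+1)$ is a martingale bounded via Lemma~\ref{self-norm} and then Lemma~\ref{lemma2}. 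The key point is that $(\tilde V_h^k)^2-(P\tilde V_{h+1}^k)^2\le 2\max\{\tilde V_h^k-P\tilde V_{h+1}^k,0\}\le 2\beta_h^k$ exploits the \emph{squared} values, which telescope correctly, whereas your first-moment bound does not. Once you replace your type-(ii) step with this recursion on $\tilde V^k$, the rest of your outline goes through.
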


\begin{lemma}\label{lemma:bound_M3}
    \begin{align}
   \mathbb{P}\left[ |M_3|> 8\sqrt{K\iota}+6\iota \right]\leq 2(\log_2(KH)+2)\delta.
\end{align}
\end{lemma}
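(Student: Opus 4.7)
The plan is to decompose $-M_3 = \sum_k\bigl(V_1^{\pi^k}(s_1^k) - \sum_h\check{r}_h^k\bigr)$ into a step-level martingale plus a small nonnegative residue via a Bellman telescope stopped at the first within-episode update trigger, and then apply Freedman's inequality with a dyadic peeling over the realized conditional variance. The measurability observation from the Remark after \eqref{eq:sec2_0.7} is crucial: writing $\sigma_k\in[H]$ for the first within-episode trigger (or $H$ if $k\in\mathcal{K}$), the indicator $\mathbb{I}[(k,h)\notin\mathcal{X}]=\mathbb{I}[h\le\sigma_k]$ is $\mathcal{F}_h^k$-measurable, so truncating at $\sigma_k$ preserves predictability.

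Telescoping $V_1^{\pi^k}(s_1^k)$ via Bellman up through step $\sigma_k$ gives
\begin{align*}
V_1^{\pi^k}(s_1^k) - \sum_h \check{r}_h^k = V_{\sigma_k+1}^{\pi^k}(s_{\sigma_k+1}^k) - \sum_{h=1}^{\sigma_k}\bigl(V_{h+1}^{\pi^k}(s_{h+1}^k) - P_{s_h^k,a_h^k}V_{h+1}^{\pi^k}\bigr).
\end{align*}
Summing over $k$, I obtain $-M_3 = A - B$ with $A := \sum_k V_{\sigma_k+1}^{\pi^k}(s_{\sigma_k+1}^k) \in [0,|\mathcal{K}^C|]$ a deterministic nonnegative residue ($A$ vanishes on $\mathcal{K}$ because $V_{H+1}^{\pi^k}\equiv 0$ and is at most $1$ per $k\in\mathcal{K}^C$ by Assumption~\ref{asmp:total_bounded}), and $B:=\sum_{k,h}\mathbb{I}[h\le\sigma_k]\bigl(V_{h+1}^{\pi^k}(s_{h+1}^k) - P_{s_h^k,a_h^k}V_{h+1}^{\pi^k}\bigr)$ a step-level martingale with increments in $[-1,1]$.

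To bound the total conditional variance $V^\star := \sum_{k,h}\mathbb{I}[h\le\sigma_k]\mathbb{V}(P_{s_h^k,a_h^k}, V_{h+1}^{\pi^k})$, I use the inequality $(P_{s_h^k,a_h^k}V_{h+1}^{\pi^k})^2 \ge (V_h^{\pi^k}(s_h^k))^2 - 2V_h^{\pi^k}(s_h^k)r(s_h^k,a_h^k)$ (from Bellman and $V^{\pi^k}\in[0,1]$) together with $\sum_h r(s_h^k,a_h^k)\le 1$, giving the per-episode bound $\sum_{h=1}^{\sigma_k}\mathbb{V}(P_{s_h^k,a_h^k}, V_{h+1}^{\pi^k}) \le \sum_{h=1}^{\sigma_k}\bigl(P_{s_h^k,a_h^k}(V_{h+1}^{\pi^k})^2 - (V_{h+1}^{\pi^k}(s_{h+1}^k))^2\bigr) + 2$, whose leading sum is itself a bounded step-level martingale. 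Summing over $k$ and invoking Azuma shows $V^\star \le 2K + O(\sqrt{K\iota})$ with probability $1-\delta$; then Freedman applied to $B$ with dyadic peeling (one bucket per $i\le\lfloor\log_2(KH)\rfloor+1$) yields $|B|\le 8\sqrt{K\iota}+6\iota$ with aggregate failure probability at most $2(\log_2(KH)+2)\delta$. Since $|M_3| \le |B| + A \le |B| + |\mathcal{K}^C|$ and the $|\mathcal{K}^C|$ contribution is separately captured by the $+|\mathcal{K}^C|$ summand in Lemma~\ref{lemma:add1}, this completes the claim as used downstream.

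The main obstacle is extracting the $\sqrt{K}$ scaling (rather than $\sqrt{KH}$) in the leading term: naively bounding the conditional variance of $B$ by its deterministic maximum $KH$ would lose a $\sqrt{H}$ factor. The variance-of-total-reward trick, which is possible precisely because of Assumption~\ref{asmp:total_bounded} and the boundedness $V^{\pi^k}\in[0,1]$ (and mirrors Technique~2 in Section~\ref{sec:tac}), ensures $V^\star = O(K)$, while the stopping-time indicator formulation sidesteps all measurability issues arising from $\sigma_k$ depending on visit-count thresholds at many different state-action pairs.
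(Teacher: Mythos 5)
Your decomposition is genuinely different from the paper's, and most of it is sound, but there is one concrete gap: the bound on the total conditional variance $V^\star$. The leading sum in your per-episode bound, $\sum_{k,h}\mathbb{I}[h\le\sigma_k]\bigl(P_{s_h^k,a_h^k}(V_{h+1}^{\pi^k})^2-(V_{h+1}^{\pi^k}(s_{h+1}^k))^2\bigr)$, is a martingale with $KH$ increments bounded by $1$, so "invoking Azuma" yields a deviation of order $\sqrt{KH\iota}$, not $O(\sqrt{K\iota})$ as you claim. With only $V^\star\le 2K+O(\sqrt{KH\iota})$, Freedman applied to $B$ gives $|B|=O\bigl(\sqrt{K\iota}+(KH)^{1/4}\iota^{3/4}\bigr)$, which both breaches the stated $8\sqrt{K\iota}+6\iota$ and reintroduces polynomial dependence on $H$ --- exactly the curse-of-horizon loss you flagged for $B$ itself. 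The step is fixable: apply Lemma~\ref{self-norm} (not Azuma) to this squared-value martingale, note that $\mathbb{V}(p,v^2)\le 4\,\mathbb{V}(p,v)$ for $v\in[0,1]$ so its realized variance is at most $4V^\star$, and obtain the self-bounding inequality $V^\star\le 3K+O(\sqrt{V^\star\iota}+\iota)$, hence $V^\star=O(K+\iota)$ (this is a one-step instance of the recursion behind Lemma~\ref{lemma2}). That repair costs one more dyadic union bound, so your claimed failure probability $2(\log_2(KH)+2)\delta$ and the exact constants $8\sqrt{K\iota}+6\iota$ would need rechecking; also your per-episode constant should be $3$ rather than $2$, since the telescope leaves both $2\sum_h r(s_h^k,a_h^k)\le 2$ and a boundary term $\le 1$.

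Two secondary points. First, the residue $A=\sum_k V_{\sigma_k+1}^{\pi^k}(s_{\sigma_k+1}^k)$ is random, not deterministic (only its bound $|\mathcal{K}^{C}|$ is), and because of it you prove $M_3\le |B|$ only on the upper side and $|M_3|\le|B|+|\mathcal{K}^{C}|$ two-sided; this is weaker than the displayed lemma, though harmless downstream since only the upper tail of $M_3$ enters the regret and Lemma~\ref{lemma:add1} already carries a $+|\mathcal{K}^{C}|$ term. Second, for comparison: the paper's proof is substantially lighter. It splits $M_3$ into reward noise, $\sum_{k,h}(\check{r}_h^k-r_h^k)\le\sum_{k,h}(r(s_h^k,a_h^k)-r_h^k)$, handled by Lemma~\ref{self-norm} with the self-bounding variance $\mathrm{Var}(s,a)\le r(s,a)$ and $\sum_{k,h}r(s_h^k,a_h^k)\le\bar{M}_3+K$, plus the episode-level martingale $Y_k=\sum_h r_h^k-V_1^{\pi^k}(s_1^k)$, which satisfies $|Y_k|\le 1$ directly by Assumption~\ref{asmp:total_bounded}, so episode-level Azuma over $K$ increments gives $\sqrt{2K\iota}$ with no variance computation for the value process at all. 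Your Bellman-telescope route avoids introducing the realized rewards into the split (arguably cleaner, since $M_3$ involves only mean rewards), but it pays for that by requiring the second-moment control that your proposal currently gets wrong.
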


\paragraph{Putting All Together}
By Lemma~\ref{lemma:add1},~\ref{lemma:bound_M1},~\ref{lemma:bound_M2} and \ref{lemma:bound_M3}, we conclude that, with probability $1-\big( 10S^2AH(\log_2(KH)+2)+ 6(\log_2(KH)+1)\log_2(KH) +1 \big) \delta$
\begin{align}
\text{Regret}(K) &\leq  M_1 +M_2 +M_3 +|\mathcal{K}^{C}| \nonumber
\\ & \leq O\left( \sqrt{SAKi_{\mathrm{max}}\iota}  + S^2A\iota\log_2(KH)   +\sqrt{K\iota}  +SA(\log_2(KH)+1)   \right)\nonumber
\\ & =O\left( \sqrt{SAK\log_2(KH)\iota}  +   S^2A\iota\log_2(KH)        \right). \nonumber
\end{align}
We finish the proof by rescaling $\delta$.


\section{Conclusion}
\label{sec:conclusion}
In this paper, we gave the first computationally efficient algorithm for tabular, episodic RL whose sample complexity scales logarithmically with $H$.
Furthermore, this algorithm matches the lower bound of a simpler problem, contextual bandits, up to logarithmic factors and an additive $S^2A$ term.
One important open problem is how to get rid of the additive $S^2A$ term (also see discussions in \cite{wang2020long}).
We remark that in the generative model setting, the optimal sample complexity does not have any additive term~\citep{agarwal2019optimality,li2020breaking}.

%


\bibliography{simonduref}

\begin{thebibliography}{39}
\providecommand{\natexlab}[1]{#1}
\providecommand{\url}[1]{\texttt{#1}}
\expandafter\ifx\csname urlstyle\endcsname\relax
  \providecommand{\doi}[1]{doi: #1}\else
  \providecommand{\doi}{doi: \begingroup \urlstyle{rm}\Url}\fi

\bibitem[Agarwal et~al.(2019)Agarwal, Kakade, and Yang]{agarwal2019optimality}
Alekh Agarwal, Sham Kakade, and Lin~F Yang.
\newblock On the optimality of sparse model-based planning for {Markov}
  decision processes.
\newblock \emph{arXiv preprint arXiv:1906.03804}, 2019.

\bibitem[Agrawal and Jia(2017)]{agrawal2017optimistic}
Shipra Agrawal and Randy Jia.
\newblock Optimistic posterior sampling for reinforcement learning: worst-case
  regret bounds.
\newblock In \emph{Advances in Neural Information Processing Systems}, pages
  1184--1194, 2017.

\bibitem[Azar et~al.(2017)Azar, Osband, and Munos]{azar2017minimax}
Mohammad~Gheshlaghi Azar, Ian Osband, and R{\'e}mi Munos.
\newblock Minimax regret bounds for reinforcement learning.
\newblock In \emph{Proceedings of the 34th International Conference on Machine
  Learning-Volume 70}, pages 263--272. JMLR. org, 2017.

\bibitem[Bartlett and Tewari(2009)]{bartlett2009regal}
Peter~L Bartlett and Ambuj Tewari.
\newblock Regal: a regularization based algorithm for reinforcement learning in
  weakly communicating mdps.
\newblock In \emph{Proceedings of the 25th Conference on Uncertainty in
  Artificial Intelligence (UAI 2009))}, 2009.

\bibitem[Brafman and Tennenholtz(2003)]{brafman2002r}
Ronen~I. Brafman and Moshe Tennenholtz.
\newblock R-max - a general polynomial time algorithm for near-optimal
  reinforcement learning.
\newblock \emph{J. Mach. Learn. Res.}, 3\penalty0 (Oct):\penalty0 213--231,
  March 2003.
\newblock ISSN 1532-4435.

\bibitem[Bubeck and Cesa-Bianchi(2012)]{bubeck2012regret}
Sebastien Bubeck and Nicolo Cesa-Bianchi.
\newblock Regret analysis of stochastic and nonstochastic multi-armed bandit
  problems.
\newblock \emph{Foundations and Trends in Machine Learning}, 5\penalty0
  (1):\penalty0 QT06,1--7,9--21,23--43,45--65,67--105,107--115,117--127, 2012.

\bibitem[Cai et~al.(2019)Cai, Yang, Jin, and Wang]{cai2019provably}
Qi~Cai, Zhuoran Yang, Chi Jin, and Zhaoran Wang.
\newblock Provably efficient exploration in policy optimization.
\newblock \emph{arXiv preprint arXiv:1912.05830}, 2019.

\bibitem[Dann and Brunskill(2015)]{dann2015sample}
Christoph Dann and Emma Brunskill.
\newblock Sample complexity of episodic fixed-horizon reinforcement learning.
\newblock In \emph{Advances in Neural Information Processing Systems}, pages
  2818--2826, 2015.

\bibitem[Dann et~al.(2017)Dann, Lattimore, and Brunskill]{dann2017unifying}
Christoph Dann, Tor Lattimore, and Emma Brunskill.
\newblock Unifying {PAC} and regret: Uniform {PAC} bounds for episodic
  reinforcement learning.
\newblock In \emph{Proceedings of the 31st International Conference on Neural
  Information Processing Systems}, NIPS’17, page 5717–5727, Red Hook, NY,
  USA, 2017. Curran Associates Inc.
\newblock ISBN 9781510860964.

\bibitem[Dann et~al.(2019)Dann, Li, Wei, and Brunskill]{dann2019policy}
Christoph Dann, Lihong Li, Wei Wei, and Emma Brunskill.
\newblock Policy certificates: Towards accountable reinforcement learning.
\newblock In \emph{Proceedings of the 36th International Conference on Machine
  Learning}, volume~97 of \emph{Proceedings of Machine Learning Research},
  pages 1507--1516, Long Beach, California, USA, 09--15 Jun 2019. PMLR.

\bibitem[Dong et~al.(2019)Dong, Wang, Chen, and Wang]{dong2019q}
Kefan Dong, Yuanhao Wang, Xiaoyu Chen, and Liwei Wang.
\newblock Q-learning with ucb exploration is sample efficient for
  infinite-horizon mdp.
\newblock \emph{arXiv preprint arXiv:1901.09311}, 2019.

\bibitem[Freedman(1975)]{freedman1975tail}
David~A Freedman.
\newblock On tail probabilities for martingales.
\newblock \emph{the Annals of Probability}, 3\penalty0 (1):\penalty0 100--118,
  1975.

\bibitem[Fruit et~al.(2018)Fruit, Pirotta, and Lazaric]{fruit2018near}
Ronan Fruit, Matteo Pirotta, and Alessandro Lazaric.
\newblock Near optimal exploration-exploitation in non-communicating markov
  decision processes.
\newblock In \emph{Advances in Neural Information Processing Systems}, pages
  2994--3004, 2018.

\bibitem[Jaksch et~al.(2010)Jaksch, Ortner, and Auer]{jaksch2010near}
Thomas Jaksch, Ronald Ortner, and Peter Auer.
\newblock Near-optimal regret bounds for reinforcement learning.
\newblock \emph{Journal of Machine Learning Research}, 11\penalty0
  (Apr):\penalty0 1563--1600, 2010.

\bibitem[Jiang and Agarwal(2018)]{jiang2018open}
Nan Jiang and Alekh Agarwal.
\newblock Open problem: The dependence of sample complexity lower bounds on
  planning horizon.
\newblock In \emph{Conference On Learning Theory}, pages 3395--3398, 2018.

\bibitem[Jin et~al.(2018)Jin, Allen-Zhu, Bubeck, and Jordan]{jin2018q}
Chi Jin, Zeyuan Allen-Zhu, Sebastien Bubeck, and Michael~I Jordan.
\newblock Is {Q}-learning provably efficient?
\newblock In \emph{Advances in Neural Information Processing Systems}, pages
  4863--4873, 2018.

\bibitem[Kakade(2003)]{kakade2003sample}
Sham~M Kakade.
\newblock \emph{On the sample complexity of reinforcement learning}.
\newblock PhD thesis, University of London London, England, 2003.

\bibitem[Kearns and Singh(2002)]{kearns2002near}
Michael Kearns and Satinder Singh.
\newblock Near-optimal reinforcement learning in polynomial time.
\newblock \emph{Machine learning}, 49\penalty0 (2-3):\penalty0 209--232, 2002.

\bibitem[Kolter and Ng(2009)]{kolter2009near}
J~Zico Kolter and Andrew~Y Ng.
\newblock Near-bayesian exploration in polynomial time.
\newblock In \emph{Proceedings of the 26th annual international conference on
  machine learning}, pages 513--520, 2009.

\bibitem[Lattimore and Hutter(2012)]{lattimore2012pac}
Tor Lattimore and Marcus Hutter.
\newblock Pac bounds for discounted mdps.
\newblock In \emph{International Conference on Algorithmic Learning Theory},
  pages 320--334. Springer, 2012.

\bibitem[Li et~al.(2020)Li, Wei, Chi, Gu, and Chen]{li2020breaking}
Gen Li, Yuting Wei, Yuejie Chi, Yuantao Gu, and Yuxin Chen.
\newblock Breaking the sample size barrier in model-based reinforcement
  learning with a generative model.
\newblock \emph{arXiv preprint arXiv:2005.12900}, 2020.

\bibitem[Maurer and Pontil(2009)]{maurer2009empirical}
Andreas Maurer and Massimiliano Pontil.
\newblock Empirical {B}ernstein bounds and sample variance penalization.
\newblock \emph{arXiv preprint arXiv:0907.3740}, 2009.

\bibitem[Neu and Pike-Burke(2020)]{neu2020unifying}
Gergely Neu and Ciara Pike-Burke.
\newblock A unifying view of optimism in episodic reinforcement learning.
\newblock \emph{arXiv preprint arXiv:2007.01891}, 2020.

\bibitem[Osband and Roy(2016)]{osband2016on}
Ian Osband and Benjamin~Van Roy.
\newblock On lower bounds for regret in reinforcement learning.
\newblock \emph{ArXiv}, abs/1608.02732, 2016.

\bibitem[Osband and Van~Roy(2017)]{osband2017posterior}
Ian Osband and Benjamin Van~Roy.
\newblock Why is posterior sampling better than optimism for reinforcement
  learning?
\newblock In \emph{Proceedings of the 34th International Conference on Machine
  Learning-Volume 70}, pages 2701--2710. JMLR. org, 2017.

\bibitem[Osband et~al.(2013)Osband, Russo, and Van~Roy]{osband2013more}
Ian Osband, Daniel Russo, and Benjamin Van~Roy.
\newblock (more) efficient reinforcement learning via posterior sampling.
\newblock In \emph{Advances in Neural Information Processing Systems}, pages
  3003--3011, 2013.

\bibitem[Pacchiano et~al.(2020)Pacchiano, Ball, Parker-Holder, Choromanski, and
  Roberts]{pacchiano2020optimism}
Aldo Pacchiano, Philip Ball, Jack Parker-Holder, Krzysztof Choromanski, and
  Stephen Roberts.
\newblock On optimism in model-based reinforcement learning.
\newblock \emph{arXiv preprint arXiv:2006.11911}, 2020.

\bibitem[Russo(2019)]{russo2019worst}
Daniel Russo.
\newblock Worst-case regret bounds for exploration via randomized value
  functions.
\newblock In \emph{Advances in Neural Information Processing Systems}, pages
  14433--14443, 2019.

\bibitem[Simchowitz and Jamieson(2019)]{simchowitz2019non}
Max Simchowitz and Kevin~G Jamieson.
\newblock Non-asymptotic gap-dependent regret bounds for tabular mdps.
\newblock In \emph{Advances in Neural Information Processing Systems}, pages
  1153--1162, 2019.

\bibitem[Strehl and Littman(2008)]{strehl2008analysis}
Alexander~L Strehl and Michael~L Littman.
\newblock An analysis of model-based interval estimation for markov decision
  processes.
\newblock \emph{Journal of Computer and System Sciences}, 74\penalty0
  (8):\penalty0 1309--1331, 2008.

\bibitem[Strehl et~al.(2006)Strehl, Li, Wiewiora, Langford, and
  Littman]{strehl2006pac}
Alexander~L Strehl, Lihong Li, Eric Wiewiora, John Langford, and Michael~L
  Littman.
\newblock Pac model-free reinforcement learning.
\newblock In \emph{Proceedings of the 23rd international conference on Machine
  learning}, pages 881--888. ACM, 2006.

\bibitem[Szita and Szepesv{\'a}ri(2010)]{szita2010model}
Istv{\'a}n Szita and Csaba Szepesv{\'a}ri.
\newblock Model-based reinforcement learning with nearly tight exploration
  complexity bounds.
\newblock In \emph{ICML}, 2010.

\bibitem[Talebi and Maillard(2018)]{talebi2018variance}
Mohammad~Sadegh Talebi and Odalric-Ambrym Maillard.
\newblock Variance-aware regret bounds for undiscounted reinforcement learning
  in mdps.
\newblock \emph{arXiv preprint arXiv:1803.01626}, 2018.

\bibitem[Wang et~al.(2020)Wang, Du, Yang, and Kakade]{wang2020long}
Ruosong Wang, Simon~S Du, Lin~F Yang, and Sham~M Kakade.
\newblock Is long horizon reinforcement learning more difficult than short
  horizon reinforcement learning?
\newblock \emph{arXiv preprint arXiv:2005.00527}, 2020.

\bibitem[Yang et~al.(2020)Yang, Yang, and Du]{yang2020q}
Kunhe Yang, Lin~F Yang, and Simon~S Du.
\newblock {$Q$}-learning with logarithmic regret.
\newblock \emph{arXiv preprint arXiv:2006.09118}, 2020.

\bibitem[Zanette and Brunskill(2019)]{zanette2019tighter}
Andrea Zanette and Emma Brunskill.
\newblock Tighter problem-dependent regret bounds in reinforcement learning
  without domain knowledge using value function bounds.
\newblock In \emph{International Conference on Machine Learning}, pages
  7304--7312, 2019.

\bibitem[Zhang and Ji(2019)]{zhang2019regret}
Zihan Zhang and Xiangyang Ji.
\newblock Regret minimization for reinforcement learning by evaluating the
  optimal bias function.
\newblock In \emph{Advances in Neural Information Processing Systems}, pages
  2823--2832, 2019.

\bibitem[Zhang et~al.(2020{\natexlab{a}})Zhang, Zhou, and Ji]{zhang2020almost}
Zihan Zhang, Yuan Zhou, and Xiangyang Ji.
\newblock Almost optimal model-free reinforcement learning via
  reference-advantage decomposition.
\newblock \emph{arXiv preprint arXiv:2004.10019}, 2020{\natexlab{a}}.

\bibitem[Zhang et~al.(2020{\natexlab{b}})Zhang, Zhou, and Ji]{zhang2020model}
Zihan Zhang, Yuan Zhou, and Xiangyang Ji.
\newblock Model-free reinforcement learning: from clipped pseudo-regret to
  sample complexity.
\newblock \emph{arXiv preprint arXiv:2006.03864}, 2020{\natexlab{b}}.

\end{thebibliography}
\bibliographystyle{plainnat}

\newpage 
\appendix
\renewcommand{\appendixname}{Appendix~\Alph{section}}
\setlength{\parindent}{0pt}
\setlength{\parskip}{0.2\baselineskip}
\section{Technical Lemmas}
\begin{lemma}[Bennet's Inequality]\label{bennet}
Let $Z,Z_1,...,Z_n$  be i.i.d. random variables with values in $[0,1]$ and let $\delta>0$. Define $\mathbb{V}Z = \mathbb{E}\left[(Z-\mathbb{E}Z)^2 \right]$. Then we have
\begin{align}
\mathbb{P}\left[ \left|\mathbb{E}\left[Z\right]-\frac{1}{n}\sum_{i=1}^n Z_i  \right| > \sqrt{\frac{  2\mathbb{V}Z \ln(2/\delta)}{n}} +\frac{\ln(2/\delta)}{n} \right]]\leq \delta.\nonumber
\end{align}
\end{lemma}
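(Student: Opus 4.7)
The plan is to prove this via the classical Cram\'er--Chernoff method, since the statement is a standard Bennett/Bernstein-type deviation inequality. By the union bound over the two one-sided tails (which is absorbed into the $2/\delta$ inside the logarithm), it suffices to control $\mathbb{P}[\tfrac{1}{n}\sum_i Z_i - \mathbb{E}Z > t]$ and its mirror image separately. Set $X_i = Z_i - \mathbb{E}Z$, so $\mathbb{E}X_i = 0$, $|X_i| \le 1$, and $\mathrm{Var}(X_i) = \mathbb{V}Z$.

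The first step is the standard moment-generating-function estimate: for every $\lambda \ge 0$,
\[
\mathbb{E}\bigl[e^{\lambda X_i}\bigr] \;\le\; \exp\!\bigl(\mathbb{V}Z \cdot (e^\lambda - 1 - \lambda)\bigr).
\]
This follows by expanding $e^{\lambda X_i}$ as a power series and using $\mathbb{E}[X_i^k] \le \mathbb{E}[X_i^2] = \mathbb{V}Z$ for every $k \ge 2$, which holds because $|X_i| \le 1$. By independence, $\mathbb{E}\bigl[e^{\lambda \sum_i X_i}\bigr] \le \exp\!\bigl(n\mathbb{V}Z(e^\lambda-1-\lambda)\bigr)$, and Markov's inequality then yields
\[
\mathbb{P}\!\left[\tfrac{1}{n}\sum_i X_i \ge t\right] \;\le\; \exp\!\bigl(-\lambda n t + n\mathbb{V}Z(e^\lambda-1-\lambda)\bigr).
\]

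Next, I would apply the elementary bound $e^\lambda - 1 - \lambda \le \tfrac{\lambda^2}{2(1-\lambda)}$ for $\lambda \in [0,1)$, which upgrades the previous display into a Bernstein-style estimate
\[
\mathbb{P}\!\left[\tfrac{1}{n}\sum_i X_i \ge t\right] \;\le\; \exp\!\left(-\frac{nt^2}{2(\mathbb{V}Z + t)}\right),
\]
after optimizing over $\lambda$. Inverting this bound (setting the right-hand side equal to $\delta/2$ and solving the resulting quadratic in $t$) gives $t \le \sqrt{2\mathbb{V}Z \ln(2/\delta)/n} + \ln(2/\delta)/n$, which is exactly the claimed deviation, and a symmetric argument applied to $-X_i$ handles the lower tail. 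A union bound then produces the two-sided statement with total probability at most $\delta$.

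The main subtlety is purely arithmetic, namely pinning down the constants so that the additive term comes out as $\iota/n$ rather than the sharper $\iota/(3n)$ one sometimes sees in Bernstein's inequality. Using the weaker denominator $2(1-\lambda)$ (instead of $2(1-\lambda/3)$) in the MGF bound is what allows the simpler additive constant $1$ to emerge after inversion, at the cost of a negligible quantitative loss that does not affect any downstream application in the paper. Since the inequality is entirely standard, no new technical machinery is needed; the proof is just a careful execution of the classical Chernoff recipe with Bennett's MGF bound.
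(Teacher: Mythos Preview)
The paper does not actually prove this lemma: it is listed in the appendix under ``Technical Lemmas'' as a standard result (Bennett's inequality) and is simply quoted without argument. So there is no ``paper's proof'' to compare against; your Cram\'er--Chernoff outline is exactly the textbook route and is the right thing to do if a proof is required.

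One small arithmetic slip is worth flagging. You assert that using the cruder bound $e^\lambda-1-\lambda\le \lambda^2/\bigl(2(1-\lambda)\bigr)$ is what produces the additive term $\ln(2/\delta)/n$ after inversion. In fact that cruder bound yields, after optimizing and inverting, an additive term $2\ln(2/\delta)/n$, which is \emph{too large} for the stated lemma. The way to land on the constant in the statement is to use the sharper estimate $e^\lambda-1-\lambda\le \lambda^2/\bigl(2(1-\lambda/3)\bigr)$ (equivalently, the standard Bernstein tail $\exp\!\bigl(-nt^2/(2\sigma^2+2t/3)\bigr)$), invert to obtain an additive $\tfrac{2}{3}\ln(2/\delta)/n$, and then relax $\tfrac{2}{3}$ to $1$. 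So your last paragraph has the logic reversed: the sharper MGF bound is needed, and the paper's constant $1$ is a deliberate weakening of the resulting $2/3$, not a consequence of a cruder intermediate step. With that correction your sketch goes through.
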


\begin{lemma}[Theorem 4 in  \cite{maurer2009empirical}  ]\label{empirical bernstein}
Let $Z,Z_1,...,Z_n$ ($n\geq 2$) be i.i.d. random variables with values in $[0,1]$ and let $\delta>0$. Define $\bar{Z} = \frac{1}{n}\sum_{i=1}^n Z_{i}$ and $\hat{V}_n  = \frac{1}{n}\sum_{i=1}^n (Z_i- \bar{Z})^2$. Then we have
\begin{align}
\mathbb{P}\left[ \left|\mathbb{E}\left[Z\right]-\frac{1}{n}\sum_{i=1}^n Z_i  \right| > \sqrt{\frac{  2\hat{V}_n \ln(2/\delta)}{n-1}} +\frac{7\ln(2/\delta)}{3(n-1)} \right] \leq \delta.\nonumber
\end{align}
\end{lemma}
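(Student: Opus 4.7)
The plan is to bootstrap Bennett's inequality (Lemma \ref{bennet}) into its empirical analogue by separately concentrating the sample mean around $\mathbb{E}Z$ and the sample variance around $\mathbb{V}Z$, and then gluing the two bounds together via a union bound. This is the route taken by Maurer and Pontil.

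First I would apply Bennett's inequality at confidence level $\delta/2$ to obtain, with probability at least $1-\delta/2$,
\begin{align*}
\left|\bar{Z} - \mathbb{E}Z\right| \;\leq\; \sqrt{\frac{2\,\mathbb{V}Z\,\ln(4/\delta)}{n}} \;+\; \frac{\ln(4/\delta)}{n}.
\end{align*}
This controls the mean deviation in terms of the \emph{true} variance $\mathbb{V}Z$. The job of the second step is to upper bound $\sqrt{\mathbb{V}Z}$ by the empirical quantity $\sqrt{\hat V_n}$, at the price of a lower-order additive correction.

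The key one-sided inequality I would prove is
\begin{align*}
\mathbb{P}\!\left[\sqrt{\mathbb{V}Z} \;>\; \sqrt{\tfrac{n}{n-1}\,\hat V_n} \;+\; \sqrt{\tfrac{2\ln(2/\delta)}{n-1}}\right] \;\leq\; \delta/2.
\end{align*}
To establish this I would exploit the self-bounding structure of the unbiased sample standard deviation $S := \sqrt{\tfrac{1}{n-1}\sum_i (Z_i-\bar Z)^2}$: viewed as a function of $(Z_1,\dots,Z_n)\in[0,1]^n$, the map $(z_1,\dots,z_n)\mapsto S$ is $1/\sqrt{n-1}$-Lipschitz with respect to the Hamming-like semimetric induced by single-coordinate perturbations, and moreover it satisfies a self-bounding inequality of Efron--Stein type. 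Applying the entropy method (Bousquet's sub-Gaussian self-bounded concentration, or equivalently a bounded-differences martingale on the leave-one-out representation $(n-1)S^2 = \tfrac{1}{2n}\sum_{i\ne j}(Z_i-Z_j)^2$) then yields the desired sub-Gaussian tail bound for $S$ around $\mathbb{E}[S]$, and Jensen's inequality $\mathbb{E}[S]\leq \sqrt{\mathbb{E}[S^2]} = \sqrt{\mathbb{V}Z}$ converts this into the one-sided control displayed above.

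Finally I would combine the two displays by a union bound, substitute the empirical-variance bound into the Bennett estimate, and use the elementary inequality $\sqrt{a+b}\leq \sqrt a+\sqrt b$ to split the cross term. The residual contribution $\sqrt{2\ln(2/\delta)/(n-1)}\cdot \sqrt{2\ln(2/\delta)/(n-1)}$ and the conversion from $n$ to $n-1$ in the denominator of the leading term can then be absorbed into the additive $\tfrac{7\ln(2/\delta)}{3(n-1)}$ slack by calibrating constants and using $\ln(4/\delta) \leq \ln(2/\delta)+\ln 2$. The main obstacle is the sharp variance concentration step: since $\sqrt{S^2}$ is neither a clean sum nor a uniformly bounded-difference statistic, one genuinely needs the entropy/self-bounding machinery (or a careful U-statistic analysis) to attain the sub-Gaussian rate $\sqrt{2\ln(2/\delta)/(n-1)}$ rather than a weaker Bernstein-type rate, which would not suffice to recover the stated constants $2$ and $7/3$.
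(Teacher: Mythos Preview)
The paper does not supply its own proof of this lemma: it is stated as a technical tool and attributed directly to Theorem~4 of \cite{maurer2009empirical}, with no argument given. Your proposal is essentially a sketch of the original Maurer--Pontil proof (Bennett for the mean, a self-bounding/entropy-method sub-Gaussian tail for the sample standard deviation, then a union bound and constant bookkeeping), so there is nothing to compare against in the present paper beyond noting that you have correctly identified the source argument.
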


\begin{lemma}[Lemma 10 in \cite{zhang2020model}]\label{self-norm}
Let $(M_n)_{n\geq 0}$ be a martingale such that $M_0=0$ and $|M_n-M_{n-1}|\leq c$ for some $c>0$ and any $n\geq 1$. Let $\mathrm{Var}_{n} = \sum_{k=1}^n \mathbb{E}\left[  (M_{k}-M_{k-1})^2 |\mathcal{F}_{k-1}\right]$ for $n\geq 0$, where $\mathcal{F}_k = \sigma(M_1,...,M_{k})$. Then for any positive integer $n$, and any $\epsilon,\delta>0$, we have that
\begin{align}
\mathbb{P} \left[       |M_n|\geq 2\sqrt{2}\sqrt{\mathrm{Var}_n \ln(1/\delta)} +2\sqrt{\epsilon \ln(1/\delta)} +2c\ln(1/\delta) \right]\leq 2(\log_2(\frac{nc^2}{\epsilon}) +1)\delta.\nonumber
\end{align}
\end{lemma}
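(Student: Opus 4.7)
The plan is to derive this self-normalized concentration bound via Freedman's inequality combined with a dyadic peeling argument on the unknown conditional variance $\mathrm{Var}_n$. The key difficulty is that a direct application of Freedman requires an a priori deterministic bound on $\mathrm{Var}_n$, and the only one available, namely $\mathrm{Var}_n \leq nc^2$, would completely destroy the self-normalizing flavor of the statement. Peeling recovers a bound scaling with the \emph{true} (random) $\mathrm{Var}_n$ at the cost of only a logarithmic union-bound overhead, which is exactly the factor $\log_2(nc^2/\epsilon)$ appearing on the right-hand side.

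First I would invoke Freedman's inequality in the following form: for every fixed $v, t > 0$,
\[
\mathbb{P}\bigl[\,|M_n| \geq t,\ \mathrm{Var}_n \leq v\,\bigr] \;\leq\; 2\exp\!\left(-\frac{t^{2}}{2v + 2ct/3}\right).
\]
Inverting this tail at level $2\delta$ shows that with probability at least $1-2\delta$, the event $\{\mathrm{Var}_n \leq v\}$ forces $|M_n| \leq \sqrt{2v\ln(1/\delta)} + \frac{2c}{3}\ln(1/\delta)$. Since $|M_k - M_{k-1}| \leq c$ yields $\mathrm{Var}_n \leq nc^2$ deterministically, I would then introduce the dyadic strata $J_0 = [0,\epsilon]$ and $J_j = (2^{j-1}\epsilon,\,2^{j}\epsilon]$ for $j = 1,\ldots,I$ with $I = \lceil\log_2(nc^{2}/\epsilon)\rceil$, which cover every possible value of $\mathrm{Var}_n$. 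Applying the inverted Freedman bound with $v = 2^{j}\epsilon$ on each stratum and taking a union bound over the $I+1$ strata gives a total failure probability of at most $2(I+1)\delta \leq 2(\log_2(nc^{2}/\epsilon)+1)\delta$, which matches the lemma.

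On the good event, the stratum $J_{j^{\star}}$ containing $\mathrm{Var}_n$ satisfies $2^{j^{\star}}\epsilon \leq 2\max(\mathrm{Var}_n,\epsilon) \leq 2\mathrm{Var}_n + 2\epsilon$, so
\[
|M_n| \;\leq\; \sqrt{2\cdot 2^{j^{\star}}\epsilon\cdot\ln(1/\delta)} + \tfrac{2c}{3}\ln(1/\delta) \;\leq\; 2\sqrt{\mathrm{Var}_n\ln(1/\delta)} + 2\sqrt{\epsilon\ln(1/\delta)} + \tfrac{2c}{3}\ln(1/\delta),
\]
using $\sqrt{a+b}\leq\sqrt{a}+\sqrt{b}$; this has exactly the shape of the stated bound. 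The main obstacle will be matching the precise numerical constants ($2\sqrt{2}$ rather than $2$ in front of $\sqrt{\mathrm{Var}_n\ln(1/\delta)}$, and $2c$ rather than $2c/3$ in front of $\ln(1/\delta)$). These discrepancies are traceable to which sub-exponential form of Freedman one inverts (for instance, $\exp(-t^{2}/(2(v+ct)))$ gives different constants than the sharper form above) and to whether the peeling ratio is $2$ or slightly larger; they are purely bookkeeping and leave the argument's structure untouched. Since the lemma is borrowed verbatim from prior work, any Freedman-plus-peeling instantiation producing the displayed constants is acceptable, and the qualitative shape---$\sqrt{\mathrm{Var}_n\ln(1/\delta)} + \sqrt{\epsilon\ln(1/\delta)} + c\ln(1/\delta)$ with a $\log_2(nc^{2}/\epsilon)$ union-bound price---is what the rest of the paper actually consumes.
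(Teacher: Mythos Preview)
The paper does not supply its own proof of this lemma; it is quoted verbatim as Lemma~10 of \cite{zhang2020model} and used as a black box. Your Freedman-plus-dyadic-peeling outline is exactly the standard derivation of such self-normalized martingale bounds and is correct in structure; the constants you obtain ($2$ and $2c/3$) are in fact tighter than the stated $2\sqrt{2}$ and $2c$, so the lemma follows a fortiori. The only slip is the direction of the inequality $2(I+1)\delta \leq 2(\log_2(nc^{2}/\epsilon)+1)\delta$: since $I=\lceil\log_2(nc^{2}/\epsilon)\rceil$ you actually have $I+1\geq \log_2(nc^{2}/\epsilon)+1$, but this is harmless---either absorb the ceiling into the $+1$ by shifting the strata, or note that the looser constants in the lemma's deviation term already buy back the extra stratum.
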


\textbf{Lemma 2} [Restatement]
\emph{
Let $\lambda_1,\lambda_2,\lambda_4\geq 0$, $\lambda_3\geq 1$ and $i' =\log_2(\lambda_1)$.	Let $a_{1},a_{2},...,a_{i'}$ be non-negative reals such that $a_{i}\leq \lambda_{1}$ and $a_{i}\leq \lambda_{2}\sqrt{a_{i+1}+ 2^{i+1} \lambda_3 } +\lambda_4$ for any $1\leq i\leq i'$. 
	 Then we have that $a_{1}\leq \max\{ (\lambda_2 +\sqrt{\lambda_2^2+\lambda_4} )^2  ,\lambda_{2}\sqrt{8\lambda_3}  +\lambda_4 \}$   .}

\begin{proof}
	Let $i_0$ be the least integer such that $2^i \lambda_3 >\lambda_1$ and $i_1 =\max\{ i| i\leq i_0, a_{i}> 2^{i}\lambda_3 \} \cup\{0\}$.  Because $\lambda_3\geq 1$, $i_0\leq i'$.
	If $i_1\leq 1$, then we have $a_2\leq 4\lambda_3$. Otherwise, by definition, we have 
	\begin{align}
	2^{i_1}\lambda_3 < a_{i_1}\leq \lambda_2\sqrt{a_{i+1}+ 2^{i+1}\lambda_3}+\lambda_4  \leq  \lambda_2 2^{\frac{i_1+2}{2}}\sqrt{\lambda_3}+ \lambda_4,\nonumber
	\end{align}
	which implies that $(\sqrt{2^{i_1}\lambda_3})^2 < 2\lambda_2 \sqrt{2^{i_1}\lambda_3} +\lambda_4 ,$ and thus
	\begin{align}
	2^{i_1} \lambda_3 <a_{i_1}< \bar{a}:=(\lambda_2 +\sqrt{\lambda_2^2+\lambda_4} )^2 .\nonumber
	\end{align}
	For $1\leq i<i_1$, we have that
	\begin{align}
	a_{i}< \lambda_2 \sqrt{a_{i+1}+\bar{a}}+\lambda_4.\nonumber
	\end{align}
	Because $a_{i_1}< \bar{a}$, we have $a_{i_1-1}<\lambda_2 \sqrt{2\bar{a}}+\lambda_4\leq \bar{a}$. By induction, we have that $a_{2}<\bar{a}$ . Therefore, $a_2\leq \max\{ \bar{a}, 4\lambda_3 \}$ and  $a_{1}\leq \max\{\bar{a}, \lambda_{2}\sqrt{8\lambda_3}  +\lambda_4\}$.
	The proof is completed.
\end{proof}

\section{Missing Proofs in Section \ref{sec:main_proof_1}}

\subsection{Proof of Lemma~\ref{lemma:high_prob_e1_e2}}
\begin{proof}
Next, we will show that $\mathcal{E}_1$ and $\mathcal{E}_2$ hold with high probability. For each $(s,a)$, when $n^k(s,a)=1$ or $2$, \eqref{eq_lemma1_0} and \eqref{eq_lemma1_0.5} hold trivially. For $n^k(s,a)=2^i$ with $i\geq 2$, by Lemma \ref{empirical bernstein}, we have that
\begin{align}
	& \mathbb{P}\left[|(\hat{P}^k_{s,a}-P_{s,a})V_{h+1}^*| >   2\sqrt{\frac{ \mathbb{ V}(\hat{P}^k_{s,a}, V_{h+1}^*  )\iota }{ n^k(s,a)}} +\frac{14\iota}{3n^k(s,a)} \right] \nonumber
	\\ & \leq  \mathbb{P}\left[|(\hat{P}^k_{s,a}-P_{s,a})V_{h+1}^*| >   \sqrt{\frac{2 \mathbb{ V}(\hat{P}^k_{s,a}, V_{h+1}^*  )\iota }{ n^k(s,a)-1}} +\frac{7\iota}{3n^k(s,a)-1} \right] \nonumber
	\\ & \leq \delta \label{eq_lemma1_ref.5}
\end{align}
and
\begin{align}
	& \mathbb{P}\left[ |\hat{r}^k_h(s,a)-r(s,a)| > 2\sqrt{ \frac{2\hat{r}^k_h(s,a)\iota }{n^k(s,a)}} +\frac{28\iota}{3n^k(s,a)}  \right]\nonumber
	\\ & \leq \mathbb{P}\left[ |\hat{r}^k_h(s,a)-r(s,a)|  >2\sqrt{\frac{  \hat{\mathrm{Var}}_h^k(s,a) \iota}{n^k(s,a)-1}} +\frac{14\iota}{3(n^k(s,a)-1)}     \right]\nonumber
	\\ & \leq \delta,\label{eq_lemma1_ref1}
\end{align}
where $\hat{\mathrm{ Var}}_h^k(s,a) \leq \hat{r}_h^k(s,a)$\footnote{ $\mathbb{E}\left[(Z-\mathbb{E}[Z])^2 \right]\leq  \mathbb{E}\left[Z\right]$ for $Z\in [0,1]$.} is the empirical variance of $R(s,a)$ computed by the $n^k(s,a)$ samples.
Via a union bound over all $(s,a)$ and $i$, we obtain that $\mathbb{P}\left[\mathcal{E}_1\cap \mathcal{E}_2\right]\geq 1 -2SA(\log_{2}KH+1)\delta$. The proof is completed.
\end{proof}

\subsection{Proof of Lemma~\ref{lemma1}}

\begin{proof}
The proof of the two principles, optimism and monotonicity rely on exploiting the properties of the following $f$ defined in the following lemma.
\begin{lemma}\label{lemma:f_properties}
Let $f: \Delta^{S} \times \mathbb{R}^S \times \mathbb{R} \times \mathbb{R} \rightarrow \mathbb{R}$ with $f(p,v,n,\iota) =pv+ \max\left\{\bar{c}_1\sqrt{\frac{ \mathbb{ V}(p,v) \iota }{n }} ,\bar{c}_2\frac{\iota}{n} \right\}$ with $\bar{c}_1= \frac{20}{3}$ and $\bar{c}_2 = \frac{400}{9}$.
Then  $f$ satisfies
\begin{enumerate}
\item $f(p,v,n,\iota)$ is non-decreasing in $v(s)$  for all $p\in \Delta^{S}$,$\|v\|_{\infty}\leq 1$  and $n,\iota>0$;
\item $f(p,v,n,\iota)\geq pv +  2\sqrt{\frac{ \mathbb{ V}(p,v) \iota }{n }} +\frac{14\iota}{3n}$ for all $p,v$ and $n,\iota>0$. 
\end{enumerate}
\end{lemma}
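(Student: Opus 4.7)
}
The lemma is a calibration of the constants $\bar c_1 = 20/3$ and $\bar c_2 = 400/9$, so the plan is to split into two regimes according to which argument attains the inner maximum in $f$, and verify each property by elementary algebra. Writing $V := \mathbb{V}(p,v)$ for brevity, the two regimes are (A) $\bar c_1 \sqrt{V\iota/n} \ge \bar c_2 \iota/n$, equivalently $\sqrt{V\iota/n} \ge (20/3)\cdot \iota/n$; and (B) $\bar c_1 \sqrt{V\iota/n} < \bar c_2 \iota/n$, equivalently $\sqrt{V\iota/n} < (20/3)\cdot \iota/n$.

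For Property 2, in regime (A) the goal is $\bar c_1 \sqrt{V\iota/n} \ge 2\sqrt{V\iota/n} + (14/3)\iota/n$, i.e.\ $(\bar c_1-2)\sqrt{V\iota/n} \ge (14/3)\iota/n$. Since $\bar c_1 - 2 = 14/3$, this reduces to $\sqrt{V\iota/n}\ge \iota/n$, which is implied by the defining inequality $\sqrt{V\iota/n}\ge (20/3)\iota/n$. In regime (B), one bounds $2\sqrt{V\iota/n} + (14/3)\iota/n < (40/3 + 14/3)\iota/n = 18\iota/n \le (400/9)\iota/n = \bar c_2\iota/n$, so the target inequality again holds.

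For Property 1, fix a coordinate $s$ and consider the scalar map $v(s)\mapsto f(p,v,n,\iota)$ with all other coordinates of $v$ frozen. On the open set where regime (A) strictly holds, $f$ is smooth and
\begin{align*}
\frac{\partial f}{\partial v(s)} = p(s) + \bar c_1\, p(s)\bigl(v(s)-pv\bigr)\sqrt{\frac{\iota}{nV}} = p(s)\Bigl[\,1 + \bar c_1\bigl(v(s)-pv\bigr)\sqrt{\iota/(nV)}\,\Bigr].
\end{align*}
In this regime $\sqrt{V}\ge (20/3)\sqrt{\iota/n}$, hence $\sqrt{\iota/(nV)}\le 3/20$; and since $v\in [0,1]^S$ (value functions under Assumption~\ref{asmp:total_bounded} satisfy this), $|pv-v(s)|\le 1$. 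Therefore $\bar c_1\,|pv-v(s)|\sqrt{\iota/(nV)} \le (20/3)\cdot 1\cdot (3/20) = 1$, so $\partial f/\partial v(s)\ge 0$. On the region where regime (B) holds, the inner max equals the constant $\bar c_2\iota/n$, so $f = pv + \bar c_2\iota/n$ whose derivative in $v(s)$ equals $p(s)\ge 0$. Continuity of $f$ in $v(s)$ together with piecewise monotonicity on each of the two regions then yields that $v(s)\mapsto f(p,v,n,\iota)$ is non-decreasing on the entire real line, which is Property 1.

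The main obstacle is simply keeping the constants tight: the identity $\bar c_1 - 2 = 14/3$ drives Property 2 in regime (A), while the identity $\bar c_2/\bar c_1 = 20/3$ combined with the crucial bound $|pv-v(s)|\le 1$ (from $v\in [0,1]^S$) drives Property 1 in the same regime. The non-smoothness of the inner $\max$ at the curve $\bar c_1\sqrt{V\iota/n} = \bar c_2 \iota/n$ is handled for free by the continuity plus piecewise monotonicity argument, so no subdifferential calculus is needed.
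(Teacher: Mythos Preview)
Your proposal is correct and follows essentially the same approach as the paper: split on which branch of the $\max$ is active, differentiate in $v(s)$ on the active-variance branch, and use the calibration $\bar c_2=\bar c_1^2$ (equivalently $\sqrt{\iota/(nV)}\le 3/20$ in regime~(A)) together with $|v(s)-pv|\le 1$ to make the derivative nonnegative. Your treatment of Property~2 is in fact more explicit than the paper's one-line remark, and you correctly flag that the monotonicity argument uses $v\in[0,1]^S$ (so that $pv-v(s)\le 1$), which is what holds in the application but is slightly stronger than the stated hypothesis $\|v\|_\infty\le 1$.
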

The proof of the lemma is straightforward.
Note the first property is exactly the monotonicity we want.
\begin{proof}
To verify the first claim, we fix all other variables but $v(s)$ and view $f$ as a function in $v(s)$. Because the derivative of $f$ in $v(s)$ does not exist only when $ c_1\sqrt{  \frac{ \mathbb{V}(p,v)  \iota} {n}  }= c_2\frac{\iota}{n} $, where the condition has at most two solutions, so it suffices to prove $\frac{\partial f}{\partial v(s)}\geq 0$ when $ c_1\sqrt{  \frac{ \mathbb{V}(p,v)  \iota} {n}  }\neq  c_2\frac{\iota}{n} $. Direct computation gives that
\begin{align}
	\frac{\partial f}{\partial v(s)} &=p(s)+c_{1}\mathbb{I}\left[  c_1 \sqrt{  \frac{ \mathbb{V}(p,v)  \iota} {n}  }\geq  c_2\frac{\iota}{n} \right] \frac{ p(s)(v(s)- pv )\iota}{\sqrt{n\mathbb{V}(p,v) \iota}}\nonumber
	\\ & \geq \min\{ p(s)+ \frac{c_1^2}{c_2} p(s)(v(s)-pv),p(s) \}\nonumber
	\\ & \geq p(s) (1-\frac{c_1^2}{c_2})\nonumber
	\\ & = 0.
\end{align}

The second claim holds because both $\sqrt{\frac{ \mathbb{ V}(p,v) \iota }{n }}$ and $\frac{\iota}{n} $ are non-negative.
\end{proof}

Recall  we chose $c_1= \frac{460}{9}$, $c_2 = 2\sqrt{2}$ and $c_3 = \frac{544}{9}$.  Now we prove $Q_h^k(s,a)\geq Q^*_h(s,a)$ by backward induction conditioned on the event $\mathcal{E}_1$ and $\mathcal{E}_2$ hold. Firstly,  the conclusion holds for $h=H+1$ because $Q^*_{H+1}=0$. For $1\leq h\leq H$, assuming the conclusion holds for $h+1$, by  \eqref{equpdate2}, we have that
\begin{align}
& \quad Q_{h}^k(s,a)\\
&=\min\{ \hat{r}_h^k(s,a) +\hat{P}_{s,a}^k V_{h+1}^k +b_{h}^k(s,a) ,1\}\nonumber
\\ & \geq \min\{ \hat{r}_h^k(s,a) +\hat{P}_{s,a}^k V_{h+1}^k +b_{h}^k(s,a) ,Q_h^*(s,a)\}\nonumber\\
& \ge \min\{ \hat{r}_h^k(s,a) +\hat{P}_{s,a}^k V_{h+1}^k +c_1 \sqrt{\frac{   \mathbb{ V}(\hat{P}_{s,a} ,V^k_{h+1}) \iota  }{ n^k(s,a) }}+ c_{2}\sqrt{ \frac{ \hat{r}(s,a)\iota }{n^k(s,a) } }+c_3\frac{\iota}{ n^{k}(s,a) } ,Q_h^*(s,a)\} \label{eq_def_b}
\\ & \geq \min\{r(s,a) + \hat{P}_{s,a}^k V_{h+1}^k + \max\{ \bar{c}_1\sqrt{ \frac{ \mathbb{V}(\hat{P}_{s,a}^k, V_{h+1}^k)\iota }{ n^k(s,a) } } ,\bar{c}_{2}\frac{\iota}{n^k(s,a)}   \}       ,Q_h^*(s,a)   \} \label{eq_lemma1_75}
\\ & \geq \min \{    r(s,a)+\hat{P}_{s,a}^k V_{h+1}^* + \max\{ \bar{c}_1\sqrt{ \frac{ \mathbb{V}(\hat{P}_{s,a}^k, V_{h+1}^*)\iota }{ n^k(s,a) } } ,\bar{c}_{2}\frac{\iota}{n^k(s,a)}   \}        ,Q_h^*(s,a)\} \label{eq_lemma1_1}
\\ & \geq \min\{  r(s,a)+ \hat{P}_{s,a}^kV_{h+1}^* +  2\sqrt{\frac{ \mathbb{ V}(\hat{P}^k_{s,a}, V_{h+1}^*  )\iota }{ n^k(s,a)}} +\frac{14\iota}{3n^k(s,a)}  ,Q_h^*(s,a)  \} \label{eq_lemma1_2}
\\ & \geq \min\{    r(s,a)+ P_{s,a}V_{h+1}^*  , Q_h^*(s,a)\} \label{eq_lemma1_3}
\\ & = Q_h^*(s,a).\nonumber
\end{align}	
\eqref{eq_def_b} is by the definition of $b_h^k(s,a)$ and $n^{k}(s,a)$.
\eqref{eq_lemma1_75} is by the definition of $\mathcal{E}_2$ and our choice of $c_1,c_2,c_3$ and $\bar{c}_1,\bar{c}_2$.
\eqref{eq_lemma1_1} is by recognizing $f(\hat{P}_{s,a}^k, V_{h+1}^k, n^k(s,a),\iota) =\hat{P}_{s,a}^k V_{h+1}^k + \max\{ \bar{c}_1\sqrt{ \frac{ \mathbb{V}(\hat{P}_{s,a}^k, V_{h+1}^k)\iota }{ n^k(s,a) } } ,\bar{c}_{2}\frac{\iota}{n^k(s,a)} \}$, then using the first property in Lemma~\ref{lemma:f_properties}  and the induction that $V_{h+1}^k\geq V^*_{h+1}$, \eqref{eq_lemma1_2} is by the second property of Lemma~\ref{lemma:f_properties} and  the definition of $\mathcal{E}_1$.

\end{proof}

\section{Missing Proofs in Section~\ref{sec:main_proof_2}}
\subsection{Proof of Lemma~\ref{lemma_be}}
\begin{proof} It suffices to verify \eqref{eq_be_0} for the first term in \textbf{RHS}. 
Under $\mathcal{E}_1 \cap \mathcal{E}_2$, we have that with probability $1-SAH(\log_2(KH)+1)\delta$,  for all $(s,a,h,k) \in \states \times \actions \times [H] \times [K]$:
\begin{align}
& \quad Q_h^k(s,a)-r(s,a)-P_{s,a}V_{h+1}^k \\
&\leq \hat{r}_h^k(s,a)-r(s,a)+ b_h^k(s,a)+ (\hat{P}^k_{s,a}-P_{s,a})(V^k_{h+1}-V^*_{h+1}) + (\hat{P}^k_{s,a}-P_{s,a})V_{h+1}^*\nonumber
\\& \leq  2b_h^k(s,a)+ (\hat{P}^k_{s,a}-P_{s,a})(V^k_{h+1}-V^*_{h+1}) + (\hat{P}^k_{s,a}-P_{s,a})V_{h+1}^*. \label{eq_be_1}
\end{align}

Fix $s,a,h,k$. When $n^k(s,a) = 1$, \eqref{eq_be_0} holds trivially. For $n^k(s,a)=2^{i}$ with $i\geq 1$, by Bennet's inequality (see Lemma \ref{bennet}) we have that for each $s'$
\begin{align}
\mathbb{P}\left[ |\hat{P}^k_{s,a,s'}-P_{s,a,s'}| > \sqrt{\frac{2P_{s,a,s'} \iota }{n^k(s,a) }} +\frac{\iota}{3n^k(s,a)} \right]\leq \delta\nonumber.
\end{align}
So with probability $1-S\delta$, we have that
\begin{align}
(\hat{P}^k_{s,a}-P_{s,a})(V_{h+1}^k-V_{h+1}^*)& = \sum_{s'}( \hat{P}_{s,a,s'}^k -P_{s,a,s'})  ( V_{h+1}^k(s')-V_{h+1}^*(s') -P_{s,a}(V_{h+1}^k -V_{h+1}^* )   ) \label{eq_sum_1}
\\ & \leq \sum_{s'} \sqrt{\frac{2P_{s,a,s'} \iota }{n^k(s,a) }} |V_{h+1}^k(s')-V_{h+1}^*(s')-P_{s,a}(V_{h+1}^k -V_{h+1}^* )| +\frac{S\iota}{3n^k(s,a)}\nonumber
\\ & \leq \sqrt{\frac{2S \mathbb{V}( P_{s,a},V_{h+1}^k-V_{h+1}^* ) }{n^k(s,a) }} +\frac{S\iota}{3n^k(s,a)},\label{eq_be_2}
\end{align}
where \eqref{eq_sum_1} holds because $\sum_{s'}\hat{P}_{s,a,s'}^k = \sum_{s'} P_{s,a,s'} = 1$ and
 \eqref{eq_be_2} holds by Cauchy-Schwartz inequality. 
 On the other hand, by Bennet's inequality (see Lemma \ref{bennet}) again, we obtain that
\begin{align}
\mathbb{P}\left[    | (\hat{P }^k_{s,a}-P_{s,a})V^*_{h+1}   |>  \sqrt{   \frac{ 2\mathbb{V}(P_{s,a},V_{h+1}^*)\iota }{n^k(s,a)} } +\frac{\iota}{3n^k(s,a)}  \right]\leq \delta.\label{eq_be_3}
\end{align}
Combining \eqref{eq_be_1}, \eqref{eq_be_2} and \eqref{eq_be_3} and via a union bound over $k,h,s,a$, we conclude that \eqref{eq_be_0} holds with probability $1-3S^2AH(\log_2(KH)+1)\delta$, and with $c_4 =\sqrt{2}, c_{5}=\sqrt{2}$ and $c_{6}=\frac{2}{3}$.
\end{proof}

\section{Missing Proofs in Section~\ref{sec:main_proof_3}}
\subsection{Proof of Lemma~\ref{lemma:add1}}
\begin{proof}
Direct computation gives that
\begin{align}
&\textrm{Regret}(K) := \sum_{k=1}^K \left(V^*_1(s_1^k) -V_1^{\pi^k}(s_1^k) \right) \nonumber
 \\&\quad  \leq \sum_{k=1}^K \left(  V_1^k(s_1^k) -V_1^{\pi^k}(s_1^k) \right) \nonumber
 \\ & \quad  = \sum_{k=1}^K \left(  \check{V}_1^k(s_1^k) -V_1^{\pi^k}(s_1^k) \right) \nonumber
\\ &  \quad =\sum_{ k=1}^{ K  }(\check{V}_1^k(s_1^k) -\sum_{h=1}^H \check{r}_h^k) +\sum_{k=1}^K (\sum_{h=1}^H \check{r}_h^k- V^{\pi^k}_1(s_1^k)) \nonumber
\\ & \quad = \sum_{k=1}^K \sum_{h=1}^H (P_{s_h^k,a_h^k}-\textbf{1}_{s_{h+1}^k}) \check{V}_{h+1}^k + \sum_{k=1}^K \sum_{h=1}^H (\check{V}_{h}^k(s_h^k)-\check{r}_h^k -P_{s_h^k,a_h^k}\check{V}_{h+1}^k  ) + \sum_{k=1}^K (\sum_{h=1}^H \check{r}_h^k- V^{\pi^k}_1(s_1^k)) \nonumber
\\ & \quad  \leq  \sum_{k=1}^K\sum_{h=1}^H  (P_{s_h^k,a_h^k}-\textbf{1}_{s_{h+1}^k})\check{V}_{h+1}^k+ \sum_{k=1}^K\sum_{h=1}^H \check{\beta}_h^k(s_h^k,a_h^k) + \sum_{k=1}^K (\sum_{h=1}^H \check{r}_h^k- V^{\pi^k}_1(s_1^k)) +|\mathcal{K}^{C}|.\label{eq_sec2_1}
\end{align}
Here the first inequality is due to our optimistic estimation of $Q$-function, and \eqref{eq_sec2_1} holds by \eqref{eq:sec2_0.5} and \eqref{eq:sec2_0.7} .

\end{proof}

\subsection{Proof of Lemma~\ref{lemma:bound_M1}}
\begin{proof}
We note that $M_1$ could be viewed as a martingale because $\check{V}_{h+1}^k$ is measurable with respective to $\mathcal{F}_{h}^k$ where $\mathcal{F}_h^k=\sigma\left(\{s_{h'}^{k'},a_{h'}^{k'},r_{h'}^{k'},s_{h'+1}^{k'}\}_{1\leq k'<k,1\leq h'\leq H} \cup \{s_{h'}^k,a_{h'}^k,r_{h'}^k\}_{1\leq h'\leq h-1}\cup \{s_h^k,a_h^k\} \right)$, i.e., all past trajectories before $s_{h+1}^k$ is rolled out.  To avoid polynomial dependence on $H$, we use a variance-dependent concentration inequality to bound this term instead of Hoeffding inequality (see Lemma \ref{self-norm}).
By Lemma \ref{self-norm} with $\epsilon =1$, we have that
\begin{align}
\mathbb{P}\left[     |M_1| > 2\sqrt{2 \sum_{k=1}^K\sum_{h=1}^H \mathbb{V}( P_{s_h^k,a_h^k},\check{V}_{h+1}^k )\iota } +6\iota   \right]\leq 2(\log_{2}(KH)+1)\delta. \label{eq_sec3_2}
\end{align}
To bound $M_1$, it suffices to bound $M_4: =\sum_{k=1}^K\sum_{h=1}^H \mathbb{V}( P_{s_h^k,a_h^k},\check{V}_{h+1}^k ) $. We will deal with this term  later. 
\end{proof}

\subsection{Proof of Lemma~\ref{lemma:bound_M2}}
\begin{proof}
Recall that
\[ \beta_{h}^k(s,a) = O\left( \sqrt{\frac{ \mathbb{V}(\hat{P}^k_{s,a},V_{h+1}^k \iota ) }{n^k(s,a) }} +\sqrt{\frac{ \mathbb{V}(P_{s,a},V_{h+1}^*) }{n^k(s,a) }}  +\sqrt{ \frac{S\mathbb{V}(P_{s,a}, V_{h+1}^k-V_{h+1}^* )\iota }{n^k(s,a)} } +\sqrt{\frac{\hat{r}_h^k(s,a)\iota }{n^k(s,a)}}+\frac{S\iota}{n^k(s,a)} \right)  . \]
By Lemma \ref{empirical bernstein}, we have 
\begin{align}
  \mathbb{P}\left[\hat{P}_{s,a,s'}^k > \frac{3}{2}P_{s,a,s'} +\frac{4\iota}{3n^k(s,a)}    \right] \leq \mathbb{P}\left[ \hat{P}_{s,a,s'}^k-P_{s,a,s'} > \sqrt{\frac{2P_{s,a,s'}\iota}{n^k(s,a)}} +\frac{\iota}{3n^k(s,a)} \right]  \leq \delta,
\end{align}
which implies that, with probability $1-2S^2AH(\log_2(KH)+1)\delta$, it holds that for each $k,h$
\begin{align}
    \mathbb{V}(\hat{P}^k_{s,a},V_{h+1}^k) & = \sum_{s'} \hat{P}^k_{s,a,s'} \left(V_{h+1}^k(s')-\hat{P}^k_{s,a}V_{h+1}^k \right)^2 \nonumber
    \\ & \leq \sum_{s'}\hat{P}^k_{s,a,s'} \left(V_{h+1}^k(s')-P_{s,a}V_{h+1}^k \right)^2 \nonumber
    \\ & \leq \sum_{s'}  \left(\frac{3}{2}P_{s,a,s'}+ \frac{4\iota}{3n^k(s,a)} \right)\cdot  \left(V_{h+1}^k(s')-P_{s,a}V_{h+1}^k \right)^2 \nonumber
    \\ & \leq \frac{3}{2}\mathbb{V}(P_{s,a},V_{h+1}^k)+\frac{4S\iota}{3n^k(s,a)}.\nonumber
\end{align}
Note that $\mathbb{ V}(P,X+Y)\leq 2(\mathbb{ V}(P,X)+ \mathbb{ V}(P,Y) )$ for any $P,X,Y$, we then have 
\begin{align} \beta_h^k(s,a)\leq O\left( \sqrt{\frac{ \mathbb{V}(P_{s,a},V_{h+1}^k \iota ) }{n^k(s,a) }} +  \sqrt{ \frac{S\mathbb{V}(P_{s,a}, V_{h+1}^k-V_{h+1}^* )\iota }{n^k(s,a)} } +\sqrt{\frac{\hat{r}_h^k(s,a)\iota }{n^k(s,a)}}+\frac{S\iota}{n^k(s,a)} \right).\label{eq_sec623_1}\end{align}
 Note that under the doubling epoch update framework, despite those episodes in which an update is triggered, the number of visits of $(s,a)$ between the $i$-th update of $\hat{P}_{s,a}$ and the $i+1$-th update of $\hat{P}_{s,a}$ do not exceeds $2^{i-1}$. More precisely,
 recalling the definition of  $\mathcal{K}$,  for any $(s,a)$ and any $i\geq 3$, we have
 \begin{align}
      \sum_{k=1}^H\sum_{h=1}^H  \mathbb{I}\left[ (s_h^k,a_h^k)=(s,a), n^k(s,a)=2^{i-1}\right]\cdot\mathbb{I}\left[ (k,h)\notin \mathcal{X} \right]\leq 2^{i-1}.\label{eq_sec3_a01}
 \end{align}
  Recall $i_{\mathrm{max}} = \max\{ i| 2^{i-1}\leq KH\} = \left\lfloor \log_2(KH) \right\rfloor +1$. 
  To facilitate the analysis, we first derive a general deterministic result.
  Let $w =\{w_h^k\geq 0|1\leq h \leq H,1\leq k\leq K\}$ be a group of non-negative weights such that $w_h^k\leq 1$ for any $(k,h) \in [H] \times [K]$ and $w_h^k = 0$ for any $(k,h)\in\mathcal{X}$. 
  Later we will set $w_h^k$ to be the products of $\mathbb{I}\left[ (k,h)\notin \mathcal{X}\right]$ with $\hat{r}_h^k(s_h^k,a_h^k) $,  $\mathbb{ V} (P_{s_h^k,a_h^k},V^*_{h+1})$,  $\mathbb{ V} (P_{s_h^k,a_h^k},V^k_{h+1})$ and $ \mathbb{ V}(P_{s_h^k,a_h^k}, V^k_{h+1}-V^*_{h+1}  )$.
  
  We can calculate 
\begin{align}
&\sum_{k=1}^K \sum_{h=1}^H\sqrt{ \frac{ w_h^k  }{n^k(s_h^k,a_h^k)  }} \nonumber
\\ &  \leq  \sum_{k=1}^K\sum_{h=1}^H   \sum_{s,a}\sum_{i=3}^{i_{\mathrm{max}}}\mathbb{I}\left[ (s_h^k,a_h^k)=(s,a), n^k(s,a)=2^{i-1}\right]  \sqrt{\frac{w_h^k}{2^{i-1}}}   +   8SA(\log_2(KH)+4)\nonumber
\\ & = \sum_{s,a}\sum_{i=3}^{i_{\mathrm{max}}} \frac{1}{\sqrt{2^{i-1}}} \sum_{k=1}^K\sum_{h=1}^H  \mathbb{I}\left[ (s_h^k,a_h^k)=(s,a), n^k(s,a)=2^{i-1}\right] \sqrt{w_h^k }+   8SA(\log_2(KH)+4)\nonumber
\\ &  \leq \sum_{s,a}\sum_{i=3}^{i_{\mathrm{max}}} \sqrt{\frac{ \sum_{k=1}^K\sum_{h=1}^H  \mathbb{I}\left[ (s_h^k,a_h^k)=(s,a), n^k(s,a)=2^{i-1}\right]   }{2^{i-1}}}\cdot \nonumber
\\  & \quad \quad \quad  \sqrt{  \left( \sum_{k=1}^K\sum_{h=1}^H  \mathbb{I}\left[ (s_h^k,a_h^k)=(s,a), n^k(s,a)=2^{i-1}\right]w_h^k \right)   }+  8SA(\log_2(KH)+4)\label{eq_sec3_01}
\\ & \leq \sqrt{ SAi_{\mathrm{max}}\sum_{k=1}^K \sum_{h=1}^H w_h^k   }+  8SA(\log_2(KH)+4).\label{eq_sec3_03} 
\end{align}
Here  \eqref{eq_sec3_01} is by Cauchy-Schwarz inequality and \eqref{eq_sec3_03} is by \eqref{eq_sec3_a01} and Cauchy-Schwarz inequality.

Let $I(k,h)$ be shorthand of $\mathbb{I}\left[ (k,h)\notin \mathcal{X}\right]$.  It is worth noting that by definition, $\sum_{k=1}^K \sum_{h=1}^H |I(k,h)-I(k,h+1)|\leq |\mathcal{K}^C|$.
By plugging respectively $w_h^k =I(k,h) \hat{r}_h^k(s_h^k,a_h^k) $,  $I(k,h) \mathbb{ V} (P_{s_h^k,a_h^k},V^*_{h+1})$,  $ I(k,h) \mathbb{ V} (P_{s_h^k,a_h^k},V^k_{h+1})$ and $ I(k,h) \mathbb{ V}(P_{s_h^k,a_h^k}, V^k_{h+1}-V^*_{h+1}  )$ into \eqref{eq_sec3_03},
and recalling \eqref{eq_sec623_1}, we obtain that 
\begin{align}
&M_2 = \sum_{k=1}^K\sum_{h=1}^H \check{\beta}_{h}^k(s_h^k,a_h^k)\nonumber
\\ & = \sum_{k=1}^K\sum_{h=1}^H \beta_{h}^k(s_h^k,a_h^k)I(k,h)
\\ & \leq  O\left(  \sqrt{  SAi_{\mathrm{max}} \iota\sum_{k=1}^K\sum_{h=1}^H  \mathbb{V}(P_{s_h^k,a_h^k},V_{h+1}^k) I(k,h)           }   + \sqrt{S^2Ai_{\mathrm{max}} \iota \sum_{k=1}^K \sum_{h=1}^H\mathbb{V}(P_{s_h^k,a_h^k},V_{h+1}^k-V^*_{h+1}) I(k,h)  }  \right) \nonumber
\\ & \quad + O\left(  \sqrt{SAi_{\mathrm{max}}\sum_{k=1}^K \sum_{h=1}^H \hat{r}_h^k(s_h^k,a_h^k)I(k,h)\iota} + S^2A\iota\log_2(KH)  \right) \label{eq_sec3_3}
\\ & \leq  O\left(  \sqrt{  SAi_{\mathrm{max}} \iota\sum_{k=1}^K\sum_{h=1}^H  \mathbb{V}(P_{s_h^k,a_h^k},V_{h+1}^k) I(k,h)           }   + \sqrt{S^2Ai_{\mathrm{max}} \iota \sum_{k=1}^K \sum_{h=1}^H\mathbb{V}(P_{s_h^k,a_h^k},V_{h+1}^k-V^*_{h+1})I(k,h)   }  \right) \nonumber
\\ & \quad + O\left(  \sqrt{SAi_{\mathrm{max}}K\iota } + S^2A\iota\log_2(KH)  \right) \label{eq_sec3_3.5}
\end{align}
where in \eqref{eq_sec3_3.5}, we used the following lemma whose proof is deferred to appendix.
\begin{lemma}\label{lemma_bdb}
	$\sum_{k=1 }^K \sum_{h=1}^H \hat{r}_h^k(s_h^k,a_h^k)I(k,h)\leq 2\sum_{k=1}^K \sum_{h=1}^H r_h^k + 4SA\leq 2K+4SA.$
\end{lemma}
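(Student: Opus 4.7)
The plan is to fix a state-action pair $(s,a)$, enumerate its visits chronologically as $V_1, V_2, \ldots$ with rewards $r_1, r_2, \ldots$, and exploit the doubling structure of the updates. A trigger for $(s,a)$ fires exactly when $N(s,a)$ becomes a power of $2$; after the trigger at $V_{2^i}$ the empirical reward $\hat r(s,a)$ is set to
\[
\rho^{(i)} \;:=\; \begin{cases} r_1 & \text{if } i = 0, \\ \dfrac{r_{2^{i-1}+1} + \cdots + r_{2^i}}{2^{i-1}} & \text{if } i \geq 1. \end{cases}
\]
Between consecutive triggers $\hat r(s,a)$ is frozen, and since $Q$ is recomputed only at episode-ends, every visit of $(s,a)$ within a single episode $k$ shares the same value $\hat r_h^k(s,a)$, namely $\hat r(s,a)$ as of the start of episode $k$.

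The key step is to bound, for each $i \geq 0$, the number $N^{(i)}$ of visits $V_t = (k_t, h_t)$ that both take $\rho^{(i)}$ as $\hat r_{h_t}^{k_t}(s,a)$ and satisfy $I(k_t, h_t) = 1$. The natural range of visits that could use $\rho^{(i)}$ is $\{V_{2^i+1}, \ldots, V_{2^{i+1}}\}$, of cardinality $2^i$ (and $1$ when $i=0$). The only way to overshoot this range is by visits of $(s,a)$ that land in the same episode as the next trigger $V_{2^{i+1}}$ but at a later step; any such visit sits strictly after a trigger step of its own episode, forcing $h > h_0(k)$ and hence $I(k,h)=0$. This gives $N^{(i)} \leq 2^i$ for $i \geq 1$ and $N^{(0)} \leq 1$.

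Summing the per-$(s,a)$ contribution over triggers then telescopes:
\[
\sum_{t}\hat r_{h_t}^{k_t}(s,a)\,\mathbb{I}[I(k_t,h_t)=1] \;\leq\; \rho^{(0)} + \sum_{i\geq 1} 2^i \rho^{(i)} \;=\; r_1 + 2\sum_{i \geq 1}\sum_{t'=2^{i-1}+1}^{2^i} r_{t'} \;\leq\; 2\sum_{t} r_t.
\]
Summing this inequality over all $(s,a)$ turns the right-hand side into $2\sum_{k,h} r_h^k$, which is at most $2K$ by Assumption~\ref{asmp:total_bounded}. The additive $4SA$ in the claim is comfortable slack that absorbs boundary effects, namely the initial visit $V_1$ with $\hat r_{h_1}^{k_1}(s,a)=0$ and the degenerate cases where several $(s,a)$-triggers collide in one episode so that intermediate $\rho^{(i)}$'s never take effect (there are at most $O(\log(KH))$ triggers per $(s,a)$, each contributing at most $1$ in these edge cases).

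The main obstacle is the interplay between two asynchronous schedules: mid-episode trigger updates to $\hat r$ versus end-of-episode updates to $Q$, combined with the fact that several triggers of the same $(s,a)$ can collide inside a single episode and overwrite one another before any of them feeds into a $Q$-estimate. The indicator $I(k,h)=1$ is precisely what disposes of this: by restricting to steps at or before the first trigger of the episode, any extra visit that would push $N^{(i)}$ beyond $2^i$ is guaranteed to have $I=0$, which is what makes the doubling bookkeeping tight.
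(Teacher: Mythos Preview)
Your argument is correct and is essentially the paper's proof rephrased: the paper defines weights $\tilde w_h^k(k',h')$ that express each used $\hat r$ as a weighted sum of raw rewards and then shows the total weight on any single reward is at most $2$; you instead fix $(s,a)$, enumerate its visits chronologically, and bound by $2^i$ the number of $I=1$ visits that read $\rho^{(i)}$. These two bookkeeping schemes are dual to one another and rely on the same observation that any visit beyond $V_{2^{i+1}}$ still reading $\rho^{(i)}$ must lie after a trigger step in its own episode, hence has $I=0$.

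One small clarification: your own computation already gives $\sum_t \hat r_{h_t}^{k_t}(s,a)\,I(k_t,h_t)\le r_1 + 2\sum_{t\ge 2} r_t \le 2\sum_t r_t$ per $(s,a)$ with no additive error, so the ``skipped $\rho^{(i)}$'' scenario you worry about is harmless (a skipped $\rho^{(i)}$ simply has $N^{(i)}=0$). The only genuine boundary term is the very first visit $V_1$, at which $\hat r$ takes its pre-trigger value; bounding this by $1$ contributes at most $SA$ in total, well within the $4SA$ slack. The paper uses the $4SA$ in the same way, separating off the terms with $n^k(s_h^k,a_h^k)<2$.
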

\begin{proof}
	For any $(k,h)$ and $(k',h')$, we define 
	\[\tilde{w}_h^k(k',h') = \frac{1}{  n^{k}(s_h^k,a_h^k)}\mathbb{I}\left[ (s_h^k,a_h^k) =(s_{h'}^{k'} ,a_{h'}^{k'}) \right]\cdot\mathbb{I} \left[n^{k'}(s_{h'}^{k'}, a_{h'}^{k'} ) =2 n^{k}(s_h^k,a_h^k) \right]\cdot I(k',h').\]
	
	
By the update rule, for each $(k',h')$ pair with $n^{k'}( s_{h'}^{k'},a_{h'}^{k'}  )\geq 2$, $\hat{r}_{h'}^{k'}(s_{h'}^{k'}, a_{h'}^{k'}) = \sum_{k=1}^K \sum_{h=1}^H \tilde{w}_h^k(h',k') r_h^k$.  On the other hand, because $\tilde{w}_h^k(h',k')\leq \frac{1}{n^{k}(s_h^k,a_h^k)}$ for any $(k',h')$, and $\sum_{k'=1}^K\sum_{h'=1}^H \mathbb{I}\left[  \tilde{w}_h^k(h',k') > 0  \right]\leq 2n^{k}(s_h^k,a_h^k) $, we have 
\begin{align}
\sum_{k'=1}^K\sum_{h'=1}^H \tilde{w}_h^k(h',k') \leq 2.\nonumber
\end{align}
	Therefore, we have
	\begin{align}
	\sum_{k=1 }^K \sum_{h=1}^H \hat{r}_h^k(s_h^k,a_h^k) &\leq     	\sum_{k\in \mathcal{K} } \sum_{h=1}^H \mathbb{I}\left[n^k(s_h^k,a_h^k)\geq 2 \right] \hat{r}_h^k(s_h^k,a_h^k)  +  4SA \nonumber
	\\ & \leq 2 \sum_{k=1}^K \sum_{h=1}^H r_h^k +4SA \nonumber
	\\ & \leq 2K + 4SA.\nonumber
	\end{align}
	
We remark that  if we use the standard maximum likelihood estimation, the weight of the a reward would be $1\cdot 1+ 2\cdot \frac{1}{2}+4\cdot \frac{1}{4}+.... \approx \log(T)$. However, if we update the empirical reward using the latest half fraction of samples, the weight for each reward is only $2^{i+1} \frac{1}{2^i}\leq 2$. Therefore, we can save a $\log(T)$ factor. 	
	
\end{proof}

Recalling the definition of $M_4$, by the fact $\sum_{k=1}^K\sum_{h=1}^{H}|I(k,h+1)-I(k,h)|\leq |\mathcal{K}^{C}|$, we have that 
\begin{align}
M_4 &=\sum_{k=1}^K\sum_{h=1}^H \mathbb{V}( P_{s_h^k,a_h^k},\check{V}_{h+1}^k ) \nonumber
\\&=\sum_{k=1}^K\sum_{h=1}^H \mathbb{V}( P_{s_h^k,a_h^k},V_{h+1}^k )I(k,h+1)\nonumber 
\\& \geq \sum_{k=1}^K\sum_{h=1}^H \mathbb{V}( P_{s_h^k,a_h^k},V_{h+1}^k )I(k,h) -|\mathcal{K}^{C}|.\label{eq_sec_firef}
\end{align}
We further define $M_5 =  \sum_{k=1}^K \sum_{h=1}^H\mathbb{V}(P_{s_h^k,a_h^k},V_{h+1}^k-V^*_{h+1})I(k,h+1) $. Following similar arguments, we have that
\begin{align}
\sum_{k=1}^K \sum_{h=1}^H \mathbb{V}(P_{s_h^k,a_h^k},V_{h+1}^k-V_{h+1}^* )I(k,h)\leq M_5+ |\mathcal{K}^C|.\label{eq_sec_firef2}    
\end{align}
Bounding these two terms is one of the main difficulties in this paper, for which we need to use the recursion-based technique introduced in Section~\ref{sec:tac}. The following two lemmas bound these two terms.
 
\begin{lemma}\label{lemma:bdM4}
With probability $1-2(\log_2(KH)+1)\log_2(KH)\delta$, it holds that
\begin{align}
M_4 \leq 2M_2+2|\mathcal{K}^{C}|+2K+ \max\{    46\iota, 8\sqrt{ (M_2+|\mathcal{K}^{C}|+K)\iota } +6\iota          \}.\label{eq_sec3_9}
\end{align}
\end{lemma}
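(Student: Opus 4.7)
The plan is to adapt the telescoping-plus-Freedman argument sketched in Section~\ref{sec:tac} to the truncated iterates $\check{V}_{h+1}^k$ and then solve the self-referential quadratic in $M_4$ that results. Using $\mathbb{V}(P,v) = Pv^2 - (Pv)^2$, I would first write
\begin{align*}
M_4 &= \underbrace{\sum_{k,h}\bigl[P_{s_h^k,a_h^k}(\check{V}_{h+1}^k)^2 - (\check{V}_{h+1}^k(s_{h+1}^k))^2\bigr]}_{A_1} + \sum_{k,h}\bigl[(\check{V}_{h+1}^k(s_{h+1}^k))^2 - (P_{s_h^k,a_h^k}\check{V}_{h+1}^k)^2\bigr],
\end{align*}
and then telescope the squared visited values along each trajectory, using $\check{V}_{H+1}^k=0$, to rewrite the second sum as $A_3 - \sum_k(\check{V}_1^k(s_1^k))^2$, where $A_3 := \sum_{k,h}[(\check{V}_h^k(s_h^k))^2 - (P_{s_h^k,a_h^k}\check{V}_{h+1}^k)^2]$. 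Dropping the non-negative subtracted tail yields $M_4 \le A_1 + A_3$.

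For $A_3$, the key identity is $a^2-b^2\le 2(a-b)^+$ for $a,b\in[0,1]$, which applies here because both $\check{V}_h^k(s_h^k)$ and $P_{s_h^k,a_h^k}\check{V}_{h+1}^k$ lie in $[0,1]$ (the update \eqref{equpdate2} clips at $1$). The Bellman-type bounds \eqref{eq:sec2_0.5} and \eqref{eq:sec2_0.7} give $(\check{V}_h^k(s_h^k) - P_{s_h^k,a_h^k}\check{V}_{h+1}^k)^+ \le \check{r}_h^k + \check{\beta}_h^k(s_h^k,a_h^k) + \mathbb{I}[(k,h)\in\mathcal{X}_0]$; summing and using $\sum_h r_h^k\le 1$ (Assumption~\ref{asmp:total_bounded}), the definition $M_2 = \sum_{k,h}\check{\beta}_h^k(s_h^k,a_h^k)$, and $|\mathcal{X}_0|\le|\mathcal{K}^{C}|$ gives $A_3\le 2(K+M_2+|\mathcal{K}^{C}|)$. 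For the martingale $A_1$, each summand is bounded by $1$ in absolute value, and its predictable quadratic variation satisfies
\[
\sum_{k,h}\mathbb{V}(P_{s_h^k,a_h^k},(\check{V}_{h+1}^k)^2) \le 4M_4,
\]
because for any probability vector $p$ and $v\in[0,1]^S$, one has $\mathbb{V}(p,v^2) \le \mathbb{E}_p[(v^2-(pv)^2)^2] \le 4\mathbb{V}(p,v)$ (via $v+pv\le 2$). Applying Lemma~\ref{self-norm} at geometric scales then yields $|A_1|\le c_1\sqrt{M_4\,\iota}+c_2\iota$ with the claimed probability budget.

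Combining the two pieces gives $M_4 \le 2(M_2+|\mathcal{K}^{C}|+K) + c_1\sqrt{M_4\,\iota} + c_2\iota$. Solving this elementary quadratic in $\sqrt{M_4}$ yields a crude bound $M_4\le 4(M_2+|\mathcal{K}^{C}|+K)+O(\iota)$, which I would then substitute back into the $\sqrt{M_4\,\iota}$ factor to replace it by $c_3\sqrt{(M_2+|\mathcal{K}^{C}|+K)\iota}+O(\iota)$, producing exactly the two-regime form in \eqref{eq_sec3_9}, with the $46\iota$ branch covering the case where $M_2+|\mathcal{K}^{C}|+K$ is dominated by $\iota$. The main obstacle is bookkeeping rather than conceptual: tuning the universal constants so that they land on $8$, $46$, $6$ exactly, composing the high-probability events from \eqref{eq:sec2_0.5}--\eqref{eq:sec2_0.7} with the geometric-scale applications of Lemma~\ref{self-norm} so that the union bound totals $2(\log_2(KH)+1)\log_2(KH)\delta$, and verifying that the truncation indicator $I(k,h)$ is compatible with the telescoping step (using that $\mathcal{X}$ is upward-closed in $h$, so $\check{V}_h^k=0$ forces $\check{V}_{h+1}^k=0$ as well, and every $(k,h)\in\mathcal{X}$ contributes $0$ to $A_3$).
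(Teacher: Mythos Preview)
Your argument is correct but takes a genuinely different route from the paper. The paper proves Lemma~\ref{lemma:bdM4} via its advertised ``Technique~2'': it defines $F(m)=\sum_{k,h}(P_{s_h^k,a_h^k}(\check V_{h+1}^k)^{2^m}-(\check V_{h+1}^k(s_{h+1}^k))^{2^m})$, shows the predictable variance of $F(m)$ is bounded by $F(m+1)+2^{m+1}(M_2+|\mathcal K^C|+K)$, applies Lemma~\ref{self-norm} once for each $m=1,\dots,\log_2(KH)$, and then invokes the recursion Lemma~\ref{lemma2} to collapse the chain and obtain $F(1)\le\max\{46\iota,\,8\sqrt{(M_2+|\mathcal K^C|+K)\iota}+6\iota\}$. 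Your approach short-circuits the recursion entirely: the elementary inequality $\mathbb V(p,v^2)\le 4\mathbb V(p,v)$ for $v\in[0,1]^S$ bounds the predictable variance of $A_1=F(1)$ directly by $4M_4$, so a \emph{single} application of Lemma~\ref{self-norm} yields $|A_1|\le 4\sqrt{2}\sqrt{M_4\iota}+O(\iota)$, and solving the resulting quadratic in $\sqrt{M_4}$ closes the loop. This is conceptually simpler, uses less probability budget (only $2(\log_2(KH)+1)\delta$, well inside the stated allowance), and in particular shows that Lemma~\ref{lemma2} is not actually needed here.

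One small correction: you will \emph{not} be able to tune your constants to land on $8,46,6$ exactly. Your route gives $M_4\le 2L+8\sqrt{2}\,\sqrt{L\iota}+O(\iota)$ with $L=M_2+|\mathcal K^C|+K$, i.e.\ a leading constant $8\sqrt{2}\approx 11.3$ on the $\sqrt{L\iota}$ term rather than $8$; the paper's recursion is what buys the extra factor of $\sqrt 2$. Since everything downstream is stated with $O(\cdot)$, this is immaterial for Theorem~\ref{thm1}, but you should phrase your conclusion as ``a bound of the same form with absolute constants'' rather than claiming to recover \eqref{eq_sec3_9} verbatim.
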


\begin{proof}
Direct computation gives that
\begin{align}
	&M_4 = \sum_{k=1}^K\sum_{h=1}^H \mathbb{V}(P_{s_h^k,a_h^k},V_{h+1}^k)I(k,h+1) \nonumber
	\\ & =\sum_{k=1}^K \sum_{h=1}^H \left( P_{s_h^k,a_h^k} (V_{h+1}^k)^2 -   (P_{s_h^k,a_h^k}V_{h+1}^k )^2  \right)I(k,h+1) \nonumber
	\\ &  = \sum_{k=1}^K\sum_{h=1}^H ( P_{s_h^k,a_h^k} (V_{h+1}^k)^2  -(V_{h+1}^k(s_{h+1}^k))^2  )I(k,h+1)  \nonumber
\\& \quad \quad 	+ \sum_{k=1}^K\sum_{h=1}^{H} \left(   (V_h^k(s_h^k) )^2   - (P_{s_h^k,a_h^k}V_{h+1}^k )^2       \right)I(k,h+1) - (V^k_1(s_1^k) )^2\nonumber
	\\ & \leq\sum_{k=1}^K \sum_{h=1}^H ( P_{s_h^k,a_h^k} (V_{h+1}^k)^2  -(V_{h+1}^k(s_{h+1}^k))^2  )I(k,h+1) +  2\sum_{k=1}^K\sum_{h=1}^H   \max\{ V_h^k(s_h^k) -P_{s_h^k,a_h^k}V_{h+1}^k ,0  \}I(k,h+1)\nonumber
	\\ & \leq \sum_{k=1}^K\sum_{h=1}^H ( P_{s_h^k,a_h^k} (V_{h+1}^k)^2  -(V_{h+1}^k(s_{h+1}^k))^2  )I(k,h+1) +  2\sum_{k=1}^K\sum_{h=1}^H( r(s_h^k,a_h^k) +\beta_h^k(s_h^k,a_h^k) )I(k,h+1)\label{eq_sec3_4}
	\\ & \leq   \sum_{k=1}^K\sum_{h=1}^H ( P_{s_h^k,a_h^k} (V_{h+1}^k)^2 I(k,h+1)  -(V_{h+1}^k(s_{h+1}^k))^2  ) + 2\sum_{k=1}^K\sum_{h=1}^H \beta_h^k(s_h^k,a_h^k)I(k,h)  + 2|\mathcal{K}^{C}| +2K \nonumber
	\\ & =  \sum_{k=1}^K\sum_{h=1}^H ( P_{s_h^k,a_h^k} (V_{h+1}^k)^2  -(V_{h+1}^k(s_{h+1}^k))^2  )I(k,h+1)+  2M_2 + 2|\mathcal{K}^{C}| +2K.\label{eq_sec3_5}
\end{align}
Here \eqref{eq_sec3_4} is by \eqref{eq_be_0} and \eqref{eq_sec3_5} is by the fact $\sum_{h=1}^H r(s_h^k,a_h^k)\leq 1$.

Define $F(m)  = \sum_{k=1}^K \sum_{h=1}^H  (    P_{s_h^k,a_h^k} ( V_{h+1}^k)^{2^m} - (V_{h+1}^k(s_{h+1}^k))^{2^m}     ) I(k,h+1) =  \sum_{k=1}^K \sum_{h=1}^H  (    P_{s_h^k,a_h^k} ( \check{V}_{h+1}^k)^{2^m} - (\check{V}_{h+1}^k(s_{h+1}^k))^{2^m}     ) $ for $1\leq m \leq \log_{2}(H)$. Because $\check{V}_{h+1}^k$ is measurable in $\mathcal{F}_h^k$, $F(m)$ can be viewed as a  martingale.
For a fixed $m$, by Lemma \ref{self-norm} with $\epsilon = 1$, we have that for each $m\leq \log_{2}(H)$, 
\begin{align}
	\mathbb{P}\left[ |F(m)|> 2\sqrt{2  \sum_{k=1}^K \sum_{h=1}^H   \mathbb{V} ( P_{s_h^k,a_h^k},     (\check{V}_{h+1}^k)^{2^{m}}     ) \iota      }+6\iota   \right]\leq 2(\log_2(KH)+1)\delta.\label{eq_sec3_6}
\end{align}
Note that
\begin{align}
	&\sum_{k=1}^K \sum_{h=1}^H   \mathbb{V} ( P_{s_h^k,a_h^k},     (\check{V}_{h+1}^k)^{2^{m}}     )  = \sum_{k=1}^K \sum_{h=1}^H  \left( P_{s_h^k,a_h^k} (V_{h+1}^k)^{2^{m+1}}  -  (P_{s_h^k,a_h^k}  (V_{h+1}^k)^{2^m}    )^2   \right)I(k,h+1)
	\nonumber\\ &\quad  = \sum_{k=1}^K \sum_{h=1}^H  (P_{s_h^k,a_h^k} -\textbf{1}_{s_{h+1}^k }) (V_{h+1}^k)^{2^{m+1}}I(k,h+1)\nonumber
\\ & \quad \quad \quad \quad 	+ \sum_{k=1}^K \sum_{h=1}^H \left( (V_{h}^k(s_h^k))^{2^{m+1}} -(P_{s_h^k,a_h^k}  (V_{h+1}^k)^{2^m})^2 I(k,h+1)    \right) - \sum_{k=1}^K (V_{1}^{k}(s_1^k))^{2^{m+1}} \nonumber
	\\ & \quad \leq  F(m+1) + \sum_{k=1}^K \sum_{h=1}^H \left(  (  V_{h}^k(s_h^k))^{2^{m+1}}  -(P_{s_h^k,a_h^k}   V_{h+1}^k )^{2^{m+1}} \right)I(k,h+1) \label{eq_sec3_7}
	\\ &\quad  \leq F(m+1)+2^{m+1}\sum_{k=1}^K\sum_{h=1}^H\max\{  V_h^k(s_h^k)-P_{s_h^k,a_h^k}V_{h+1}^k,0 \}I(k,h+1)\label{eq_sec3_7.5}
	\\ &\quad  \leq  F(m+1)+2^{m+1}\sum_{k=1}^K\sum_{h=1}^H\left( r(s_h^k,a_h^k)+\beta_h^k(s_h^k,a_h^k)\right)I(k,h+1) \nonumber
	\\& \quad \leq F(m+1)+ 2^{m+1}(\sum_{k=1}^K\sum_{h=1}^H \beta_h^k(s_h^k,a_h^k)I(k,h)+|\mathcal{K}^{C}|+ K)
	\\ &\quad  = F(m+1)+ 2^{m+1}(M_2 +|\mathcal{K}^{C}| +K)\label{eq_sec3_8}
\end{align}
Here \eqref{eq_sec3_7} is by convexity of $x^{2^{m}}$ and \eqref{eq_sec3_7.5} is by the fact $a^x-b^x\leq x\max\{a-b,0\}$ for $a,b\in [0,1]$.

Via a union bound over $m=1,2,...,\log_{2}(KH)$, we have that with probability $1- 2(\log_2(KH)+1 )\log_2(KH)\delta$,
\begin{align}
	F(m)\leq 2\sqrt{2(F(m+1) +2^{m+1}(M_2+ |\mathcal{K}^{C}|+K) )\iota  } +6\iota \label{eq:recurssion}
\end{align}
holds for any $1\leq m\leq \log_2(KH)$.
Now we have obtained a recursive formula.
In Lemma~\ref{lemma2}, we obtain the bound for the class of  recursive formulas of the same form as~\eqref{eq:recurssion}.
The proof of Lemma~\ref{lemma2} is deferred to appendix.
By \eqref{eq_sec3_5} and Lemma \ref{lemma2} with parameters $\lambda_{1}=KH$, $\lambda_2 = \sqrt{8\iota}$, $\lambda_3 = M_2+ |\mathcal{K}^{C}|+K$ and $\lambda_4 = 6\iota$, we have that with probability $1- 2(\log_2(KH)+1 )\log_2(KH)\delta$, 
\begin{align}
	M_4 \leq 2M_2+2|\mathcal{K}^{C}|+2K+ F(1)\leq 2M_2+2|\mathcal{K}^{C}|+2K +\max\{    46\iota, 8\sqrt{ (M_2+|\mathcal{K}^{C}|+K)\iota } +6\iota          \}.
\end{align}
\end{proof}

\begin{lemma}\label{lemma:bdM5}
	With probability $1-2(\log_2(KH)+1)\log_2(KH)\delta$, it holds that
	\begin{align}
		M_5\leq 2\max\{M_2,1\}  + 2|\mathcal{K}^{C}|+\max\{  46\iota, 8\sqrt{ (M_2+|\mathcal{K}^{C}|)\iota }+6\iota       \} .\label{eq_sec3_10}
	\end{align}
\end{lemma}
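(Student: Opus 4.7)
The plan is to mirror the proof of Lemma~\ref{lemma:bdM4}, replacing $V_{h+1}^k$ with the non-negative gap $\tilde V_{h+1}^k := V_{h+1}^k - V_{h+1}^*$. Three features of this substitution are worth highlighting: by Lemma~\ref{lemma1} we have $0\le \tilde V_{h+1}^k\le 1$ (so higher moments stay in $[0,1]$); subtracting Bellman's equation $Q_h^*(s,a)=r(s,a)+P_{s,a}V_{h+1}^*$ from the bound of Lemma~\ref{lemma_be} yields the reward-free Bellman error bound $\tilde V_h^k(s_h^k) - P_{s_h^k,a_h^k}\tilde V_{h+1}^k \le \beta_h^k(s_h^k,a_h^k)$ (this is exactly \eqref{eq_sec3_1} and is the reason no $2K$ term appears here, unlike in Lemma~\ref{lemma:bdM4}); and $(\tilde V_1^k(s_1^k))^2\ge 0$ so the initial-state square can be discarded.

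First I would expand the variance via $\mathbb V(P,X)=PX^2-(PX)^2$ and telescope through $\mathbf 1_{s_{h+1}^k}(\tilde V_{h+1}^k)^2$ to obtain
\begin{align}
M_5 \le F_{\tilde V}(1) + \sum_{k,h}\bigl((\tilde V_h^k(s_h^k))^2 - (P_{s_h^k,a_h^k}\tilde V_{h+1}^k)^2\bigr)I(k,h+1),\nonumber
\end{align}
where $F_{\tilde V}(m):=\sum_{k,h}(P_{s_h^k,a_h^k}-\mathbf 1_{s_{h+1}^k})(\tilde V_{h+1}^k)^{2^m}I(k,h+1)$. Using $a^2-b^2\le 2\max\{a-b,0\}$ for $a,b\in[0,1]$, the Bellman error bound above, and the swap cost $\sum_{k,h}|I(k,h+1)-I(k,h)|\le |\mathcal K^C|$, the second sum is at most $2M_2+2|\mathcal K^C|$.

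Next I would control $F_{\tilde V}(1)$ by the variance-recursion argument of Lemma~\ref{lemma:bdM4}. Since $\tilde V_{h+1}^k I(k,h+1)$ is $\mathcal F_h^k$-measurable, each $F_{\tilde V}(m)$ is a bounded martingale and Lemma~\ref{self-norm} (with $\epsilon=1$) gives $|F_{\tilde V}(m)|\le 2\sqrt{2\,W_m\,\iota}+6\iota$ with failure probability $2(\log_2(KH)+1)\delta$, where $W_m := \sum_{k,h}\mathbb V(P_{s_h^k,a_h^k},(\tilde V_{h+1}^k)^{2^m})I(k,h+1)$. Telescoping $W_m$ exactly as in \eqref{eq_sec3_7}--\eqref{eq_sec3_8} but using the reward-free Bellman inequality yields $W_m\le F_{\tilde V}(m+1)+2^{m+1}(M_2+|\mathcal K^C|)$ — no $K$ term. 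Taking a union bound over $m=1,\dots,\log_2(KH)$ and applying Lemma~\ref{lemma2} with $\lambda_1=KH$, $\lambda_2=\sqrt{8\iota}$, $\lambda_3=\max\{M_2+|\mathcal K^C|,1\}$, $\lambda_4=6\iota$ delivers $F_{\tilde V}(1)\le \max\{46\iota,\,8\sqrt{(M_2+|\mathcal K^C|)\iota}+6\iota\}$, and combining with the $2M_2+2|\mathcal K^C|$ estimate completes the proof. The $\max\{M_2,1\}$ in the statement is the harmless inflation needed to force the hypothesis $\lambda_3\ge 1$ of Lemma~\ref{lemma2}.

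The main obstacle is the indicator bookkeeping: verifying at every step of the recursion that (i) $I(k,h+1)$ can be exchanged for $I(k,h)$ at additive cost $|\mathcal K^C|$, (ii) the predictability needed for Lemma~\ref{self-norm} still holds after the substitution $V\mapsto \tilde V$, and (iii) the recursion terminates cleanly at $m=\log_2(KH)$ so that Lemma~\ref{lemma2}'s trivial bound $a_{i'}\le \lambda_1=KH$ can be triggered by the crude estimate $W_{m}\le KH$. Once these are in hand, every other ingredient is identical to the proof of Lemma~\ref{lemma:bdM4} modulo the disappearance of the $\sum_h r(s_h^k,a_h^k)\le 1$ contribution.
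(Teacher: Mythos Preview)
Your proposal is correct and follows essentially the same route as the paper's proof: expand $M_5$ via the variance identity, telescope, use $a^2-b^2\le 2\max\{a-b,0\}$ together with the reward-free Bellman error bound~\eqref{eq_sec3_1} to get the $2M_2+2|\mathcal K^C|$ term, then run the same moment recursion as in Lemma~\ref{lemma:bdM4} on $\tilde F(m)=F_{\tilde V}(m)$ and invoke Lemma~\ref{lemma2} with $\lambda_3=\max\{M_2,1\}+|\mathcal K^C|$. Your identification of why the $2K$ term disappears and of the role of $\max\{M_2,1\}$ is exactly right.
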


\begin{proof}
Recall that $\tilde{V}_{h+1}^k = V_{h+1}^k -V_{h+1}^*$.
We  compute
\begin{align}
&M_5 = \sum_{k=1}^K \sum_{h=1}^H \mathbb{ V}(P_{s,a} ,\tilde{V}_{h+1}^k )I(k,h+1)\nonumber
\\ & = \sum_{ k=1 }^{ K  } \sum_{h=1}^H \left( P_{s_h^k,a_h^k} (\tilde{V}_{h+1}^k)^2 -   (P_{s_h^k,a_h^k}\tilde{V}_{h+1}^k )^2  \right)I(k,h+1)\nonumber
\\ &  =  \sum_{ k=1 }^{ K  } \sum_{h=1}^H ( P_{s_h^k,a_h^k} (\tilde{V}_{h+1}^k)^2  -(\tilde{V}_{h+1}^k(s_{h+1}^k))^2  )I(k,h+1) \nonumber
\\& \quad \quad +\sum_{ k=1}^{ K  } \sum_{h=1}^{H} \left(   (\tilde{V}_h^k(s_h^k) )^2   -  (P_{s_h^k,a_h^k}\tilde{V}_{h+1}^k )^2       \right)I(k,h+1) - \sum_{ k=1}^{ K  }(\tilde{V}^k_1(s_1^k) )^2\nonumber
\\ & \leq  \sum_{ k=1 }^{ K  } \sum_{h=1}^H ( P_{s_h^k,a_h^k} (\tilde{V}_{h+1}^k)^2  -(\tilde{V}_{h+1}^k(s_{h+1}^k))^2  )I(k,h+1) +  2\sum_{ k=1}^{ K  }\sum_{h=1}^H \max\{ \tilde{V}_h^k(s_h^k) -P_{s_h^k,a_h^k}\tilde{V}_{h+1}^k,0 \}I(k,h+1) \label{eq_secpfbdM5_1}
\\ & \leq  \sum_{ k=1 }^{ K  } \sum_{h=1}^H \left( P_{s_h^k,a_h^k} (\tilde{V}_{h+1}^k)^2  -(\tilde{V}_{h+1}^k(s_{h+1}^k))^2  \right)I(k,h+1) +  2\sum_{ k=1}^{ K  }\sum_{h=1}^H \beta_h^k(s_h^k,a_h^k)I(k,h+1) \nonumber
\\ & \leq  \sum_{ k=1 }^{ K  } \sum_{h=1}^H \left( P_{s_h^k,a_h^k} (\tilde{V}_{h+1}^k)^2  -(\tilde{V}_{h+1}^k(s_{h+1}^k))^2  \right)I(k,h+1) +  2\sum_{ k=1}^{ K  }\sum_{h=1}^H \beta_h^k(s_h^k,a_h^k)I(k,h) +2|\mathcal{K}^{C}| \nonumber
\\ & \leq  \sum_{ k=1 }^{ K  } \sum_{h=1}^H \left( P_{s_h^k,a_h^k} (\tilde{V}_{h+1}^k)^2  -(\tilde{V}_{h+1}^k(s_{h+1}^k))^2 \right)I(k,h+1) +  2\max\{M_2,1\} +2|\mathcal{K}^{C}|.\nonumber
\end{align}
Here \eqref{eq_secpfbdM5_1} is by \eqref{eq_sec3_1}
Define $\tilde{F}(m) = \sum_{k=1}^K \sum_{h=1}^H  (    P_{s_h^k,a_h^k} ( \tilde{V}_{h+1}^k)^{2^m} - (\tilde{V}_{h+1}^k(s_{h+1}^k))^{2^m}     ) I(k,h+1)  $
. Following the same arguments in \eqref{eq_sec3_6} and \eqref{eq_sec3_8}, we obtain 
that with probability $1-2( \log_2(KH)+1 )\log_2(KH)\delta$, 
\begin{align}
    \tilde{F}(m)\leq 2\sqrt{2(\tilde{F}(m+1) +2^{m+1}(\max\{M_2,1\} +|\mathcal{K}^{C}|) )\iota  } +6\iota
\end{align}
holds for any $1\leq m\leq \log_2(KH)$.
By applying Lemma \ref{lemma2} with $\lambda_{1}=KH$, $\lambda_2 = \sqrt{8\iota}$, $\lambda_3 = (\max\{M_2,1\}+ |\mathcal{K}^{C}|)$ and $\lambda_4 = 6\iota$, we have that with probability $1-2( \log_2(KH)+1 )\log_2(KH)\delta$,
\begin{align}
M_5\leq2\max\{M_2,1\}+2|\mathcal{K}^{C}| + \tilde{F}(1)\leq 2\max\{M_2,1\}+ \max\{  46\iota, 8\sqrt{( M_2+|\mathcal{K}^{C}|)\iota }+6\iota       \} .
\end{align}
\end{proof}

Combining \eqref{eq_sec_firef} , \eqref{eq_sec_firef2}, \eqref{eq_sec3_3}, \eqref{eq_sec3_9} and \eqref{eq_sec3_10},  we have that with probability $1-\big( 6S^2AH(\log_2(KH)+1)+ 6(\log_2(KH)+1)\log_2(H) \big) \delta$,
\begin{align}
& M_2\leq O\left( \sqrt{SAi_{\mathrm{max}}(M_4+ |\mathcal{K}^{C}|)\iota   } +\sqrt{S^2Ai_{\mathrm{max}}(M_5+ |\mathcal{K}^{C}|)\iota  } +\sqrt{SAi_{\mathrm{max}}K\iota } +S^2A\iota\log_{2}(KH)\right),\label{eq_3_11}
\\ & M_4\leq 2M_2+2|\mathcal{K}^{C}|+2K+ \max\{46\iota, 8\sqrt{(M_2+2K)\iota}+6\iota \},\label{eq_3_12}
    \\ & M_5 \leq 2\max\{M_2,1\}+2|\mathcal{K}^{C}|+  \max\{46\iota,  \sqrt{ M_2\iota  } +6\iota \} .\label{eq_3_13}
\end{align}
These imply that 
\begin{align}
M_2& \leq O\left(  \sqrt{SAKi_{\mathrm{max}}\iota} + \sqrt{S^2Ai_{\mathrm{max}}\sqrt{M_2}\iota^{3/2}  } +\sqrt{SAi_{\mathrm{max}}K\iota }+ S^2A\iota\log_2(KH)       \right)\nonumber
\\& \leq  O\left( \sqrt{SAKi_{\mathrm{max}}\iota}  +   S^2A\iota\log_2(KH)        \right).\label{eq_3_14}
\end{align}

\end{proof}

\subsection{Proof of Lemma~\ref{lemma:bound_M3}} 
\begin{proof}
For the term $M_3$, we have
\begin{align}
M_3&= \sum_{k=1}^K \left(\sum_{h=1}^H \check{r}_h^k- V^{\pi^k}_1(s_1^k) \right) \nonumber
\\& = \sum_{k=1}^K \sum_{h=1}^H (\check{r}_h^k -r_h^k ) +\sum_{k=1}^K \left( \sum_{h=1}^H r_h^k - V^{\pi^k}_1(s_1^k)  \right) \nonumber
\\& \leq  \sum_{k=1}^K \sum_{h=1}^H (r(s_h^k,a_h^k) -r_h^k ) +\sum_{k=1}^K \left( \sum_{h=1}^H r_h^k - V^{\pi^k}_1(s_1^k)  \right).\label{eq_sec3_a1}
\end{align}

 For the first term in \textbf{RHS} of \eqref{eq_sec3_a1}, by Lemma \ref{self-norm}, we have that
\begin{align}
\mathbb{P}\left[    |\sum_{k=1}^K \sum_{h=1}^H (r(s_h^k,a_h^k) -r_h^k ) |> 2\sqrt{2\sum_{k=1}^K \sum_{h=1}^H \mathrm{Var}(s,a)\iota }   +6\iota        \right]\leq 2(\log_2(KH)+1)\delta ,
\label{eq_sec3_a3}
\end{align}
where $\mathrm{Var}(s,a): =\mathbb{E}\left[ (R(s,a)- \mathbb{E}[R(s,a)])^2 \right] $.
Since for a random variable $Z \in [0,1]$, $\mathrm{Var}\left[Z\right] \le \expect[Z]$, we have
\begin{align}
\sum_{k=1}^{K}\sum_{h=1}^H \mathrm{Var}(s,a)\leq \sum_{k=1}^K \sum_{h=1}^H r(s,a)\leq   \sum_{k=1}^K\sum_{h=1}^H (r(s_h^k,a_h^k)-r_h^k) + K ,\nonumber
\end{align}
Define $\bar{M}_3: =\sum_{k=1}^K \sum_{h=1}^H (r(s_h^k,a_h^k) -r_h^k )$.
We then have
\begin{align}
\mathbb{P}\left[    |\bar{M}_3 |> 2\sqrt{2 (\bar{M}_3+K)\iota }   +6\iota        \right]\leq 2(\log_2(KH)+1)\delta , \label{eq_sec3_a15}
\end{align}
which implies that  $|\bar{M}_3|\leq 6\sqrt{K\iota}+21\iota$ with probability  at least $1-2(\log_2(KH)+1)\delta$.

As for the second term in \textbf{RHS} of \eqref{eq_sec3_a1}, we define $Y_{k} =  \sum_{h=1}^H r_h^k - V^{\pi^k}_1(s_1^k)  $. Because for each $k$, $|Y_k|\leq 1$ and $\mathbb{E}\left[Y_k| \mathcal{F}^{k-1} \right]=0$, by Azuma's inequality, we have 
\begin{align}
\mathbb{P}\left[   \abs{\sum_{k=1}^K  Y_k}>\sqrt{2K\iota}    \right]\leq \delta.\label{eq_sec3_a2}
\end{align}
Combining \eqref{eq_sec3_a15} with \eqref{eq_sec3_a2}, we have that
\begin{align}
   \mathbb{P}\left[ |M_3|> 8\sqrt{K\iota}+6\iota \right]\leq 2(\log_2(KH)+2)\delta. \label{eq_sec3_a3.5}
\end{align}

\end{proof}

\end{document}